%% file: main.tex
\documentclass{article}

\usepackage{microtype}
\usepackage{graphicx}
\usepackage{subcaption}
\usepackage{booktabs} %

\usepackage{hyperref}

\usepackage[preprint]{icml2026}

\usepackage{amsmath}
\usepackage{amssymb}
\usepackage{mathtools}
\usepackage{amsthm}
\usepackage{thmtools} %
\usepackage{nicefrac}

\usepackage[capitalize,noabbrev]{cleveref}

\input{preambule}

\theoremstyle{plain}
\newtheorem{theorem}{Theorem}[section]
\newtheorem{proposition}[theorem]{Proposition}
\newtheorem{lemma}[theorem]{Lemma}
\newtheorem{corollary}[theorem]{Corollary}
\theoremstyle{definition}
\newtheorem{definition}[theorem]{Definition}
\newtheorem{assumption}[theorem]{Assumption}
\theoremstyle{remark}
\newtheorem{remark}[theorem]{Remark}

\icmltitlerunning{Byzantine Distributed Learning under similarity}

\begin{document}

\twocolumn[
  \icmltitle{From Inexact Gradients to Byzantine Robustness: \\ Acceleration and Optimization under Similarity}

  \begin{icmlauthorlist}
    \icmlauthor{Renaud Gaucher}{xxx,inria}
    \icmlauthor{Aymeric Dieuleveut}{xxx}
    \icmlauthor{Hadrien Hendrikx}{inria}
  \end{icmlauthorlist}

  \icmlaffiliation{xxx}{Centre de Mathématiques Appliquées,
    École polytechnique, Institut Polytechnique de Paris, Palaiseau France}
  \icmlaffiliation{inria}{Centre Inria de l'Univ. Grenoble Alpes, CNRS, LJK, Grenoble, France}

  \icmlcorrespondingauthor{Renaud Gaucher}{renaud.gaucher@polytechnique.edu}

  \icmlkeywords{Machine Learning, ICML, Distributed Optimization, Byzantine Fault}

  \vskip 0.3in
]

\printAffiliationsAndNotice{}  %

\begin{abstract}
Standard federated learning algorithms are vulnerable to adversarial nodes, a.k.a. Byzantine failures. To solve this issue, robust distributed learning algorithms have been developed, which typically replace parameter averaging by robust aggregations. While generic conditions on these aggregations exist to guarantee the convergence of (Stochastic) Gradient Descent (SGD), the analyses remain rather ad-hoc. This hinders the development of more complex robust algorithms, such as accelerated ones.
In this work, we show that Byzantine-robust distributed optimization can, under standard generic assumptions, be cast as a general optimization with inexact gradient oracles (with both additive and multiplicative error terms), an active field of research. 

This allows for instance to directly show that GD on top of standard robust aggregation procedures obtains optimal asymptotic error in the Byzantine setting.
Going further, we propose two optimization schemes to speed up the convergence. The first one is a Nesterov-type accelerated scheme whose proof directly derives from accelerated inexact gradient results applied to our formulation. 
The second one hinges on \emph{Optimization under Similarity}, in which the server leverages an auxiliary loss function that  approximates the global loss.
Both approaches allow to drastically reduce the communication complexity compared to previous methods, as we show theoretically and empirically.
\end{abstract}

\section{Introduction}

Distributed Learning is a paradigm in which multiple computers collaborate to train a machine learning model. This allows to simultaneously leverage the computational power of multiple devices and to adapt to the distributed nature of data, which is often stored on smartphones, IOT devices, or by larger stakeholders such as companies or hospitals. However, involving multiple parties in an algorithmic process opens up to new threats: machines can crash, software can be buggy or even hacked, data can be corrupted or even adversarially crafted. Byzantine failure \citep{lamport1982byzantine} is a worst-case model of such behaviors, corresponding to omniscient adversarial parties. 

Distributed Gradient Descent (DGD), the workhorse of modern distributed learning, is vulnerable to such failures \citep{blanchard2017machine}: when the clients' gradients are averaged, one Byzantine node returning adversarially crafted gradients can corrupt the result arbitrarily, thus ruining the optimization. To address this critical issue, a large body of research has been devoted to developing and studying distributed learning methods that converge even with adversaries \citep{blanchard2017machine, yin2018byzantine, chen2017distributed}. Those methods crucially replace the gradients (or local parameters) averaging step by a robust aggregation scheme, such as the geometric median or coordinate-wise trimming. Over the years, analyses specific to each robust aggregation method have given way to generic characterizations and standard study assumptions \citep{allouah2023fixing, karimireddy2020byzantine}. Typically, one simply needs to guarantee that the error (\emph{i.e.}, distance from the robust aggregation result to the average of the honest nodes) remains bounded. \renaud{Je trouve ces deux phrases un peu obscures}This bound depends on both the heterogeneity between honest nodes, and the proportion of Byzantine nodes. Controlling these two sources of error, one can determine the optimal asymptotic error, and show that it can be achieved by several robust aggregation schemes \citep{farhadkhani2022byzantine, karimireddy2021learning}. %
Momentum-based methods were then proposed to alleviate the stochastic error in the Byzantine context \citep{karimireddy2021learning, farhadkhani2022byzantine}. Similarly, refined  results taking precisely the dimension into account have been reached by leveraging high-dimensional averaging tools \citep{diakonikolas2019robust, allouah2025towards}. Robust distributed learning algorithms were also coupled with gradient compression \citep{rammal2024communication, malinovsky2024byzantine, liu2024byzantine} to reduce their communication cost.

As mentioned above, Byzantine-resilient algorithms work for a wide variety of robust aggregation methods, as long as they satisfy relevant error conditions. Yet, analyses of robust algorithms are still tailored to the Byzantine setting: proofs closely follow existing schemes, but are adapted to match the specificity of the distributed adversarial setting. Although this adaptation is rather transparent in simple settings, it becomes much more cumbersome for more involved proofs, with less leeway to accumulate errors terms. This hinders the development of efficient algorithms, leaving critical questions open, and preventing the adaptation of efficient federated algorithms. For instance, \citet{ghosh2020distributed} proposed second order methods, but require a very low heterogeneity level to actually converge. Similarly, \citet{xu2025nesterov} studied Nesterov's acceleration in a relatively homogeneous setting, and do not actually obtain expected accelerated convergence rates. %

\subsection*{Contributions and outline}

Our work alleviates the aforementioned shortcomings of existing theory. We cast Byzantine-robust optimization as a special case of optimization under inexact (gradient) oracles, and build on this to derive efficient robust algorithms with provably low iteration (and so communication) complexity.  

\textbf{Generic Formalism.} After reviewing the setting in \Cref{subsec:setup},  we show in \Cref{subsec:reduction} that Byzantine-robust distributed optimization exactly fits into the formalism of first-order methods with inexact gradient oracles. To do so,
we show that combining standard generic assumptions on the heterogeneity of loss functions with a standard characterization of the employed robust aggregation method leads to simple additive and multiplicative error term on the mean of the honest gradient.
Optimization under the provided inexact gradient formulation is an active field of research, from which we can thus directly reuse most results. 

We then show in \Cref{subsec:tightness} that this abstraction is tight in the sense that using existing inexact GD results allows to match the Byzantine-robust optimization lower bound on the asymptotic error under heterogeneity. %
This allows for a more systematic search of efficient algorithms, abstracting away the complexity of the Byzantine setting, and 
thus building on the inexact gradient formulation to propose a robust version of two standard fast algorithms.

\textbf{Acceleration.}  In \Cref{sec:accelerated}, we exhibit a Nesterov-type accelerated robust distributed algorithm which, under medium heterogeneity assumptions, converges with an accelerated linear rate of convergence $\mathcal{O}(\sqrt{\nicefrac{\mu}{L}})$, where $\mu$ is the strong convexity of the considered loss function and $L$ the smoothness. This accelerated rate is obtained as a direct consequence of a result on acceleration under inexact gradients~\citep{devolder2014first}, which illustrates the effectiveness and modularity of our approach.

\textbf{Optimization under similarity.} Next,  if an approximation (in the second-order sense) of the global honest loss is available, we give in \Cref{sec:similarity} a Byzantine-robust algorithm (with computationally more expensive updates than GD) called Prox Inexact Gradient method under Similarity (PIGS). PIGS enjoys a linear convergence rate of $\mathcal{O}(\nicefrac{\Delta}{\mu}\log(1/\eps))$ to reach an $\eps$ neighborhood of the optimal error, where $\Delta$ is a measure of the distance between the Hessian of approximate loss and the true honest one. Since $\Delta$ can be drastically smaller than the smoothness of the loss function, this results in an important gain in iteration complexity, which consequently reduces the communication time. The convergence theory of PIGS requires an extension of~\citet{woodworth2023two} to handle the inexactness of the error term.

\textbf{Experiments.} Last, we empirically show the robustness and performance of the proposed methods by comparing them to existing methods in \Cref{sec:experiments}.

\paragraph{Notations.} We denote $\langle \cdot, \cdot \rangle$ the standard dot product in $\R^d$,  $\|\cdot\|$ the euclidean norm,  and $\|\cdot\|_{op}$ the associated operator norm in $\R^{d\times d}$. For a real valued function $f:\R^d \rightarrow \R$, $\nabla f$ denotes its gradient, and $\nabla^2 f$ its hessian. Let $f(x) \in \Omega(g(x))$ and $g(x) \in \cO(f(x))$ denote that $\exists M>0$ such that $f(x) \ge M g(x)$. We denote $f(x) \in \Theta(g(x))$ when both $f(x)\in \cO(g(x))$ and $f(x) \in \Theta(g(x))$ hold.

\section{Setting and Reduction}

\subsection{Byzantine Robust Distributed Optimization}
\label{subsec:setup}
We consider a server-client distributed system, with $n$ clients. Among them, an unknown subset $\cB \subset [n]$ is composed of $|\cB| = f$ Byzantine clients. We denote $\cH = [n]\backslash \cB$ the $n-f$ honest clients. We aim at finding 
\begin{equation}
\label{eq:byzantine_loss}
    \vx^* =\argmin_{\vx\in\R^d}\left \{ \cL_{\cH}(\vx) := \frac{1}{n-f}\sum_{i \in \cH}\cL_i(\vx)\right\},
\end{equation}
where $\cL_i : \R^d \rightarrow \R$ denotes the loss function of the client~$i$. This can for instance be an empirical risk minimization problem, where $\cL_i$ denotes the empirical loss of the machine learning model on the dataset stored by client $i$. We further assume that all local losses $\cL_i$ are differentiable, convex, and $L$-smooth, and that $\cL_{\cH}$ is $\mu$-strongly convex.
\begin{definition}
\label{def:strong_convexity_and_smoothness}
A function $\cL$ is $\mu$\textit{-strongly convex}, if
    \[
    \forall \vx,\vy \in \R^d, \cL(\vx) \ge \cL(\vy) + \langle  \nabla \cL(\vy), \vx-\vy\rangle + \frac{\mu}{2}\|\vx - \vy\|^2,
    \]
and only \textit{convex} if the above inequality holds with $\mu=0$.

A function $\cL$ is $L$\textit{-smooth} if the gradient $L$-Lipschitz, i.e.
$$\forall \vx,\vy\in \R^d, \|\nabla \cL(\vx) - \nabla \cL(\vy)\|^2 \le L\|\vx-\vy\|^2.$$
The ratio $\kappa = L/\mu$ is called the \emph{condition number} of $\cL$.
\end{definition}

The goal of (distributed) optimization is to output a model $\hat{\vx}$ with an error that vanishes with the computing budget. We say that an algorithm is resilient if it is able to reach a given error level $\eps$ despite the presence of adversaries, namely that the output $\hat{\vx}$ satisfies
\[
\cL_{\cH}(\hat{\vx}) - \cL_{\cH}(\vx^*) \le \eps.
\]

\textbf{Hardness of Byzantine Optimization.}
When honest clients have heterogeneous loss functions, it is generally impossible to achieve arbitrarily precise solutions of \cref{eq:byzantine_loss} in the presence of adversaries, as Byzantine clients cannot be distinguished from honest clients with outlying losses. Consequently, the error increases with both the heterogeneity among honest clients and the proportion of Byzantine clients, up to a critical fraction known as the\textit{ breakdown point}. The following assumption captures these parameters.

\begin{assumption}[(G,B)-heterogeneity]
\label{asmpt:GB_heterogeneity}
    The local loss functions of honest clients are said to satisfy {$(G,B)$-heterogeneity} when, for any $\vx \in \R^d$, 
    \[
    \frac{1}{|\cH|}\sum_{i \in \cH} \|\nabla \cL_i(\vx) - \nabla \cL_{\cH}(\vx)\|^2 \le G^2 + B^2\|\nabla \cL_{\cH}(\vx)\|^2.
    \]
\end{assumption}

\begin{theorem}[\citet{allouah2024robust}]
\label{thm:allouha_lower_bound}
  For any distributed algorithm, there exist quadratic local loss functions satisfying \cref{asmpt:GB_heterogeneity}, with an $L$-smooth and $\mu$-strongly convex global loss $\cL_{\cH}$, for which  algorithm's  output $\hat{\vx}$ can have a non-vacuous guarantee only if $ \frac{f}{n}\le \frac{1}{B^2 + 2},$ and  such that
    \begin{align}
     \cL_{\cH}(\hat{\vx}) - \cL_{\cH}(\vx^*) &\ge \frac{G^2}{8\mu} \, \frac{f}{n - (2+B^2)f}. \label{eq:lb_functionvalue}\\
 \|\nabla \cL_{\cH}(\hat{\vx})\|^2 &\ge \frac{\:G^2}{4}\,\frac{f}{n - (2+B^2)f}. \label{eq:lb_gradnorm}
\end{align}
\end{theorem}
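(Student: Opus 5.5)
The plan is an indistinguishability argument in one dimension. I will build two honest-client configurations, each of size $n-f$ and made of quadratics, that the Byzantine clients can make look identical to the server while their global minimizers are far apart. Because no algorithm can separate the two, its output $\hat{\vx}$ must be poor for at least one of them. The error is then at least half the squared distance between the minimizers, scaled by $\mu$.

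\textbf{Construction.} Take $d=1$ and local losses $\cL_i(x)=\frac{\mu}{2}(x-a_i)^2$ with a suitable choice of $a_i$. In the first instance:
\begin{itemize}
\item $n-2f$ "common" clients have $a_i=0$;
\item $f$ "outlying" honest clients have $a_i=\delta$;
\item the $f$ Byzantine clients imitate $f$ further outlying clients with $a_i=-\delta$.
\end{itemize}
The second instance is the mirror image: the honest outliers sit at $-\delta$ and the Byzantines imitate $+\delta$. In both cases the server sees $n-2f$ clients at $0$, $f$ at $\delta$ and $f$ at $-\delta$, so the two views are identical. The honest minimizers are $x^*_\pm=\pm f\delta/(n-f)$. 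For any output, $\max_\pm \frac{\mu}{2}(\hat x-x^*_\pm)^2\ge \frac{\mu}{2}(f\delta/(n-f))^2$.

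\textbf{Calibrating $\delta$.} Next I will choose $\delta$ as large as \cref{asmpt:GB_heterogeneity} permits. The honest gradients are $\mu(x-a_i)$, so the variance term equals $\mu^2\mathrm{Var}_{\cH}(a)$, which does not depend on $x$. With equal curvatures the $B$ term therefore gives no help. To exploit $B$, I will make the curvatures heterogeneous, $\cL_i(x)=\frac{\lambda_i}{2}(x-a_i)^2$, so that the spread of the gradients grows with $\|\nabla\cL_{\cH}(x)\|$ and the $B^2\|\nabla\cL_{\cH}\|^2$ budget is used. For example, give the outlying clients curvature $\lambda$ and the common clients curvature chosen so that the total is $\mu$. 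The heterogeneity inequality at every $x$ becomes a quadratic-in-$x$ inequality. Requiring its leading coefficient to be nonnegative uses up the $B^2$ budget, and the constant term gives $\delta^2\lesssim G^2(n-f)/(\cdot)$. Substituting into the gap should give
\[
\frac{G^2}{8\mu}\cdot\frac{f}{n-(2+B^2)f}.
\]
In this picture the denominator $n-(2+B^2)f$ comes from $n-2f$ common clients minus a $B^2 f$-sized amount needed to absorb the multiplicative term. The breakdown condition $f/n\le 1/(B^2+2)$ is exactly the point where the admissible $\delta$ (equivalently, the outlier curvature) blows up. When it is violated, $\delta$ can be taken arbitrarily large and the guarantee becomes vacuous.

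\textbf{Gradient bound and main obstacle.} The gradient-norm bound \eqref{eq:lb_gradnorm} follows from the same pair of instances. Since $\cL_{\cH}$ is a $\mu$-strongly convex quadratic, $|\nabla\cL_{\cH}(\hat x)|=\mu|\hat x-x^*|$, so the squared gradient is $2\mu$ times the function gap. This gives constant $G^2/4$ from $G^2/(8\mu)$. The main obstacle is the calibration step: choosing curvatures and positions so that the $(G,B)$ inequality holds uniformly in $x$, and so that the resulting constants produce exactly $n-(2+B^2)f$ together with $L$-smoothness, i.e.\ all $\lambda_i\le L$. My first attempt will place all the curvature asymmetry on the outliers and optimize a single scalar parameter. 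If the constants do not match, I will accept the bound up to absolute factors and then tune the parameter.
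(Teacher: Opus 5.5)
Your architecture is the right one: two mirror-image instances on $\R$ sharing $n-2f$ common quadratic clients, Byzantines imitating the other instance's outliers, and curvature heterogeneity to exploit $B$. This is the standard indistinguishability argument behind the bound of \citet{allouah2024robust}, to which the paper defers without a proof of its own. As written, however, your proposal does not prove the statement. The calibration step carries the whole $B$-dependence, the denominator $n-(2+B^2)f$ and the breakdown point, yet it is only asserted (``should give''), and the mechanism you sketch for it does not work.

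Write $p=\frac{f}{n-f}$, common losses $\frac{\lambda_c}{2}x^2$, and honest outliers $\frac{\lambda}{2}(x-\delta)^2$, with $(1-p)\lambda_c+p\lambda=\mu$. The $(G,B)$ slack is
\[ Q(x)=G^2+B^2(\mu x-p\lambda\delta)^2-p(1-p)\big((\lambda_c-\lambda)x+\lambda\delta\big)^2 . \]
Its linear coefficient is $2p\lambda\delta\,[(1-p)(\lambda-\lambda_c)-B^2\mu]$, which is nonzero for generic curvatures. So nonnegativity of the leading and constant coefficients does not give $Q\ge 0$. Driving the leading coefficient to zero (``using up'' the $B^2$ budget) leaves, below the breakdown point, an affine function with nonzero slope. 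The missing idea is to choose the curvatures so that the linear term vanishes, i.e.\ $(1-p)(\lambda-\lambda_c)=B^2\mu$. This gives $\lambda=(1+B^2)\mu$ and $\lambda_c=\big(1-\frac{f}{n-2f}B^2\big)\mu$, which is also the optimum of your one-parameter family.

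With this choice the computation closes.
\begin{itemize}
\item Constant term. $Q(0)\ge 0$ reads $p\lambda^2\delta^2(1-p-B^2p)\le G^2$. For the minimizer $x^*=p\lambda\delta/\mu$ of the first instance, this is $\mu^2(x^*)^2=p^2\lambda^2\delta^2\le G^2\frac{f}{n-(2+B^2)f}$. The $B^2f$ in the denominator thus enters through the constant term, because $\nabla\cL_\cH(0)=-p\lambda\delta\neq 0$ at the common clients' minimizer, not through the leading coefficient.
\item Leading coefficient. It equals $B^2\mu^2\big(1-\frac{f}{n-2f}B^2\big)$, which is nonnegative exactly when $\frac{f}{n}\le\frac{1}{2+B^2}$.
\item Conclusion. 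Your mirror construction gives $\max_\pm\frac{\mu}{2}(\hat x\mp x^*)^2\ge\frac{\mu}{2}(x^*)^2=\frac{G^2}{2\mu}\frac{f}{n-(2+B^2)f}$, and $\mu^2(x^*)^2$ for the squared gradient. Both are four times the stated constants, so no ``up to absolute factors'' fallback is needed; such a fallback would not establish the stated constants anyway.
\end{itemize}

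Your breakdown remark is also inaccurate: the outlier curvature cannot blow up, since $p\lambda\le\mu$. Beyond the threshold, take $\lambda_c=0$ and $\lambda=\mu\frac{n-f}{f}$. The honest variance becomes $\frac{n-2f}{f}\mu^2(x-\delta)^2$, against $\|\nabla\cL_\cH(x)\|^2=\mu^2(x-\delta)^2$. So the $(G,B)$ condition holds for every $\delta$ as soon as $B^2\ge\frac{n-2f}{f}$, and what is unbounded is $\delta$, hence the distance between the two minimizers.

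Finally, on smoothness: the global curvature is exactly $\mu$, so $L$-smoothness of $\cL_\cH$ is automatic. Local curvatures never exceed $(1+B^2)\mu$ (respectively $\mu\frac{n-f}{f}$ beyond the threshold).
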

The  inequality $ \frac{f}{n}\le \frac{1}{B^2 + 2},$ gives an upper bound on the breakdown point. The two subsequent inequalities lower bounds the reachable target precision (provided that the first one holds), respectively in terms of function values and squared gradient norm.

\subsection{Reduction to an Inexact Gradient Problem}
\label{subsec:reduction}
To alleviate the influence of adversaries, distributed optimization methods rely on estimating robustly the average of the honest gradients, by using for instance a geometric median \citep{small1990survey, pillutla2022robust}, a coordinate-wise trimmed mean \citep{yin2018byzantine}, or Krum \citep{blanchard2017machine}. \citet{allouah2023fixing} proposed a criterion to simultaneously analyze those robust averaging tools, namely by showing that they all are $(f,\nu)$-robust aggregation rules.

\begin{definition}[$(f,\nu)$-robustness \citep{allouah2023fixing}]
\label{def:f_kappa_robustness}
    An aggregation rule $F:\R^{d\times n}\rightarrow \R^d$ is said to be $(f,\nu)$-robust, if $\,\forall \vx_1,\ldots,\vx_n\in \R^d$, and any set $S\subset [n]$ of size $n-f$, 
    \[
    \|F(\vx_1,\ldots,\vx_n) - \overline{\vx}_S\|^2 \le \nu \cdot \frac{1}{|S|}\sum_{i\in S}\|\vx_i - \overline{\vx}_S\|^2,
    \]
    where $\overline{\vx}_S = |S|^{-1}\sum_{i\in S}\vx_i$. We call $\nu$\footnote{We denote it $\nu$ instead of $\kappa$ as in \citet{allouah2023fixing} due to notation incompatibility with the standard notation of the condition number.} the \textit{robustness coefficient} of $F$.
\end{definition}

Closed-form robustness coefficients can be found in \citet{allouah2023fixing}. 
We recall them in \cref{app:sec:robust_aggregation} and complete by extending their mixing strategies.

The strength of \cref{def:f_kappa_robustness} is  its compatibility with \cref{asmpt:GB_heterogeneity}. 
\begin{lemma}
\label{lemma:Byz_to_Inexact_Gradients}
    Assume that all clients declare a gradient $\nabla \cL_i(\vx)$ to the server, where honest nodes declare their true gradient, while Byzantine nodes declare some arbitrary value. Then, computing $\tilde{\nabla}\cL_{\cH}(\vx) := F(\nabla\cL_1(\vx),\ldots, \nabla \cL_n(\vx))$ yields, if $F$ is $(f,\nu)$-robust and local loss are $(G,B)$-heterogeneous,
\begin{equation}
    \|\tilde{\nabla}\cL_{\cH}(\vx) - \nabla \cL_{\cH}(\vx)\|^2 \le \nu G^2 + \nu B^2 \|\nabla \cL_{\cH}(\vx)\|^2.
\end{equation}
\end{lemma}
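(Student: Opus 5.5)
The plan is to apply \cref{def:f_kappa_robustness} directly with the honest set and then bound the resulting variance term with \cref{asmpt:GB_heterogeneity}. Set $\vx_i := \nabla \cL_i(\vx)$ for $i\in\cH$, and let $\vx_i$ be the arbitrary vector declared by client $i$ for $i\in\cB$. The definition of $(f,\nu)$-robustness must hold for every input, including whatever the Byzantine clients send. It must also hold for every set $S$ of size $n-f$, so we may choose $S=\cH$, which has exactly $n-f$ elements since $|\cB|=f$.

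With $S=\cH$, the honest average is
\[
\overline{\vx}_{\cH}=\frac{1}{n-f}\sum_{i\in\cH}\nabla\cL_i(\vx)=\nabla\cL_{\cH}(\vx),
\]
where the last equality uses that the gradient is linear and $\cL_{\cH}$ is the average of the honest losses in \cref{eq:byzantine_loss}. By definition, $F(\vx_1,\ldots,\vx_n)=\tilde\nabla\cL_{\cH}(\vx)$. Robustness therefore gives
\[
\|\tilde\nabla\cL_{\cH}(\vx)-\nabla\cL_{\cH}(\vx)\|^2\le \nu\,\frac{1}{|\cH|}\sum_{i\in\cH}\|\nabla\cL_i(\vx)-\nabla\cL_{\cH}(\vx)\|^2 .
\]

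The right-hand side is $\nu$ times exactly the quantity that $(G,B)$-heterogeneity controls. \cref{asmpt:GB_heterogeneity} bounds it by $G^2+B^2\|\nabla\cL_{\cH}(\vx)\|^2$, and multiplying by $\nu\ge 0$ yields the claimed bound.

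There is no real obstacle. The only points to check are that the robustness guarantee holds uniformly over all inputs, so that it covers adversarial ones, and that the honest set is an admissible choice of $S$, which is why its size must be exactly $n-f$.
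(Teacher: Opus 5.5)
Your proof is correct and matches the argument the paper relies on (the paper states this lemma without a written proof). You instantiate $(f,\nu)$-robustness with $S=\cH$, identify $\overline{\vx}_{\cH}$ with $\nabla\cL_{\cH}(\vx)$, and bound the honest variance using $(G,B)$-heterogeneity.
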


\begin{algorithm}[t]
  \caption{Inexact Oracles Sampler}
  \label{alg:reduction}
  \begin{algorithmic}
    \STATE {\bfseries Input:} $F$, $\vx$
    \STATE Server sends $\vx$ to clients
    \FOR{client $i\in[n]$ in parallel} 
    \STATE Computes $
        \vg_i = \begin{cases}
            \nabla \cL_i(\vx) \text{ if } i\in \cH\\
            * \text{ if } i \in \cB
        \end{cases}
    $
    \STATE Send $\vg_i$ to Server
    \ENDFOR
    \STATE Server computes $\tilde \nabla \cL_{\cH}(\vx) := F(\vg_1,\ldots, \vg_n)$
  \end{algorithmic}
\end{algorithm}

In other words, robustly averaging the gradients sent by the honest clients produces a $(\zeta^2,\alpha)$-inexact gradient oracle, as defined hereafter, with $\zeta^2 = \nu G^2$ and $\alpha = \nu B^2$. 
\begin{definition}[$(\zeta^2,\alpha)$-inexact oracle]
\label{def:inexact_oracle}
    A $(\zeta^2,\alpha)$-inexact gradient oracle of function $h$, is an oracle $\tilde \nabla h$ satisfying, for any $\vx \in \R^d$,
    \[
    \|\tilde \nabla h(\vx) - \nabla h(\vx)\|^2 \le \zeta^2 + \alpha \|\nabla h(\vx)\|^2.
    \]
\end{definition}

This reduction is of interest since instances of $(\zeta^2,\alpha)$-inexact oracle have been widely investigated recently \citep{ajalloeian2020convergence}, while mostly with either $\alpha=0$ or $\zeta^2 =0$ \citep{de2020worst, cyrus2018robust, vasin2023accelerated}. We can then reuse these algorithms to obtain Byzantine-robust methods. Moreover, developing algorithms based on the $(\zeta^2,\alpha)$ inexact gradient assumption opens up to generic contributions, of interest beyond the Byzantine setting. 

\subsection{Tightness of the reduction}
\label{subsec:tightness}
A natural question is that of the tightness of the reduction. We hereafter establish that, in terms of the achievable error,  we do not  lose anything by analyzing the Byzantine setup under \Cref{asmpt:GB_heterogeneity} as an instance of a black-box model with a $(\zeta^2,\alpha)$-inexact gradient oracle.

\begin{proposition}
Reducing \cref{asmpt:GB_heterogeneity} and \cref{def:f_kappa_robustness} to \cref{def:inexact_oracle}, with  $\zeta^2 = \nu G^2$ and $\alpha = \nu B^2$ is tight, in the sense that,  for an optimal $\nu$ in \cref{def:f_kappa_robustness},  the upper bound achieved by GD under the reduced assumption matches the lower bound under \cref{asmpt:GB_heterogeneity} given by \cref{thm:allouha_lower_bound}.
\end{proposition}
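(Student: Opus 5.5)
We would prove this in two halves: an upper bound for gradient descent run on a $(\zeta^2,\alpha)$-inexact oracle, then an instantiation with the optimal robustness coefficient, compared to \cref{thm:allouha_lower_bound}.

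\textbf{Step 1: inexact GD analysis.} Consider $\vx_{t+1} = \vx_t - \gamma \tilde\nabla h(\vx_t)$ with $h=\cL_{\cH}$. Write $\tilde\nabla h = \nabla h + \ve$, with $\|\ve\|^2 \le \zeta^2 + \alpha\|\nabla h\|^2$. By $L$-smoothness,
\[
h(\vx_{t+1}) \le h(\vx_t) - \gamma\langle \nabla h, \nabla h + \ve\rangle + \tfrac{L\gamma^2}{2}\|\nabla h+\ve\|^2 .
\]
With $\gamma = 1/L$ this simplifies to $h(\vx_{t+1}) \le h(\vx_t) - \frac{1}{2L}\|\nabla h(\vx_t)\|^2 + \frac{1}{2L}\|\ve\|^2$. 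Plugging in the oracle bound gives
\[
h(\vx_{t+1}) - h^* \le h(\vx_t) - h^* - \tfrac{1-\alpha}{2L}\|\nabla h(\vx_t)\|^2 + \tfrac{\zeta^2}{2L},
\]
valid whenever $\alpha<1$.

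\textbf{Step 2: rates.} Strong convexity gives the PL inequality $\|\nabla h\|^2 \ge 2\mu(h-h^*)$. The recursion then contracts at rate $1-(1-\alpha)\mu/L$ toward the asymptotic error
\[
h(\vx_t)-h^* \to \frac{\zeta^2}{2\mu(1-\alpha)} .
\]
Summing the descent inequality instead yields $\min_t\|\nabla h(\vx_t)\|^2 \lesssim \frac{2L(h(\vx_0)-h^*)}{(1-\alpha)T} + \frac{\zeta^2}{1-\alpha}$. So the asymptotic gradient-norm error is $\zeta^2/(1-\alpha)$.

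\textbf{Step 3: instantiation.} Set $\zeta^2=\nu G^2$ and $\alpha = \nu B^2$. The errors become $\frac{\nu G^2}{2\mu(1-\nu B^2)}$ in function value and $\frac{\nu G^2}{1-\nu B^2}$ in squared gradient norm. The condition $\alpha<1$ becomes $\nu B^2<1$.

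\textbf{Step 4: optimal $\nu$.} We invoke the lower bound on robustness coefficients from \citet{allouah2023fixing}: any $(f,\nu)$-robust rule has $\nu \ge \frac{f}{n-2f}$, and this is attained up to constants by standard rules (e.g., nearest-neighbour mixing composed with trimmed mean or geometric median; see \cref{app:sec:robust_aggregation}). With $\nu = \Theta(\frac{f}{n-2f})$ we get
\[
\frac{\nu}{1-\nu B^2} = \frac{f}{n-2f - B^2 f} = \frac{f}{n-(2+B^2)f}.
\]
Consequences:
\begin{itemize}
\item The breakdown condition $\nu B^2<1$ becomes $f/n < 1/(2+B^2)$, matching the lower bound's breakdown point.
\item The function-value error is $\frac{G^2}{2\mu}\frac{f}{n-(2+B^2)f}$, matching \cref{eq:lb_functionvalue} up to the constant $4$.
\item The gradient error matches \cref{eq:lb_gradnorm} up to constants.
\end{itemize}

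\textbf{Main obstacle.} The delicate point is Step 4: the exact cancellation requires $\nu$ to be optimal with the precise constant $\nu = \frac{f}{n-2f}$. Otherwise the denominator becomes $n-(2+cB^2)f$ and the breakdown point shifts. We would therefore state tightness for an aggregation achieving $\nu\le \frac{f}{n-2f}(1+o(1))$, or interpret matching as "up to absolute constants in both the error and breakdown point." We would check the appendix's extended mixing strategies for the constant they actually achieve.
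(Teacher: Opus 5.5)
Your proposal is correct and follows the same route as the paper. The paper cites the inexact-GD guarantee $\zeta^2/(2\mu(1-\alpha))$ of Ajalloeian and Stich, which you rederive via the descent lemma and the PL inequality, then plugs in $\zeta^2=\nu G^2$, $\alpha=\nu B^2$ with the optimal $\nu$ and compares with \cref{eq:lb_functionvalue}; your caveat that the denominator $n-(2+B^2)f$, and hence the breakdown point, is recovered only when $\nu=f/(n-2f)$ with constant exactly one (not merely $\nu=\cO(f/(n-2f))$) is a fair sharpening of the paper's looser big-$\cO$ argument, and your gradient-norm and breakdown-point comparisons are extras the paper does not spell out.
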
 

Indeed, we first leverage the guarantee by \citet{ajalloeian2020convergence}: for an $L$-smooth and $\mu$-strongly convex function, gradient descent based on $(\zeta^2,\alpha)$-inexact oracles,  with fixed step size $\eta\le L^{-1}$, is guaranteed to converge to a $\nicefrac{\zeta^2}{2\mu(1-\alpha)}$-neighborhood of the minimal function value. Second, as shown in \citet{allouah2023fixing}, a $(f,\nu)$-robust aggregation rules must satisfy $\nu \ge \nicefrac{f}{n-2f}$ and there exists aggregation rules with $\nu = \cO(\nicefrac{f}{n-f})$ (see \cref{app:sec:robust_aggregation}). Putting things together, with our reduction, $\alpha = \nu B^2= O(\frac{f}{n-2f}B^2)$ and $\zeta^2 = \nu G^2= O(\frac{f}{n-2f}G^2)$, then the asymptotic error is $\cO(\frac{G^2}{\mu}\frac{f }{ n-(2+B^2) f})$, which matches (up to constant) the lower bound (\ref{eq:lb_functionvalue}) of \Cref{thm:allouha_lower_bound}.

This result is somehow surprising, as \cref{alg:reduction} never takes into account that the gradients $(g_i)_{i\in[n]}$ are associated with specific losses, as $(f,\nu)$-robust aggregation rules are usually permutation invariant. This shows that we can attain the optimization lower bound  without observing the identities of the $g_i$.

\begin{remark}
    We can reinterpret the lower bound (\ref{eq:lb_gradnorm}) in \cref{thm:allouha_lower_bound} as describing the region of $\vx \in \R^d$ on which no information can be leveraged by the server. Indeed, 
\cref{eq:lb_gradnorm} gives the gradient  norm value below which 
a $(\zeta^2=\nu G^2, \alpha = \nu B^2)$-inexact oracle resulting from \cref{alg:reduction} may not be informative,
as if $\|\nabla\cL_{\cH}(\vx)\|^2 \le \frac{G^2 \nu}{1 - B^2 \nu}$, then the value $\tilde{\nabla}\cL_{\cH}(\vx) =0$  is a valid $(\zeta^2=\nu G^2, \alpha = \nu B^2)$-inexact oracle, while being obviously un-informative.
\end{remark}

\section{Accelerated Inexact Gradient Descent}
\label{sec:accelerated}

In the following section, we build on an existing notion of inexact gradient oracles~\citep{devolder2013first, devolder2014first} to provide the first accelerated convergence rate for a Byzantine-resilient first-order method.

\subsection{Link to First Order $(\delta,L,\mu)$ Oracles}

The study of inexact gradient oracles dates back at least to \citet{d2008smooth}. \citet{devolder2013first}  introduce the notion of $(\delta, L,\mu)$-oracle of a function $f$. 
\begin{definition}
    $(f_{\delta,L,\mu}, g_{\delta,L,\mu})$ are $(\delta, L,\mu)$-oracle of a convex function $f$ if, for any $\vx,\vy \in \R^d$, it satisfies 
    \begin{align*}
        \frac{\mu}{2}\|\vx-\vy\|^2 \le f(\vx) - &f_{\delta, L,\mu}(\vy) - \langle g_{\delta, L,\mu}(\vy), \vx-\vy \rangle \\ &\le \frac{L}{2}\|\vx-\vy\|^2 + \delta.
    \end{align*}
\end{definition}

The class of $(\delta, L,\mu)$ oracles covers a wide range of scenarios: they appear when the gradients are evaluated at incorrect parameters, when gradients are only Hölder-continuous rather than $L$-Lipschitz, when the oracle relies on an optimization sub-routine, or when gradients come from of a smoothed version of the original loss \citep{devolder2013first, devolder2014first}. Closer to our interest, $(\zeta^2,\alpha)$-inexact gradient oracles are also $(\delta,L,\mu)$ oracles when~${\alpha = 0}$.

\begin{proposition}[\citet{devolder2014first} Section 2.3]
\label{prop:link_beween_inexact_oracles}
 \textbf{1.} Assume that $(f_{\delta, L, \mu},g_{\delta,L,\mu} )$ is a $(\mu, L, \delta)$-oracle of $f$, and that $f$ is continuous and $\mu_f$ strongly convex. Then for $\vx\in\R^d$,
\begin{align*}
        & 0 \le f(\vx) - f_{\delta, L, \mu}(\vx) \le \delta,\\
        & \|\nabla f(\vx) - g_{\delta,L,\mu}(\vx)\|^2 \le   2(L-\mu_f)\delta.
\end{align*}
\textbf{2.} Conversely, if $\|\nabla f(\vx) - g(\vx)\|^2 \le \zeta^2$ and $f$ is $L_f$-smooth and $\mu_f$ strongly convex, then $(f - \nicefrac{\zeta^2}{\mu_f}, g)$ is an $\big((\nicefrac{1}{2L_f}+\nicefrac{1}{\mu_f})\zeta^2, 2L_f,\nicefrac{\mu_f}{2}\big)$-oracle of $(f,\nabla f)$.
\end{proposition}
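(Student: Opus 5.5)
The plan is to derive both parts directly from the two-sided inequality defining a $(\delta,L,\mu)$-oracle, combined with the strong convexity and smoothness of $f$. Part~1 will come from specializing $\vy$ and optimizing over the displacement $\vx-\vy$. Part~2 will come from splitting $g=\nabla f + (g-\nabla f)$ and absorbing the cross term with Young's inequality. No earlier result of the paper is needed beyond the definitions.

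\textbf{Part 1.} Setting $\vy=\vx$ in the definition of the oracle, both quadratic terms vanish. This immediately gives $0\le f(\vx)-f_{\delta,L,\mu}(\vx)\le\delta$.

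For the gradient bound, fix $\vy$ and write $\vd=\vx-\vy$ and $\mathbf{e}=\nabla f(\vy)-g_{\delta,L,\mu}(\vy)$. We combine three inequalities:
\begin{itemize}
\item the upper oracle inequality $f(\vx)\le f_{\delta,L,\mu}(\vy)+\langle g_{\delta,L,\mu}(\vy),\vd\rangle+\frac L2\|\vd\|^2+\delta$;
\item $f_{\delta,L,\mu}(\vy)\le f(\vy)$, which is the first bound just established;
\item the $\mu_f$-strong convexity bound $f(\vx)\ge f(\vy)+\langle\nabla f(\vy),\vd\rangle+\frac{\mu_f}{2}\|\vd\|^2$.
\end{itemize}
Together they give, for every $\vd\in\R^d$,
\[
\langle \mathbf{e},\vd\rangle\le \tfrac{L-\mu_f}{2}\|\vd\|^2+\delta .
\]
Take $\vd=s\,\mathbf{e}/\|\mathbf{e}\|$ with $s>0$. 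This yields $s\|\mathbf{e}\|\le\frac{L-\mu_f}{2}s^2+\delta$. If $L>\mu_f$, choose $s=\sqrt{2\delta/(L-\mu_f)}$, which gives $\|\mathbf{e}\|\le\sqrt{2(L-\mu_f)\delta}$, i.e.\ the claim. If $L=\mu_f$, let $s\to\infty$ to get $\mathbf{e}=0$. Note that the inequality $L\ge\mu_f$ itself follows from the same display by taking $\vd$ large.

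\textbf{Part 2.} Set $\vd=\vx-\vy$ and $\mathbf{e}=g(\vy)-\nabla f(\vy)$, so that $\|\mathbf{e}\|\le\zeta$. We must show
\[
\tfrac{\mu_f}{4}\|\vd\|^2\le f(\vx)-f(\vy)+\tfrac{\zeta^2}{\mu_f}-\langle \nabla f(\vy),\vd\rangle-\langle\mathbf{e},\vd\rangle\le L_f\|\vd\|^2+\big(\tfrac1{2L_f}+\tfrac1{\mu_f}\big)\zeta^2 .
\]

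For the lower bound, strong convexity gives $f(\vx)-f(\vy)-\langle\nabla f(\vy),\vd\rangle\ge\frac{\mu_f}{2}\|\vd\|^2$. Young's inequality gives $-\langle\mathbf{e},\vd\rangle\ge-\frac{\mu_f}{4}\|\vd\|^2-\frac{\zeta^2}{\mu_f}$. Summing with the added constant $\zeta^2/\mu_f$ leaves exactly $\frac{\mu_f}{4}\|\vd\|^2$.

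For the upper bound, $L_f$-smoothness gives $f(\vx)-f(\vy)-\langle\nabla f(\vy),\vd\rangle\le\frac{L_f}{2}\|\vd\|^2$. Young's inequality now in the form $\langle\mathbf{e},\vd\rangle\le\frac{L_f}{2}\|\vd\|^2+\frac{\zeta^2}{2L_f}$ bounds the cross term. Adding $\zeta^2/\mu_f$ gives $L_f\|\vd\|^2+(\frac1{2L_f}+\frac1{\mu_f})\zeta^2$. This is precisely the $\big((\nicefrac{1}{2L_f}+\nicefrac{1}{\mu_f})\zeta^2,\,2L_f,\,\nicefrac{\mu_f}{2}\big)$-oracle condition.

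\textbf{Main obstacle.} There is no real obstacle. The only delicate point is choosing the Young's inequality weights differently on the two sides: weight $\mu_f/4$ for the lower bound, which is what forces the shift $\zeta^2/\mu_f$ in the function value, and weight $L_f/2$ for the upper bound. These choices make the constants come out exactly as stated.
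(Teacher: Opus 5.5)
Your proof is correct: the paper gives no proof of its own for this proposition and only cites Section~2.3 of \citet{devolder2014first}, and your derivation (setting $\vy=\vx$, sandwiching the upper oracle inequality against $\mu_f$-strong convexity and optimizing along $\mathbf{e}$, then Young's inequality with weights $\mu_f/4$ and $L_f/2$ for the converse) is essentially the standard argument behind that cited result. One small fix: when $\delta=0<L-\mu_f$ your choice $s=\sqrt{2\delta/(L-\mu_f)}$ equals $0$, which is not an admissible $s>0$, so you should instead let $s\to0^+$ in $\|\mathbf{e}\|\le\frac{L-\mu_f}{2}s+\delta/s$ to conclude $\mathbf{e}=0$.
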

The above proposition shows that for strongly convex and smooth losses, using the $(\delta, L,\mu)$-oracle formulation is equivalent -- up to a factor $\kappa$ in the inexactness -- to the $(\zeta^2,\alpha=0)$ inexact gradient definition. This allows us to ensure the feasibility of first-order acceleration in distributed optimization under Byzantine corruption.

\subsection{Convergence Results}

\begin{algorithm}[h]
  \caption{Byzantine-Resilient Fast Gradient Method}
  \label{alg:fast_gradient_method}
  \begin{algorithmic}
    \STATE {\bfseries Input:} $F$, $(\gamma_k)$, $(\Gamma_k)$, $\vx_0=\vy_0=\vz_0$
     \FOR{$k=1$ {\bfseries to} $K$}
     \STATE Sample $\tilde\nabla \cL_{\cH}(\vx_k)$ using \cref{alg:reduction}
    \STATE Compute $\vy_{k} = \vx_k - \frac{1}{2L}\tilde{\nabla}\cL_{\cH}(\vx_k)$ 
    \STATE Compute $\vx_{k+1}=\vy_k + \frac{\Gamma_{k-1}}{\Gamma_k}\frac{\gamma_k}{\gamma_{k+1}}(\vy_k - \vy_{k-1}) - \frac{\Gamma_{k-1}}{\Gamma_k}\vy_k + \frac{\gamma_k}{\gamma_{k+1}}\frac{\mu}{4L}\vx_k$
    \ENDFOR
  \end{algorithmic}
\end{algorithm}

We consider \cref{alg:fast_gradient_method}, with a sequence $\{\gamma_k\}$ that satisfies $\gamma_0 = 1$ and  $2L+\Gamma_k\mu/2 = 2L\gamma_{k+1}^2/\Gamma_{k+1}$ with $\Gamma_k:= \sum_{i=0}^k \gamma_k$. This algorithms is an instantiation of the fast gradient method from \citep{devolder2014first}, in the case of unconstrained optimization. Originally formulated with three sequences and two projections, we simplify the method into two recursively defined sequences in \cref{app:sec:algorithm_simplification}.

As required by \cref{prop:link_beween_inexact_oracles}, we assume \cref{alg:fast_gradient_method} uses $(\zeta^2,\alpha=0)$-inexact gradient oracles, i.e. only an \textit{absolute} error is present. In a federated setting, this corresponds to $(G,B)$-Heterogeneity assumptions with $B=0$, common hypothesis in the literature (e.g. \citet{karimireddy2020byzantine}).

\begin{theorem}[Fast Gradient Method w. Inexact Oracles]
\label{thm:NAG_Inexact_Oracles}
    \cref{alg:fast_gradient_method} applied to a $\mu$-strongly convex and $L$-smooth function $\cL_{\cH}$, with a $(\zeta^2,\alpha)$-inexact gradient oracle, generates a sequence $(\vy_k)_{k\ge 1}$ satisfying, for $\alpha = 0$,
    \begin{align*}
    \cL_{\cH}(\vy_k) - \cL_{\cH}^* \le &\min\left(\frac{8LR}{k^2}, 2LR\exp\left(-\frac{k}{4}\sqrt{ \frac{L}{\mu} }\right)\right)\\ 
    &+ \left(1 + \sqrt{\frac{L}{\mu}}\right)\frac{3\zeta^2}{2\mu},
    \end{align*}
    where $R := \frac{1}{2}\|\vx_0 - \vx^*\|^2$.
\end{theorem}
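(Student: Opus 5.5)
The plan is to reduce to the fast gradient method of \citet{devolder2014first} by way of \cref{prop:link_beween_inexact_oracles}, part 2. Since $\alpha=0$, the oracle produced by \cref{alg:reduction} satisfies $\|\tilde\nabla\cL_{\cH}(\vx)-\nabla\cL_{\cH}(\vx)\|^2\le\zeta^2$. Taking $f=\cL_{\cH}$ with $L_f=L$ and $\mu_f=\mu$, the pair $(\cL_{\cH}-\zeta^2/\mu,\ \tilde\nabla\cL_{\cH})$ is then a $(\delta,2L,\mu/2)$-oracle of $\cL_{\cH}$, with
\[
\delta=\Big(\frac{1}{2L}+\frac{1}{\mu}\Big)\zeta^2\le\frac{3\zeta^2}{2\mu}.
\]
This is exactly why \cref{alg:fast_gradient_method} uses step $1/(2L)$, the coefficient $\mu/(4L)=(\mu/2)/(2L)$, and the recursion $2L+\Gamma_k\mu/2=2L\gamma_{k+1}^2/\Gamma_{k+1}$. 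These are the constants of the fast gradient method run with an oracle of parameters $(\delta, 2L, \mu/2)$.

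Next I would check that \cref{alg:fast_gradient_method} is the unconstrained instance of Devolder et al.'s three-sequence method. This is the simplification done in \cref{app:sec:algorithm_simplification}. With no projections, the $\vy_k$ step is a gradient step and the $\vz_k$ step is the minimizer of an explicit quadratic estimate sequence. Eliminating $\vz_k$ gives the stated two-sequence recursion. Given that, I invoke their main theorem for the fast gradient method with a $(\delta,L',\mu')$-oracle:
\[
f(\vy_k)-f^*\le \frac{L' R'}{A_k}+\frac{\sum_{i\le k}A_i}{A_k}\,\delta ,
\]
where $R'=\|\vx_0-\vx^*\|^2$ (up to their normalisation). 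They also bound $A_k$ from below by both $\frac{k^2}{4L'}$ and $\frac{1}{L'}\exp\big(\frac{k}{2}\sqrt{\mu'/L'}\big)$, and bound the error-accumulation factor $\sum_i A_i/A_k$ by $\min\big(\mathcal{O}(k),\ 1+\sqrt{L'/\mu'}\big)$.

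Plugging in $L'=2L$ and $\mu'=\mu/2$ gives $\sqrt{\mu'/L'}=\frac12\sqrt{\mu/L}$, which produces the exponent $\frac{k}{4}\sqrt{\mu/L}$ and the constants $8LR/k^2$ and $2LR\,e^{-\cdots}$ after converting to $R=\frac12\|\vx_0-\vx^*\|^2$. It also gives the factor $1+\sqrt{L'/\mu'}=1+2\sqrt{L/\mu}$. I would then recheck the exact form of their accumulation constant to match the stated $(1+\sqrt{L/\mu})$. If their bound gives $1+2\sqrt{L/\mu}$, I would absorb the extra factor into the looseness $\frac{1}{2L}\le\frac{1}{2\mu}$ used in bounding $\delta$, or accept a constant change. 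Multiplying by $\delta\le 3\zeta^2/(2\mu)$ gives the additive error term.

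One further subtlety: the oracle value is $f_\delta=\cL_{\cH}-\zeta^2/\mu$, but Devolder's bound is stated on the true $f(\vy_k)-f^*$. So no extra shift term appears, since part 1 of \cref{prop:link_beween_inexact_oracles} already accounts for it.

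The main obstacle is constant bookkeeping. This means verifying that the sequences $\gamma_k,\Gamma_k$ in \cref{alg:fast_gradient_method} coincide with Devolder's $\alpha_k,A_k$ under $L'=2L$ and $\mu'=\mu/2$. It also means confirming that the growth bounds on $\Gamma_k$ yield exactly the exponent $\frac{k}{4}\sqrt{\mu/L}$ and the $1+\sqrt{L/\mu}$ accumulation factor. I would first derive $\Gamma_{k+1}\ge\Gamma_k\big(1+\frac12\sqrt{\mu/L}\big)$-type growth directly from the recursion, and only then substitute into the generic bound.
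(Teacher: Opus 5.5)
Your plan follows the paper's proof. \cref{prop:link_beween_inexact_oracles}(2) turns the $(\zeta^2,0)$-oracle into a $(\delta,2L,\mu/2)$-oracle with $\delta=(\frac{1}{2L}+\frac{1}{\mu})\zeta^2\le\frac{3\zeta^2}{2\mu}$. \cref{app:sec:algorithm_simplification} identifies \cref{alg:fast_gradient_method} with the unconstrained fast gradient method of \citet{devolder2014first} for these constants. Their Theorem~7 is then applied with $L'=2L$, $\mu'=\mu/2$.

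You correctly obtain the exponent $\frac{k}{4}\sqrt{\mu/L}$. The $\sqrt{L/\mu}$ in the displayed statement is a typo: with $\zeta=0$ it would beat the lower complexity bound for first-order methods. Say explicitly that you prove the $\sqrt{\mu/L}$ version rather than correcting it silently.

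The step that fails is your fallback of absorbing the extra factor into the slack of $\delta\le\frac{3\zeta^2}{2\mu}$. Write $\kappa=L/\mu$.
\begin{itemize}
\item With $L'=2L$, $\mu'=\mu/2$, the recursion gives $\gamma_{k+1}^2=\Gamma_{k+1}(1+\frac{\mu'}{L'}\Gamma_k)\ge\frac{\mu'}{L'}\Gamma_k^2$. Hence $\Gamma_{k+1}\ge(1+\frac12\sqrt{\mu/L})\Gamma_k$, and the accumulation factor is $1+\sqrt{L'/\mu'}=1+2\sqrt{\kappa}$.
\item This is not an artifact of crude summation: $\sum_{i\le k}\Gamma_i/\Gamma_k\to\frac12+\sqrt{\frac14+4\kappa}\ge 2\sqrt{\kappa}$ as $k\to\infty$.
\item The slack in $\delta=\frac{\zeta^2}{\mu}(1+\frac{1}{2\kappa})$ is only the factor $\frac{3/2}{1+1/(2\kappa)}<\frac32$, and it equals $1$ at $\kappa=1$.
\item You would need to absorb $\frac{1+2\sqrt{\kappa}}{1+\sqrt{\kappa}}\in[\frac32,2)$. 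Indeed $(1+2\sqrt{\kappa})(1+\frac{1}{2\kappa})>\frac32(1+\sqrt{\kappa})$ for every $\kappa\ge 1$.
\end{itemize}
So this route yields the additive term $(1+2\sqrt{L/\mu})\frac{3\zeta^2}{2\mu}$, not the displayed $(1+\sqrt{L/\mu})\frac{3\zeta^2}{2\mu}$.

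The paper's one-line proof appears to have the same issue. Its factor looks like Devolder et al.'s $1+\sqrt{L'/\mu'}$ with $L',\mu'$ not replaced by $2L,\mu/2$. The discrepancy is a constant factor below $2$ and does not affect \cref{cor:acceleration}. Still, you should state and prove the bound with the constant the argument actually gives, i.e.\ take your ``accept a constant change'' option.
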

\textit{Proof:} This follows from combining Theorem 7 in \citet{devolder2014first} with \cref{prop:link_beween_inexact_oracles}. 

\begin{corollary}
\label{cor:acceleration}
    Assume $(G,B=0)$-Heterogeneity (\cref{asmpt:GB_heterogeneity}), that $\cL_{\cH}$ is $\mu$-strongly convex and $L$-smooth, and that the robust aggregation rule $F$  in \cref{alg:reduction} is  $(f,\nu=\cO(\nicefrac{f}{n-2f}))$-robust. Then, \cref{alg:fast_gradient_method} using inexact oracle from \cref{alg:reduction}, requires
    $$
    k = \cO(\sqrt{\kappa}\log(\nicefrac{1}{\eps}))
    $$
    iterations to ensure
    \[
    \cL_{\cH}(\vy_k) - \cL_{\cH}(\vx^*) \le \cO\left( \eps + \frac{\sqrt{\kappa}}{\mu}\frac{f}{n-2f}G^2 \right).
    \]
\end{corollary}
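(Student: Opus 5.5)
The plan is to chain \cref{lemma:Byz_to_Inexact_Gradients} with \cref{thm:NAG_Inexact_Oracles}, and then to read off the iteration count from the exponential term.

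\textbf{Step 1: reduce to an inexact oracle.} Under $(G,B=0)$-heterogeneity, \cref{lemma:Byz_to_Inexact_Gradients} shows that the output of \cref{alg:reduction} satisfies
\[
\|\tilde\nabla\cL_{\cH}(\vx)-\nabla\cL_{\cH}(\vx)\|^2\le \nu G^2
\]
for every $\vx$. Hence it is a $(\zeta^2,\alpha)$-inexact oracle with $\zeta^2=\nu G^2$ and $\alpha=\nu B^2=0$. This is exactly the regime $\alpha=0$ required by \cref{thm:NAG_Inexact_Oracles}. The Byzantine inputs are arbitrary, but that is already handled, because the $(f,\nu)$-robustness bound holds for any vectors outside the honest set $S=\cH$.

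\textbf{Step 2: apply \cref{thm:NAG_Inexact_Oracles}.} Keeping the exponential branch of the minimum gives
\[
\cL_{\cH}(\vy_k)-\cL_{\cH}^*\le 2LR\exp\Big(-\tfrac{k}{4}\sqrt{\tfrac{L}{\mu}}\Big)+\Big(1+\sqrt{\kappa}\Big)\tfrac{3\nu G^2}{2\mu}.
\]
Substituting $\nu=\cO(\nicefrac{f}{n-2f})$ and using $1+\sqrt\kappa\le 2\sqrt\kappa$ (since $\kappa\ge1$), the second term is $\cO\big(\tfrac{\sqrt\kappa}{\mu}\tfrac{f}{n-2f}G^2\big)$, which is the claimed residual.

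\textbf{Step 3: count iterations.} We need $2LR\exp(-k\rho)\le\eps$, where $\rho$ is the contraction rate. This holds once $k\ge \rho^{-1}\log(2LR/\eps)$, i.e.\ $k=\cO(\rho^{-1}\log(1/\eps))$, treating $LR$ as a problem constant absorbed in the $\cO$.

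\textbf{Main obstacle.} The difficulty is the rate in the exponent. As printed, the exponent $\tfrac{k}{4}\sqrt{L/\mu}$ would give $k=\cO(\log(1/\eps)/\sqrt\kappa)$, which is implausibly fast. The rate intended in Theorem 7 of \citet{devolder2014first} is $\exp(-\tfrac{k}{4}\sqrt{\mu/L})$, which comes from the recursion $2L+\Gamma_k\mu/2=2L\gamma_{k+1}^2/\Gamma_{k+1}$. That recursion gives $\gamma_{k+1}/\Gamma_{k+1}\gtrsim\sqrt{\mu/(4L)}$, so $\Gamma_k$ grows like $(1-\tfrac12\sqrt{\mu/L})^{-k}$.

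I would verify this growth from the recursion directly. With $\rho=\tfrac14\sqrt{\mu/L}=\tfrac{1}{4\sqrt\kappa}$, Step 3 then yields $k=\cO(\sqrt\kappa\log(1/\eps))$ as claimed. The constants $2L,\mu/2$ inherited from \cref{prop:link_beween_inexact_oracles} change only numerical factors and do not affect the $\cO$ statements.
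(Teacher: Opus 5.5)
Your proposal is correct and follows the route the paper intends, since the corollary is stated as a direct consequence of earlier results. \cref{lemma:Byz_to_Inexact_Gradients} with $B=0$ gives a $(\nu G^2,0)$-inexact oracle, \cref{thm:NAG_Inexact_Oracles} then gives the residual $\cO\big(\frac{\sqrt{\kappa}}{\mu}\frac{f}{n-2f}G^2\big)$, and the exponential branch gives the iteration count. You are also right that the exponent printed in \cref{thm:NAG_Inexact_Oracles} should read $\frac{k}{4}\sqrt{\mu/L}$ (Theorem 7 of \citet{devolder2014first} with $\tilde{L}=2L$, $\tilde{\mu}=\mu/2$), and it is this corrected rate that yields $k=\cO(\sqrt{\kappa}\log(1/\eps))$.
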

The above result shows that \cref{alg:fast_gradient_method} enjoys an \textit{accelerated rate} towards the asymptotic error, i.e. the iteration complexity to reach a given error scales as the square root of the condition number $\kappa$, rather than linearly. To our knowledge, this is the first proof of such acceleration in the Byzantine setting.

\paragraph{Tradeoff: acceleration versus asymptotic error.}  The convergence radius in \cref{cor:acceleration} is suboptimal, inflated by $\sqrt{\kappa}$ factor w.r.t.~the lower bound (\ref{eq:lb_functionvalue}). This significant factor already appear in the acceleration results using $(\delta,L,\mu)$-oracles, and are unavoidable using such oracles. Indeed, \citet{devolder2014first} show that, since gradients of non-smooth functions are $(\delta,L,\mu)$-oracles, algorithms cannot converge arbitrarily fast under this assumption. Leveraging lower bounds on non-smooth Lipschitz functions, they show that any accelerated convergence rate under $(\delta,L,\mu)$ oracle must be traded off against a larger asymptotic error. Gradient Descent converges to the optimal neighborhood, but any faster rate suffers from a larger asymptotic error. 

Their lower bound does does not directly extend to our setting, since we consider smooth losses. This raises the question of whether the unavoidable tradeoff between acceleration and asymptotic error also holds for $(\zeta^2,\alpha)$-inexact gradient oracles or in the Byzantine setting, which we leave for future work.

\section{Gradient Descent under Similarity}
\label{sec:similarity}
We now propose a non-accelerated scheme that still reduces the number of communication rounds in the distributed algorithm. To do so, we leverage the so-called \textit{optimization under similarity} approach \citep{shamir2014communication, hendrikx2020statistically, kovalev2022optimal}, which assumes a slightly different setup.

\subsection{Setting}

We assume the server has unrestricted access to a proxy $\hat{\cL}$ of the global loss function $\cL_{\cH}$. This proxy may come from the empirical risk computed on a small toy dataset, or from a trusted client that using its own loss function as the proxy, i.e. $\hat{\cL} = \cL_1$. In this case, the server can leverage this cheap access to $\hat{\cL}$ (compared to $\cL_{\cH}$, which requires global communications) to improve the conditioning of the optimization problem. More specifically, the server solves the following problem at each iteration:
\begin{align}
    \label{eq:prox_update}
    \vx_{k+1} \approx \argmin_{\vx \in \R^d} \Big\{ &\overbrace{\hat{\cL}(\vx)}^{\text{\tiny Proxy Loss}} +  \langle \nabla\overbrace{(\cL_{\cH} - \hat{\cL})}^{\text{\tiny small Hessian}}(\vx_k), \vx \rangle \notag\\ 
    &+ \frac{1}{2\eta}\|\vx - \vx_k\|^2\Big\}.
\end{align}
This update can be seen from two perspectives. The mirror descent view \citep{shamir2014communication,hendrikx2020statistically} treats it as a mirror descent step on $\cL_{\cH}$ with step size $1$, and mirror map $\hat{\cL} + \frac{1}{2\eta}\| \cdot\|^2$. It can be shown that this problem satisfies relative smoothness assumptions~\citep{bauschke2017descent}, and thus enjoys linear convergence guarantees. Alternatively, the prox gradient perspective~\citep{kovalev2020optimal, woodworth2023two} rewrites the objective as $\cL_{\cH} = \hat{\cL} + \cL_{\cH} - \hat{\cL}$. %
One can then use a proximal gradient algorithm (a.k.a. \emph{forward-backward}), by taking a backward step on $\hat{\cL}$ (thus not depending on its smoothness), and a forward step on $\hat{\cL} - \cL_{\cH}$ (thus benefiting from its small smoothness constant if Hessians are close enough). Strong convexity just needs to hold for the global objective, also leading to a linear convergence rate. 

These manipulations modify the geometry of the problem, which is especially efficient when loss functions are ``similar" enough, namely when the Hessian of $\hat{\cL}$ is close to that of $\cL_{\cH}$. In both cases, they allow using significantly larger step sizes (from~$\frac{1}{L}$ to $\frac{1}{\Delta}$), hence significantly speeding up the optimization.

\subsection{Formal Assumptions}

We now state a similarity assumption between the proxy loss and the global loss. We also introduce a similarity assumption among clients' losses, which can be used to derive the similarity assumption between the proxy and the global loss, the $(G,B)$-heterogeneity condition, and a tighter characterization of the gradient inexactness.

\begin{assumption}[Hessian Similarity]
\label{asmpt:Hessian_Similarity}
    The global loss $\cL_{\cH}$ and the proxy loss $\hat{\cL}$ are said to have $\Delta$-similar Hessians if they are twice differentiable and
    \[
    \forall \vx \in \R^d, \|\nabla^2 \cL_{\cH}(\vx) - \nabla^2 \hat{\cL}(\vx)\|_{op} \le \Delta.
    \]
\end{assumption}

\cref{asmpt:Hessian_Similarity} arises, for instance, when the proxy used is a client's loss, $\cL_{1} = \hat{\cL}$, and the clients satisfy the pairwise Hessian Similarity assumption.

\begin{assumption}[Pairwise Hessian Similarity]
\label{asmpt:Pariwise_Hessian_Similarity}
    All local honest loss functions $(\cL_i)_{i \in \cH}$ are twice differentiable, and satisfy 
    \[
    \forall i,j \in\cH,\forall \vx \in \R^d, \quad \|\nabla^2 \cL_{i}(\vx) - \nabla^2 \cL_{j}(\vx)\|_{op} \le \Delta.
    \]
\end{assumption}

Interestingly, this latter similarity assumption directly implies $(G,B)$-Heterogeneity:

\begin{proposition}
    \label{prop:similarity_to_GB_heter}
    Let $(\cL_i)_{i\in\cH}$ be convex functions. Then, \cref{asmpt:Pariwise_Hessian_Similarity} ensures that
    \begin{align*}
    \frac{1}{|\cH|}\sum_{i\in\cH}\|&\nabla \cL_i(\vx)-\nabla \cL_{\cH}(\vx)\|^2 \le\\ 
    &\frac{2}{|\cH|}\sum_{i\in\cH}\|\nabla \cL_i(\vx^*)\|^2 + 4\Delta \langle \nabla \cL_{\cH}(\vx), \vx - \vx^*\rangle.
    \end{align*}
    Which, if additionally $\cL_{\cH}$ is $\mu$-strongly convex, implies $(G,B)$-Heterogeneity with 
    \[
    G^2 = \frac{2}{|\cH|}\sum_{i\in\cH}\|\nabla \cL_i(\vx^*)\|^2 \text{ and } B^2 = \frac{4\Delta}{\mu}.
    \]
\end{proposition}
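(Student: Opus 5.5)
The plan is to split each deviation $\nabla \cL_i(\vx)-\nabla\cL_{\cH}(\vx)$ into a part evaluated at the optimum and a part that measures how the deviation moves between $\vx^*$ and $\vx$. Write $h_i := \cL_i - \cL_{\cH}$. Since $\nabla \cL_{\cH}(\vx^*)=0$,
\[
\nabla \cL_i(\vx)-\nabla \cL_{\cH}(\vx) = \nabla \cL_i(\vx^*) + \big(\nabla h_i(\vx)-\nabla h_i(\vx^*)\big).
\]
Applying $\|a+b\|^2\le 2\|a\|^2+2\|b\|^2$ and averaging over $i\in\cH$ gives the first term $\frac{2}{|\cH|}\sum_i\|\nabla\cL_i(\vx^*)\|^2$ directly. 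It then remains to show
\[
\frac{1}{|\cH|}\sum_{i\in\cH}\|\nabla h_i(\vx)-\nabla h_i(\vx^*)\|^2 \le 2\Delta\langle \nabla\cL_{\cH}(\vx),\vx-\vx^*\rangle .
\]

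For this bound I would linearize along the segment. Set $v=\vx-\vx^*$, and define the integrated Hessians $H_i=\int_0^1\nabla^2\cL_i(\vx^*+tv)\,dt\succeq 0$ (by convexity) and $\bar H = \frac{1}{|\cH|}\sum_i H_i$. Then $\nabla h_i(\vx)-\nabla h_i(\vx^*) = (H_i-\bar H)v$ and $\langle\nabla\cL_{\cH}(\vx),v\rangle = v^\top \bar H v$. Integrating \cref{asmpt:Pariwise_Hessian_Similarity} gives $\|H_i-H_j\|_{op}\le\Delta$. Averaging that over $j$ also gives $\|H_i-\bar H\|_{op}\le \Delta$, and $\cL_{\cH}$ is convex. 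The target therefore becomes the matrix inequality
\[
\frac{1}{|\cH|}\sum_i (H_i-\bar H)^2 \preceq 2\Delta \bar H .
\]
To get there I would symmetrize first, using the variance identity $\frac{1}{m}\sum_i (H_i-\bar H)^2 = \frac{1}{2m^2}\sum_{i,j}(H_i-H_j)^2$. It would then suffice to prove the pairwise bound $(H_i-H_j)^2\preceq \Delta(H_i+H_j)$. Averaging that over $i,j$ yields exactly $\Delta \bar H\cdot\tfrac{2}{2}\cdot 2/2$, i.e. $\preceq 2\Delta\bar H$ after the factor $\tfrac12$ is accounted for. In the scalar or commuting case the pairwise bound is immediate, since $|a-b|^2\le\Delta|a-b|\le\Delta(a+b)$ for $a,b\ge0$ with $|a-b|\le\Delta$.

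The main obstacle is that pairwise matrix bound, because $H_i$ and $H_j$ need not commute. My first attempt would be to write $D=H_i-H_j$ and bound $v^\top D^2 v=\|Dv\|^2$ by decomposing $D=D_+-D_-$ into positive and negative parts. One then has $D_+\preceq$ "the part of $H_i$" and $D_-\preceq$ "the part of $H_j$", together with $\|D_\pm\|\le\Delta$, so that $D_\pm^2\preceq\Delta D_\pm$. The remaining task is to relate $D_+$ to $H_i$ in the quadratic form; here the looser constant $4\Delta$ rather than $2\Delta$ gives some room. If that fails, my fallback is a gradient-level argument that avoids matrices altogether. The function $\phi_i=\cL_i-\cL_{\cH}+\cL_{\cH}$-type combinations such as $\cL_{\cH}+h_i=\cL_i$ and $\cL_{\cH}-h_i$ are convex-ish with curvature controlled by $\Delta$ plus that of $\cL_{\cH}$. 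I would apply co-coercivity of the convex function $\cL_i$ in the metric induced by its own curvature, in the form $\|\nabla h_i(\vx)-\nabla h_i(\vx^*)\|^2\le \Delta\,\langle \nabla h_i(\vx)-\nabla h_i(\vx^*)+\nabla\cL_{\cH}(\vx),v\rangle$-style. I would then average over $i$ so that the $h_i$ cross terms vanish, because $\sum_i\nabla h_i=0$.

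Finally, with strong convexity, the $(G,B)$ form follows. By Cauchy–Schwarz and $\mu\|\vx-\vx^*\|\le\|\nabla\cL_{\cH}(\vx)\|$,
\[
\langle\nabla\cL_{\cH}(\vx),\vx-\vx^*\rangle\le\|\nabla\cL_{\cH}(\vx)\|\,\|\vx-\vx^*\|\le\frac{1}{\mu}\|\nabla\cL_{\cH}(\vx)\|^2 .
\]
This gives $B^2=4\Delta/\mu$ and $G^2$ as stated.
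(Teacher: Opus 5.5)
\paragraph{The gap.} The proof breaks exactly at the step you flagged as the main obstacle, and it cannot be repaired. The matrix inequality $\frac{1}{|\cH|}\sum_i (H_i-\bar H)^2\preceq 2\Delta\bar H$ is false once the Hessians do not commute, and a fortiori so is the pairwise bound $(H_i-H_j)^2\preceq\Delta(H_i+H_j)$. No constant $C$ in place of $2$ rescues it.

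Take $|\cH|=2$, $d=2$, $0<\eps<1/16$, and $\cL_i(\vx)=\frac12\vx^\top A_i\vx$ with
\[
A_1=\begin{pmatrix}\eps & \tfrac12\\ \tfrac12 & \tfrac{1}{4\eps}\end{pmatrix},\qquad A_2=\begin{pmatrix}\eps & -\tfrac12\\ -\tfrac12 & \tfrac{1}{4\eps}\end{pmatrix}.
\]
Both matrices are positive semidefinite (determinant $0$), and $A_1-A_2$ has operator norm $1$, so $\Delta=1$. Moreover $\bar A=\mathrm{diag}(\eps,\tfrac{1}{4\eps})$, $\mu=\eps$, $\vx^*=0$ and $\nabla\cL_i(\vx^*)=0$.

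With $H_i=A_i$, we get $(H_1-H_2)^2=I$ while $\Delta(H_1+H_2)=\mathrm{diag}(2\eps,\tfrac{1}{2\eps})$. Likewise $\frac12\sum_i(H_i-\bar H)^2=\frac14 I\not\preceq C\Delta\bar H$ as soon as $\eps<1/(4C)$.

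At $\vx=(1,0)$ the deviations are $(0,\pm\tfrac12)$. The left side of the proposition is therefore $\tfrac14$, while the right side is $4\Delta\langle\nabla\cL_\cH(\vx),\vx-\vx^*\rangle=4\eps$. Similarly $G^2+B^2\|\nabla\cL_\cH(\vx)\|^2=0+\tfrac{4}{\eps}\eps^2=4\eps$.

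So this is a counterexample to the statement itself, not only to your route. Neither of your two repair attempts can succeed:
\begin{itemize}
\item The $D_\pm$ splitting fails for the same reason.
\item The co-coercivity fallback would give $\frac{1}{|\cH|}\sum_i\|(H_i-\bar H)v\|^2\le\Delta v^\top\bar H v$, i.e.\ $\tfrac14\le\eps$ here. In any case $h_i=\cL_i-\cL_\cH$ is not convex, so co-coercivity is not available.
\end{itemize}

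\paragraph{Comparison with the paper.} Your route is essentially the paper's: integral form along $[\vx^*,\vx]$, Young's inequality with factor $2$, then domination of the Hessian difference by the sum of Hessians via convexity. The paper closes the step with $v^\top A^2v\le\Delta\,|v^\top Av|$ for $A=\nabla^2(\cL_i-\cL_\cH)$. This is the same scalar inequality $|a-b|^2\le\Delta|a-b|$ you used in the commuting case. It fails for the indefinite $A=A_1-\bar A$ above with $v=(1,0)$, giving $\tfrac14$ versus $0$.

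\paragraph{What your ingredients do prove.} From $\|H_i-\bar H\|_{op}\le\Delta$ you get the weaker bound
\[
\frac{1}{|\cH|}\sum_i\|\nabla\cL_i(\vx)-\nabla\cL_\cH(\vx)\|^2\le\frac{2}{|\cH|}\sum_i\|\nabla\cL_i(\vx^*)\|^2+2\Delta^2\|\vx-\vx^*\|^2 .
\]
This yields $(G,B)$-heterogeneity with $B^2=2\Delta^2/\mu^2$. In the inner-product form, the coefficient becomes $\tfrac{2\Delta^2}{\mu}$ in place of $4\Delta$. The linear-in-$\Delta$ version needs extra structure, such as commuting Hessians. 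Your final Cauchy--Schwarz step from the inner-product form to the $(G,B)$ form is correct.
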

We refer the reader to \cref{app:proof:similarity_to_GB_heter} for the proof.

\cref{prop:similarity_to_GB_heter} can be used to derive an alternative characterization of the gradient inexactness, on which we build our convergence guarantees.

\begin{corollary}
    Under \cref{asmpt:Pariwise_Hessian_Similarity}, \cref{alg:reduction}\ with a $(f,\nu)$-robust averaging rule (\cref{def:f_kappa_robustness}) yields:
    \begin{equation}
    \label{eq:inexact_gradient_inner_product}
    \!\!\!\|\tilde \nabla \cL_{\cH}(\vx) - \nabla \cL_{\cH}(\vx)\|^2 \le \zeta^2 + \alpha \mu \langle \nabla \cL_{\cH}(\vx), \vx - \vx^*\rangle,
    \end{equation}
    where $\alpha = \nu\frac{4\Delta}{\mu}$ and $\zeta^2 = \nu\frac{2}{|\cH|}\sum_{i \in \cH}\|\nabla \cL_i(\vx^*)\|^2$.
\end{corollary}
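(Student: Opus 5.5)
The argument is a direct composition of \cref{def:f_kappa_robustness} with the first (non-simplified) inequality of \cref{prop:similarity_to_GB_heter}. We deliberately avoid the $(G,B)$ form, which would introduce $\|\nabla\cL_{\cH}(\vx)\|^2$ instead of the inner product.

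\textbf{Step 1: robustness bound.} Fix $\vx\in\R^d$. The honest clients send $\vg_i=\nabla\cL_i(\vx)$ for $i\in\cH$, with $|\cH|=n-f$. Apply \cref{def:f_kappa_robustness} with $S=\cH$, noting that $\overline{\vg}_{\cH}=\nabla\cL_{\cH}(\vx)$ by linearity of the gradient. Whatever the Byzantine vectors are, this gives
\[
\|\tilde\nabla\cL_{\cH}(\vx)-\nabla\cL_{\cH}(\vx)\|^2\le \nu\,\frac{1}{|\cH|}\sum_{i\in\cH}\|\nabla\cL_i(\vx)-\nabla\cL_{\cH}(\vx)\|^2 .
\]

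\textbf{Step 2: bound the honest variance.} Bound the right-hand sum using the first inequality of \cref{prop:similarity_to_GB_heter}, which only needs convexity of the $\cL_i$ and \cref{asmpt:Pariwise_Hessian_Similarity}. Multiplying by $\nu$ yields
\[
\nu\frac{2}{|\cH|}\sum_{i\in\cH}\|\nabla\cL_i(\vx^*)\|^2+4\nu\Delta\langle\nabla\cL_{\cH}(\vx),\vx-\vx^*\rangle .
\]

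\textbf{Step 3: identify the constants.} The first term is exactly $\zeta^2$. For the second, write $4\nu\Delta=\big(\nu\tfrac{4\Delta}{\mu}\big)\mu=\alpha\mu$, which gives \eqref{eq:inexact_gradient_inner_product}.

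\textbf{Possible obstacle.} The only subtlety is that the inner-product term must be nonnegative for the inequality to be meaningful. By convexity of $\cL_{\cH}$ and $\nabla\cL_{\cH}(\vx^*)=0$, we have $\langle\nabla\cL_{\cH}(\vx),\vx-\vx^*\rangle\ge 0$. This requires $\vx^*$ to exist, which holds under the standing $\mu$-strong convexity (needed anyway for $\alpha$ to be defined). Beyond this, the proof is a two-line chaining of earlier results.
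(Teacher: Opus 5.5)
Your chaining is exactly the derivation the paper intends: robustness with $S=\cH$, then the first inequality of \cref{prop:similarity_to_GB_heter}, then $4\nu\Delta=\alpha\mu$. As a composition it is valid. The problem is that the inequality you import in Step~2 is false, so Step~2 fails, and the corollary as stated fails with it.

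\textbf{Counterexample.} Take $\cH=\{1,2\}$ and $\cL_i(\vx)=\frac12\vx^\top H_i\vx$ with $H_1=\begin{pmatrix} s&1\\ 1&1/s\end{pmatrix}$ and $H_2=\begin{pmatrix} s&-1\\ -1&1/s\end{pmatrix}$, where $0<s<1$. Both matrices are positive semidefinite (rank one), and $\|H_1-H_2\|_{op}=2=:\Delta$. The honest loss $\cL_{\cH}=\frac12\vx^\top\mathrm{diag}(s,1/s)\vx$ is $s$-strongly convex, with $\vx^*=0$ and $\nabla\cL_i(\vx^*)=0$.
\begin{itemize}
\item At $\vx=(1,0)^\top$ the honest gradients are $(s,\pm1)^\top$, so the honest variance equals $1$.
\item The proposition's right-hand side there is $0+4\Delta\langle\nabla\cL_{\cH}(\vx),\vx\rangle=8s$, so the proposition fails for $s<1/8$.
\item To break the corollary itself, let $n=3$, $f=1$, and let the Byzantine client send $(s,1)^\top$.
\item Let $F$ be the coordinate-wise median. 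It is $(1,1)$-robust on three inputs: per coordinate, the median of three reals lies between any two of them, so its squared distance to their mean is at most their variance.
\item Then $\|\tilde\nabla\cL_{\cH}(\vx)-\nabla\cL_{\cH}(\vx)\|^2=1>8s=\zeta^2+\alpha\mu\langle\nabla\cL_{\cH}(\vx),\vx-\vx^*\rangle$ for every $s<1/8$.
\end{itemize}

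\textbf{Where the paper's proof of \cref{prop:similarity_to_GB_heter} breaks.} The faulty step is $\vv^\top A^2\vv\le\Delta\,|\vv^\top A\vv|$ for $A=\nabla^2(\cL_i-\cL_{\cH})$. This only holds for semidefinite $A$. In the example, $A=\begin{pmatrix}0&1\\1&0\end{pmatrix}$ and $\vv=(1,0)^\top$, which gives $1\le 0$. One does have $A^2\preceq\|A\|_{op}|A|$ with the matrix absolute value, but $\vv^\top|A|\vv\neq|\vv^\top A\vv|$. Moreover, $|A|\preceq\nabla^2\cL_i+\nabla^2\cL_{\cH}$ also fails here.

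\textbf{What your argument actually proves.} In Step~2, use only $\|\nabla^2\cL_i-\nabla^2\cL_{\cH}\|_{op}\le\Delta$ together with $\mu\|\vx-\vx^*\|^2\le\langle\nabla\cL_{\cH}(\vx),\vx-\vx^*\rangle$. Your two-step argument then gives \cref{eq:inexact_gradient_inner_product} with the same $\zeta^2$ but $\alpha\mu=2\nu\Delta^2/\mu$, i.e.\ $\alpha=2\nu\Delta^2/\mu^2$. The claimed constant $4\nu\Delta/\mu$ is recovered only when $\Delta\le 2\mu$. Finally, the nonnegativity of the inner product, which you flag as the only subtlety, plays no role in whether the bound holds.
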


    We believe the characterization of oracle inexactness in  \cref{eq:inexact_gradient_inner_product} is novel and of independent interest, as it captures the loss function's growth directions more precisely than \cref{def:inexact_oracle}. Using as alternative parametrization $\tilde{\alpha} \leftarrow \alpha \mu = 4\Delta$ could enable the study of inexact gradient methods without requiring the strong convexity.

\subsection{Convergence Results}
 We now state the complete Byzantine-robust distributed algorithm along with its convergence guarantees.
 
\begin{algorithm}[h]
  \caption{Preconditioned Inexact Gradient w. Similarity}
  \label{alg:FedProxyProx}
  \begin{algorithmic}
    \STATE {\bfseries Input:} $F$, $\eta$, $\vx_0$, $\hat{\cL}$
     \FOR{$k=1$ {\bfseries to} $K$}
     \STATE Sample $\tilde\nabla \cL_{\cH}(\vx_k)$ using \cref{alg:reduction}.
    \STATE Compute $\vx_{k+1} \approx \argmin_{\vx \in \R^d} \phi_{k}(\vx)$\\
    \STATE where $\phi_k(\vx) = \hat{\cL}(\vx) +  \langle \tilde{\nabla} \cL_{\cH}(\vx_k) - \nabla \hat{\cL}(\vx_k), \vx \rangle + \frac{1}{2\eta}\|\vx - \vx_k\|^2$.
    \ENDFOR
  \end{algorithmic}
\end{algorithm}

\begin{theorem}
\label{thm:proxyprox_conv}
    Assume $\cL_{\cH}$ is $\mu$-strongly convex and $L$-smooth. Consider \cref{alg:FedProxyProx}, where the proxy loss $\hat{\cL}$ and the global loss $\cL_{\cH}$ satisfy \cref{asmpt:Hessian_Similarity}, and the inexact oracles $\tilde{\nabla}\cL_{\cH}(\vx_k)$ satisfy \cref{eq:inexact_gradient_inner_product} with $\alpha < \nicefrac{1}{8}$.

    Suppose $\eta \le (\Delta + 8c/\mu)^{-1}$, and the proximal step is computed with precision
    \begin{equation}
    \label{eq:precision_prox}
    \|\nabla \phi_k(\vx_{k+1})\|^2 \le c\|\vx_{k+1} - \vx_k\|^2 + E^2.
    \end{equation}
    Define $\beta_k = (1 + \frac{\eta \mu}{8})^{k}$, and $B_K = \sum_{k=0}^K \beta_k$. Then, for the weighted average $\hat{\vx}_k := \frac{1}{B_K}\sum_{k=0}^K \beta_k \vx_k$, we have
    \begin{align*}
    \cL_{\cH}(\hat\vx_{K}) - \cL_{\cH}(\vx^*) \le & \left(1 + \frac{\eta \mu}{8}\right)^{1-K}\!2\!\left(\!\frac{1}{\eta} + L\!\right)\!\|\vx_0 - \vx^*\|^2 \\
    & + 16\frac{\zeta^2 + E^2}{\mu}.
    \end{align*}
\end{theorem}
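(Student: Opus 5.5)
We analyse \cref{alg:FedProxyProx} as an inexact proximal gradient method on the splitting $\cL_{\cH} = \hat{\cL} + (\cL_{\cH} - \hat{\cL})$, following \citet{woodworth2023two}. The proof is a one-step Lyapunov inequality on $\|\vx_k - \vx^*\|^2$ with a function-value gap appearing, which is then unrolled with the weights $\beta_k$.

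By \cref{asmpt:Hessian_Similarity}, $h := \cL_{\cH} - \hat{\cL}$ is $\Delta$-smooth. It is not necessarily convex, but its Hessian satisfies $-\Delta I \preceq \nabla^2 h \preceq \Delta I$. Write the optimality error of the inner problem as $\ve_{k+1} := \nabla\phi_k(\vx_{k+1})$, and the oracle error as $\vb_k := \tilde\nabla\cL_{\cH}(\vx_k) - \nabla\cL_{\cH}(\vx_k)$. Then
\[
\nabla\hat{\cL}(\vx_{k+1}) + \nabla h(\vx_k) + \vb_k + \tfrac{1}{\eta}(\vx_{k+1}-\vx_k) = \ve_{k+1}.
\]

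\textbf{Step 1: one-step inequality.} Take the inner product of this identity with $\vx_{k+1}-\vx^*$. Use convexity of $\hat{\cL}$, which holds since the losses are convex (if only $\cL_{\cH}$ is convex, carry the extra $\Delta$ terms through the descent lemma for $h$). Also use the two-sided $\Delta$-smoothness of $h$ to control $h(\vx_{k+1}) - h(\vx_k) - \langle \nabla h(\vx_k), \vx_{k+1}-\vx_k\rangle \le \frac{\Delta}{2}\|\vx_{k+1}-\vx_k\|^2$ and $h(\vx^*) \ge h(\vx_k) + \langle\nabla h(\vx_k),\vx^*-\vx_k\rangle - \frac{\Delta}{2}\|\vx_k-\vx^*\|^2$. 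The $-\frac{\Delta}{2}\|\vx_k-\vx^*\|^2$ term is absorbed using strong convexity of $\cL_{\cH}$ at the correct point, exactly as in Woodworth et al. Together with the three-point identity for $\frac1\eta\langle \vx_{k+1}-\vx_k, \vx_{k+1}-\vx^*\rangle$, the exact part gives
\[
\cL_{\cH}(\vx_{k+1}) - \cL_{\cH}^* + \tfrac{1}{2\eta}(1+c_1\eta\mu)\|\vx_{k+1}-\vx^*\|^2 \le \tfrac{1}{2\eta}\|\vx_k-\vx^*\|^2 - \tfrac{1}{2}(\tfrac1\eta - \Delta)\|\vx_{k+1}-\vx_k\|^2 + \langle \ve_{k+1} - \vb_k, \vx_{k+1}-\vx^*\rangle .
\]

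\textbf{Step 2: absorbing the errors.} Bound the error terms by Young's inequality:
\[
\langle \ve_{k+1}-\vb_k, \vx_{k+1}-\vx^*\rangle \le \tfrac{4}{\mu}\bigl(\|\ve_{k+1}\|^2 + \|\vb_k\|^2\bigr) + \tfrac{\mu}{8}\|\vx_{k+1}-\vx^*\|^2 .
\]
This is where the difficulty lies.
\begin{itemize}
\item By \cref{eq:precision_prox}, $\|\ve_{k+1}\|^2 \le c\|\vx_{k+1}-\vx_k\|^2 + E^2$. The term $\frac{4c}{\mu}\|\vx_{k+1}-\vx_k\|^2$ is absorbed by the negative term $-\frac12(\frac1\eta-\Delta)\|\vx_{k+1}-\vx_k\|^2$. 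This works because $\frac1\eta \ge \Delta + 8c/\mu$, which is precisely the step-size condition.
\item By \cref{eq:inexact_gradient_inner_product}, $\frac4\mu\|\vb_k\|^2 \le \frac{4\zeta^2}{\mu} + 4\alpha\langle\nabla\cL_{\cH}(\vx_k),\vx_k-\vx^*\rangle$. The multiplicative part is not a function gap at $\vx_{k+1}$, so it must be converted. Using smoothness, $\langle\nabla\cL_{\cH}(\vx_k),\vx_k-\vx^*\rangle \le \cL_{\cH}(\vx_k)-\cL_{\cH}^* + \frac{L}{2}\|\vx_k - \vx^*\|^2$. Alternatively, and preferably, I will first move the point of evaluation by inserting $\vx_k$ and $\vx_{k+1}$ so as to obtain a term $\alpha\,(\cL_{\cH}(\vx_k)-\cL_{\cH}^*)$ plus quadratic terms.
\end{itemize}
The condition $\alpha<1/8$ then lets the gap at step $k$ be dominated by half the gap produced at step $k+1$ in the telescoped sum. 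I expect the bookkeeping of this shifted index, which produces the factor $(1/\eta+L)$ in the initial term, to be the main obstacle.

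\textbf{Step 3: unrolling.} We obtain a recursion of the form
\[
\tfrac12(\cL_{\cH}(\vx_{k+1})-\cL_{\cH}^*) + \tfrac{1+\eta\mu/8}{2\eta}\|\vx_{k+1}-\vx^*\|^2 \le \tfrac{1}{2\eta}\|\vx_k-\vx^*\|^2 + \tfrac{\alpha}{\cdot}(\cL_{\cH}(\vx_k)-\cL_{\cH}^*) + \tfrac{4(\zeta^2+E^2)}{\mu}.
\]
Multiply by $\beta_{k+1}$, sum over $k$, and telescope the distance terms. The leftover gap terms at $\vx_k$ are absorbed into the left-hand side since $\alpha\beta_{k+1}\le \frac14\beta_k$ (as $\eta\mu/8\le 1$). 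The only exception is the $k=0$ term, which is bounded by $\frac L2\|\vx_0-\vx^*\|^2$.

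Dividing by $B_K \ge \beta_{K-1}$ and applying Jensen to the convex $\cL_{\cH}$ at $\hat\vx_K$ yields the contraction $(1+\eta\mu/8)^{1-K}\,2(1/\eta+L)\|\vx_0-\vx^*\|^2$. The error contributions form a weighted average of constants, giving $16(\zeta^2+E^2)/\mu$ after tracking the factor $2$ from the $\frac12$ in front of the gap.
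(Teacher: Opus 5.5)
\textbf{Gap 1: Step 1 is false in the regime the theorem covers.} Lower-bounding $D_h(\vx^*;\vx_k)\ge-\frac{\Delta}{2}\|\vx_k-\vx^*\|^2$ leaves $\frac{1+\eta\Delta}{2\eta}\|\vx_k-\vx^*\|^2$ on the right-hand side. That is an expansion of your Euclidean potential. Strong convexity only supplies terms of size $\mu\|\cdot\|^2$, so nothing absorbs it when $\Delta\gg\mu$.

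A concrete counterexample with $\eta=1/\Delta$ (allowed when $c=0$): take $\Delta=\eta=1$, exact oracle and exact prox ($\zeta=c=E=0$), $\cL_{\cH}(\vx)=\frac12\vx^\top A\vx$ and $\hat{\cL}(\vx)=\frac12\vx^\top\hat A\vx$, with $A=\begin{pmatrix}2s&b\\ b&1+s\end{pmatrix}$, $\hat A=\begin{pmatrix}1+2s&b\\ b&s\end{pmatrix}$, $b^2=s(1+2s)$, and any $s>0$. Then $\hat{\cL}$ is convex, $A\succ0$, and $A-\hat A=\mathrm{diag}(-1,1)$. The exact step from $\vx_k=(1,0)$ gives $\|\vx_{k+1}\|^2=\frac{4+12s+12s^2}{4+12s+9s^2}>1=\|\vx_k\|^2$. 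This contradicts both your Step 1 display and your Step 3 recursion (take $\alpha=0$).

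The missing idea is to keep $D_h(\vx^*;\vx_k)$ exactly, via the three-point identity, and use the potential of \citet{woodworth2023two}, namely $\frac{1}{2\eta}\|\vx_k-\vx^*\|^2-D_h(\vx^*;\vx_k)$. This potential:
\begin{itemize}
\item lies between $0$ and $\frac{1/\eta+\Delta}{2}\|\vx_k-\vx^*\|^2$ for $\eta\le 1/\Delta$;
\item telescopes exactly;
\item needs no convexity of $\hat{\cL}$, which the theorem does not assume.
\end{itemize}
In the example it equals the squared first coordinate, which does decrease.

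\textbf{Gap 2: your treatment of the multiplicative error cannot reach $\alpha<1/8$.} Write $\langle\nabla\cL_{\cH}(\vx_k),\vx_k-\vx^*\rangle=\cL_{\cH}(\vx_k)-\cL_{\cH}^*+D_{\cL_{\cH}}(\vx^*;\vx_k)$. The Bregman part is not bounded by a $\kappa$-independent multiple of the gap; the ratio can be of order $\sqrt{\kappa}$. If you instead bound it by $\frac{L}{2}\|\vx_k-\vx^*\|^2$, you need slack of order $\alpha L$ on the distance, but you only have slack of order $\mu$. So your ``quadratic terms'' force $\alpha\lesssim\mu/L$. A recursion with only $\alpha(\cL_{\cH}(\vx_k)-\cL_{\cH}^*)$ on the right does not follow.

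The paper's device is to keep $D_{\cL_{\cH}}(\vx^*;\vx_{k+1})$, which your Step 1 discards. Together with the gap it equals $\langle\nabla\cL_{\cH}(\vx_{k+1}),\vx_{k+1}-\vx^*\rangle$. That quantity pays for the next step's error once $\tau\langle\nabla\cL_{\cH}(\vx_k),\vx_k-\vx^*\rangle$, with $\tau=\alpha\mu/(2\eps)$, is added to the potential. The Young parameter must then satisfy $\eps\le(1-\tau)\mu/4$, i.e.\ $\eps^2-\frac{\mu}{4}\eps+\frac{\alpha\mu^2}{8}\le 0$; this is where $\alpha<1/8$ enters. Bounding the extra term at $k=0$ by $\tau L\|\vx_0-\vx^*\|^2$ is what produces the factor $(1/\eta+L)$.
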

Importantly, in \cref{alg:FedProxyProx}, each $\vx_{k+1}$ is obtained by solving an optimization subroutine on the loss $\phi_k$ which the server can easily access. \cref{eq:precision_prox} captures the error of this subroutine, and both $c$ and $E^2$ can be made arbitrarily small by allocating more computational budget on the server side, i.e., without involving more communication or requiring more homogeneity or honest nodes. Notably, both $c$ and $E^2$ can be easily checked in practice. This subproblem can also be warm-started, starting the minimization from $\vx_k$, significantly reducing the computational burden.

The proof of \cref{thm:proxyprox_conv} extends the one of \citet{woodworth2023two}, who analyzed gradients method under similarity with only approximate proximal steps. In \cref{app:proof:proxyprox}, we introduce a new Lyapunov function to handle $(\zeta^2,\alpha)$-inexact gradients. We now instantiate \cref{thm:proxyprox_conv} in the Byzantine setting.

\begin{corollary}
    Let $(\cL_i)_{i \in \cH}$ satisfy \cref{asmpt:Pariwise_Hessian_Similarity}, and let $(G^2,B^2)$ be as in \cref{prop:similarity_to_GB_heter}. Assume $\cL_{\cH}$ is smooth and $\mu$-strongly convex. Let \cref{alg:FedProxyProx} use \cref{alg:reduction} with an $(f,\nu=\cO(\nicefrac{f}{n-2f}))$ robust-aggregation rule (\cref{def:f_kappa_robustness}) to compute $\tilde{\nabla}\cL$. Assume also that $\frac{f}{n-2f}\frac{\Delta}{\mu} \le C$, where $C$ is a numerical constant.
    
    Then, for a suitable step size $\eta = \nicefrac{1}{2\Delta}$, and a sufficiently small proximal error (\ref{eq:precision_prox}), \cref{alg:FedProxyProx} requires  
    \[\mathcal{O}\left(\frac{\mu}{\Delta}\log(1/\eps)\right)\] 
    communication rounds to guarantee
    \[
    \cL_{\cH}(\hat{\vx}_K) - \cL_{\cH}^* \in \mathcal{O}\left(\frac{f}{n-2f}\frac{G^2}{\mu} + \eps\right).
    \] 
\end{corollary}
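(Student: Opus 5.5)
The plan is to instantiate \cref{thm:proxyprox_conv} with the Byzantine oracle and then read off the contraction per communication round. Throughout, as in the paper's convention for \cref{cor:acceleration} and the introduction, $\mathcal{O}(\nicefrac{\mu}{\Delta}\log(1/\eps))$ is read as a linear convergence \emph{rate} $\Theta(\nicefrac{\mu}{\Delta})$ per round, i.e.\ each round contracts the optimization error by $1-\Theta(\nicefrac{\mu}{\Delta})$, and $\log(1/\eps)$ is the logarithmic factor in the target precision. I will establish exactly this contraction factor; I also record that the resulting number of rounds is $\mathcal{O}(\nicefrac{\Delta}{\mu}\log(1/\eps))$.

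\textbf{Step 1: oracle inexactness.} By the corollary following \cref{prop:similarity_to_GB_heter}, \cref{alg:reduction} with an $(f,\nu)$-robust rule gives an oracle satisfying \cref{eq:inexact_gradient_inner_product}. Its parameters are $\alpha=4\nu\Delta/\mu$ and $\zeta^2=\nu G^2$, with $G^2$ as in \cref{prop:similarity_to_GB_heter}. Since $\nu=\mathcal{O}(\nicefrac{f}{n-2f})$, the hypothesis $\frac{f}{n-2f}\frac{\Delta}{\mu}\le C$ gives $\alpha\le 4C'C$. Choosing the numerical constant $C$ small enough makes $\alpha<\nicefrac18$, which is the condition required by \cref{thm:proxyprox_conv}. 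In the same way, $\zeta^2=\mathcal{O}(\frac{f}{n-2f}G^2)$.

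\textbf{Step 2: parameters of the subproblem.} I take $\eta=\nicefrac{1}{2\Delta}$. The constraint $\eta\le(\Delta+8c/\mu)^{-1}$ holds as soon as $c\le \mu\Delta/8$. I therefore require the server's inner solver, warm-started at $\vx_k$, to reach \cref{eq:precision_prox} with $c\le\mu\Delta/8$ and $E^2\le \frac{f}{n-2f}G^2$. This is possible because $\phi_k$ is strongly convex and accessible without communication, so meeting these tolerances costs no extra rounds.

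\textbf{Step 3: reading off the rate.} Plugging these choices into \cref{thm:proxyprox_conv} gives
\[
\cL_{\cH}(\hat\vx_K)-\cL_{\cH}^*\le \Big(1+\frac{\mu}{16\Delta}\Big)^{1-K}2(2\Delta+L)\|\vx_0-\vx^*\|^2+\mathcal{O}\Big(\frac{f}{n-2f}\frac{G^2}{\mu}\Big).
\]
The first term contracts geometrically with per-round rate $\log(1+\frac{\mu}{16\Delta})=\Theta(\nicefrac{\mu}{\Delta})$. This uses $\nicefrac{\mu}{\Delta}\lesssim 1$, which holds in the regime of interest since $\Delta\ge\mu$ up to constants; in the degenerate case $\Delta\ll\mu$ the contraction is even faster. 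Making the first term at most $\eps$ requires $K-1\ge \log\!\big(2(2\Delta+L)\|\vx_0-\vx^*\|^2/\eps\big)/\log(1+\nicefrac{\mu}{16\Delta})$. In other words, the algorithm converges linearly at rate $\Theta(\nicefrac{\mu}{\Delta})$ up to the $\log(1/\eps)$ factor, which is the claimed $\mathcal{O}(\nicefrac{\mu}{\Delta}\log(1/\eps))$ behaviour; equivalently, $K=\mathcal{O}(\nicefrac{\Delta}{\mu}\log(1/\eps))$ rounds suffice. Each iteration uses exactly one call to \cref{alg:reduction}, so iterations and communication rounds coincide.

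\textbf{Main obstacle.} The delicate point is Step 1: keeping the multiplicative term under control. The threshold $\alpha<\nicefrac18$ couples $\nu$, $\Delta$ and $\mu$, and this is exactly why the corollary assumes $\frac{f}{n-2f}\frac{\Delta}{\mu}\le C$. I would verify this carefully by tracking the explicit constant in $\nu=\mathcal{O}(\nicefrac{f}{n-2f})$ from \cref{app:sec:robust_aggregation}. The remaining steps are routine substitutions, with the $\log$ factor absorbing $L$, $\Delta$ and the initial distance.
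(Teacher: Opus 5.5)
Your proposal is correct and is exactly the argument the paper intends (it gives no separate proof): plug the oracle bound \cref{eq:inexact_gradient_inner_product} with $\alpha=4\nu\Delta/\mu<\nicefrac{1}{8}$ and $\zeta^2=\nu G^2$ into \cref{thm:proxyprox_conv} with $\eta=\nicefrac{1}{2\Delta}$, $c\le\mu\Delta/8$, $E^2=\mathcal{O}(\frac{f}{n-2f}G^2)$, and read off $K=\mathcal{O}(\frac{\Delta}{\mu}\log(1/\eps))$ rounds. Two small remarks: the $\frac{\mu}{\Delta}$ in the statement is simply a typo for $\frac{\Delta}{\mu}$ (the introduction states $\mathcal{O}(\nicefrac{\Delta}{\mu}\log(1/\eps))$, and \cref{cor:acceleration} is also an iteration count, not a rate, so no ``rate'' reinterpretation is needed), and you should say explicitly that the proxy satisfies \cref{asmpt:Hessian_Similarity} with the same $\Delta$, e.g.\ $\hat{\cL}=\cL_{i_0}$ for a trusted honest client $i_0$, since $\nabla^2\cL_{\cH}-\nabla^2\cL_{i_0}=\frac{1}{|\cH|}\sum_{j\in\cH}(\nabla^2\cL_j-\nabla^2\cL_{i_0})$ has operator norm at most $\Delta$.
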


\begin{figure}[b]
\centering
\includegraphics[width=\columnwidth]{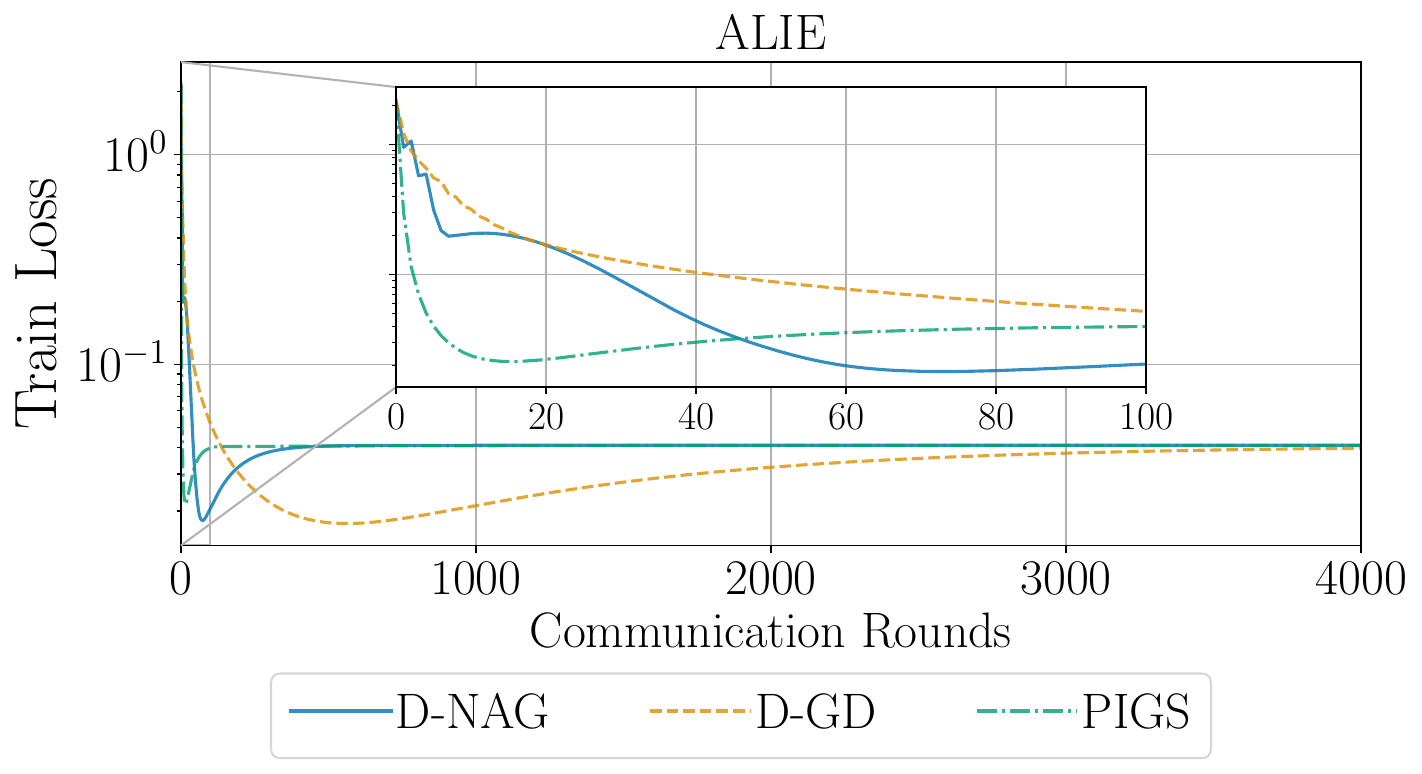}
\caption{Distributed Logistic Regression, with $20$ honest and $1$ Byzantine client in the mildly heterogeneous setting ($\beta=5$), with a $l2$-regularization $\mu = 10^{-2}$. Train loss $\cL_{\cH}(\vx_k) - \cL_{\cH}^*$ under A Little Is Enough (ALIE) attack.}
\label{plot:logistic_heterogeneous_loss}
\end{figure}

\begin{remark}
     Note that $\nicefrac{\Delta}{\mu}$ can be much smaller than $\kappa = \nicefrac{L}{\mu}$, the original condition number of the global loss $\cL_{\cH}$, thus significantly speeding-up optimization. Indeed, assume that $\cL_i(\vx) := \frac{1}{|\cD_i|}\sum_{a \in \cD_i}l(a,\vx) + \lambda/2\|\vx\|^2$, where each dataset $\cD_i$ consist of $m$ i.i.d. samples from the same distribution $\cD$. Assume $\|\nabla^2_{\vx}l(a,\vx)\|_{op} \le L$. %
     Then w.l.o.g., $\|\nabla^2 \cL_{\cH}(\vx)\|_{op} \le L$, while Hoeffding's inequality for matrices~\citep{tropp2015introduction} yields, with probability at least $1-\delta$, 
    \[
    \Delta = \|\nabla^2 \cL_1(\vx) - \nabla^2 \cL_{\cH}(\vx)\|_{op} \le \mathcal{O}\left(\sqrt{\frac{L^2\log(d/\delta)}{m}}\right).
    \]
    It follows $\frac{\mu}{\Delta} \le \mathcal{O}\left(\frac{\kappa}{\sqrt{m}}\log(d/\delta)\right)$. Thus, the iteration complexity of \cref{alg:FedProxyProx} to reach a given precision can be reduced by up to $\mathcal{O}(\sqrt{m})$ compared to Byzantine-robust D-GD, ignoring logarithmic factors. When local datasets are large, this yields a huge improvement in the communication complexity. For refined and stronger control of $\Delta$, including better dependencies on $m$, see \citet{hendrikx2020statistically}.
\end{remark}

\section{Experiments}
\label{sec:experiments}
\begin{figure*}[t]
\centering
\includegraphics[width=0.9\textwidth]{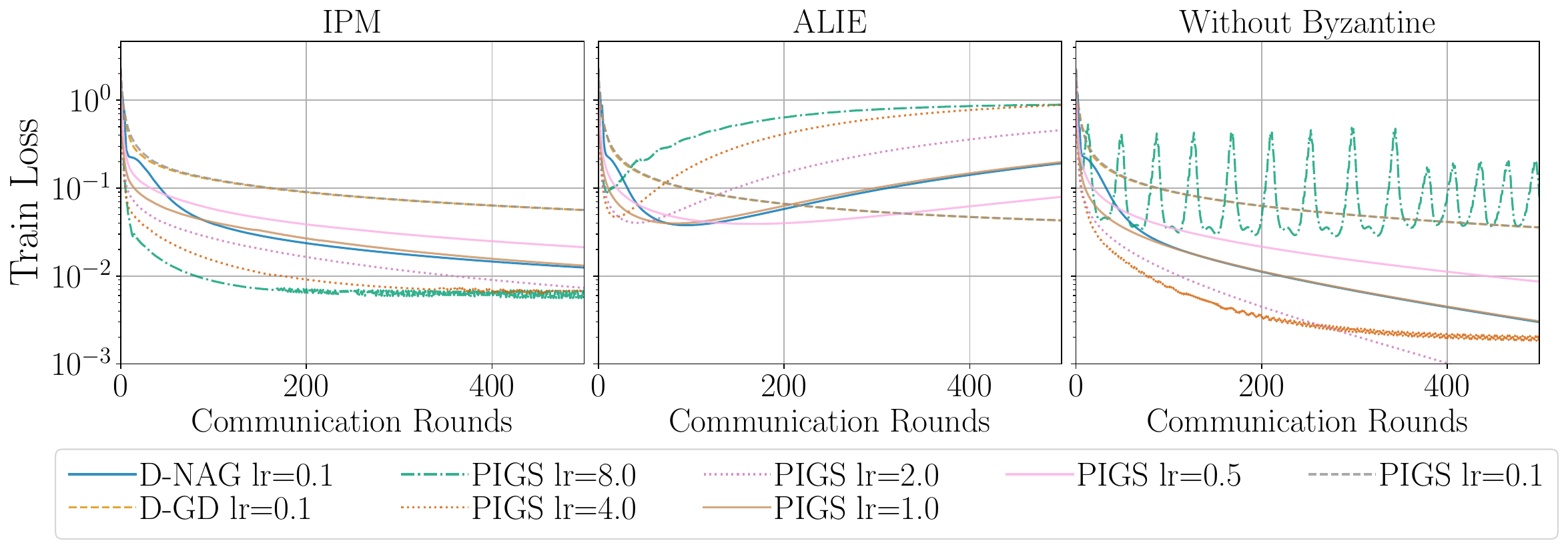}
\caption{Impact of the learning rate $\eta$ on PIGS, in a highly heterogeneous setting $(\beta=1)$ with $20$ honest clients and either with $1$ Byzantine performing IPM or ALIE attack, or without Byzantine client. $l2$-regularization with $\mu = 10^{-3}$.}
\label{plot:logistic_highly_heterogeneous_loss}
\end{figure*}
\subsection{Setup}
\textbf{Datasets and models.} We consider a Logistic Regression task on the MNIST dataset \citep{lecun1998gradient}. To simulate heterogeneity among clients, we follow the Dirichlet partitioning method from \citet{hsu2019measuring, allouah2023fixing}: each client’s dataset class proportions are sampled from a Dirichlet distribution with parameter $\beta$. Lower $\beta$ corresponds to higher heterogeneity. We refer to $\beta=1$ as \textit{highly heterogeneous} (\cref{plot:logistic_highly_heterogeneous_loss}) and to $\beta = 5$ (\cref{plot:logistic_heterogeneous_loss}) as \textit{mildly heterogeneous}. We also provide experiments with a random split of the training data among clients, referred to as the \textit{i.i.d.} setting (\cref{plot:logistic_iid_loss}). We reefer to \cref{app:experiments} for the test accuracy curves.

\textbf{Optimization Algorithms.} We compare Distributed Nesterov's Accelerated Gradient (D-NAG), Distributed Gradient Descent (D-GD) and PIGS. In all methods, inexact gradient are computed using \cref{alg:reduction}, with gradients aggregated using Nearest Neighbors Mixing (NNM) combined with Coordinate-Wise Trimmed Mean (CWTM). The proximal step in PIGS is solved approximately using L-BFGS.

\textbf{Attacks and parameters.} We tune optimizer parameters via hyperparameter search. The attacks implemented are variants of A Little Is Enough (ALIE) \citep{baruch2019little} and Inner Product Manipulation \citep{xie2020fall} where the attack is scaled using a line search to maximize impact. Our code, accessible \href{https://github.com/renaudgaucher/Inexact-gradient-acceleration-and-similarity}{here}, is built on top of the ByzFL library \citep{gonzalez2025byzflresearchframeworkrobust}.

\subsection{Analysis}

\begin{figure*}[t]
\centering
\includegraphics[width=0.9\textwidth]{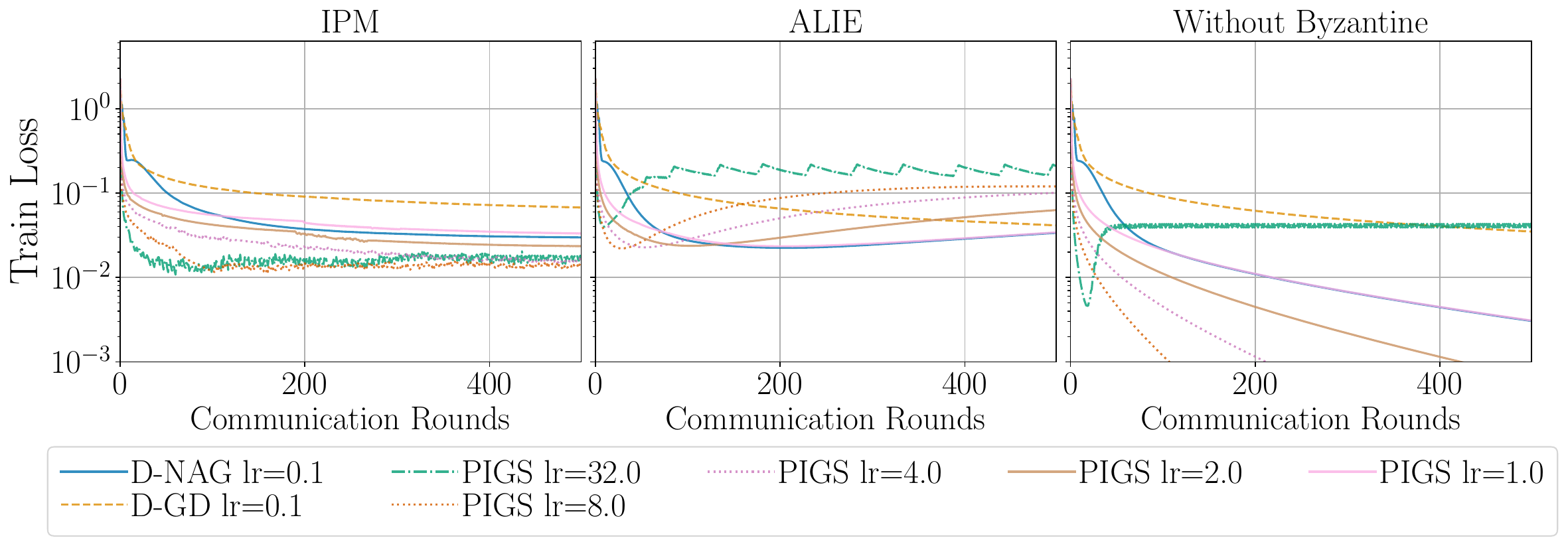}
\caption{Impact of the learning rate $\eta$ on PIGS,i.i.d. setting with $20$ honest clients and either with $5$ Byzantine performing IPM or ALIE attack, or without Byzantine client. $l2$-regularization with $\mu = 10^{-3}$.}
\label{plot:logistic_iid_loss}
\end{figure*}

\textbf{Communication Complexity.} In all experiments, PIGS achieves significantly better communication complexity than D-NAG and D-GD. For instance, in \cref{plot:logistic_heterogeneous_loss}, PIGS requires only $5$ iterations to go below the asymptotic error, while Nesterov's acceleration requires $41$ iterations, and D-GD $135$ iterations. As predicted by our theory, the faster rate of PIGS is derived from the stable use of larger step sizes when the proxy loss is similar to the global loss. For instance, in the highly heterogeneous setting (\cref{plot:logistic_highly_heterogeneous_loss}) $\eta=2$ seems the highest possible step size when $f=0$, while in the i.i.d. setting (\cref{plot:logistic_iid_loss}) $\eta = 8.$ still ensures convergence.

\textbf{Stability.} As shown in \cref{plot:logistic_highly_heterogeneous_loss,plot:logistic_iid_loss}, neither ALIE nor IPM attacks destabilize PIGS's convergence when the learning rate $\eta$ is large: it is more stable under the tested Byzantine attacks than without adversaries. This is obviously a limitation of current attacks, which focus on \textit{slowing} or \textit{biasing} the optimization, rather than destabilizing it. 

\textbf{Asymptotic Error.} As shown in \cref{plot:logistic_heterogeneous_loss} the three optimization algorithms converge to the same asymptotic loss. This raises the question of whether the $\sqrt{\kappa}$ multiplicative factor in the asymptotic error of \cref{thm:NAG_Inexact_Oracles} can be alleviated our not. Since existing attacks have been designed mostly to tackle Gradient Descent, this might be an artifact of their design. In order to make the $\sqrt{\kappa}$ multiplicative factor appear on the sub-optimality, attack should stress the weak points of accelerated methods, namely their instabilities. We leave this for future work.

\section{Additional Discussion on the Tightness of the Abstraction}

Designing Byzantine-robust algorithms via \cref{alg:reduction} and \cref{lemma:Byz_to_Inexact_Gradients} abstracts away the distributed nature of the problem. This raises the question of whether this abstraction is always tight, and whether it recovers the guarantees obtainable in the full distributed setting.

\textbf{Deterministic Case.} As pointed out in \cref{subsec:tightness}, the achievable error and the breakdown requirement of the abstraction are optimal, in the sense that GD achieves both an optimal asymptotic error and has an optimal breakdown requirement. On the other hand, the optimal convergence rate in the Byzantine context is unknown, and so is the optimal convergence rate with $(\zeta^2,\alpha)$-inexact oracles, even if several works have studied the impact of $\alpha$ on the convergence rate of Gradient methods under $\zeta^2=0$ \citep{de2020worst, gannot2022frequency}. 

\textbf{Stochastic Case.} It has been known since \citet{karimireddy2021learning} that using abstractions such as \cref{alg:reduction} cannot ensure reaching the optimal asymptotic error when the clients send unbiased stochastic oracles of their local loss gradient. Indeed, \citet{karimireddy2021learning} demonstrate that memory less optimization algorithms, which do not track stochastic oracles as deriving from specific clients, can only reach an optimization error in $\Omega(\frac{f}{n-f}\frac{\sigma^2}{\mu})$, where $\sigma^2$ is the scale of the stochastic noise. \\

\section{Conclusion}
In this work, we proposed to abstract away the problem of Byzantine Optimization by formalizing it as an inexact gradient oracle problem. Our approach opens up a new systematic approach to study and design Byzantine resilient algorithms. We leverage this to propose the first accelerated scheme in this context, as well as an optimization under similarity method. Those two methods have better communication complexity than previous ones, as evidenced experimentally and theoretically.

\section*{Impact Statement}

This paper presents work whose goal is to advance the field of Machine
Learning. There are many potential societal consequences of our work, none
of which we feel must be specifically highlighted here.

\bibliography{biblio}
\bibliographystyle{icml2026}

\newpage
\appendix
\onecolumn

\section*{Appendices}
The appendices are organized as follows:
\begin{enumerate}
    \item In \cref{app:sec:robust_aggregation}, we recall the robustness coefficients of standard aggregation rules. We also extend the mixing strategy of \citet{allouah2023fixing} using the F-Robust Gossip framework of \citet{gaucher2025unified}, enabling the design of tighter aggregation rules.
    \item In \cref{app:proofs}, we present the proofs of our theoretical results:
    \begin{enumerate}
        \item \cref{app:proofs_standard_tools} reviews the standard optimization tools used throughout the proofs.
        \item \cref{app:proof:proxyprox} provides the proof of \cref{thm:proxyprox_conv}.
        \item \cref{app:sec:algorithm_simplification} shows how the fast gradient method of \citet{devolder2014first} can be reformulated as a two-sequence algorithm.
    \end{enumerate}
    \item In \cref{app:experiments}, we present additional experiments, including the test accuracy curves corresponding to the main results.
\end{enumerate}

\section{Robust Aggregation Rules}
\label{app:sec:robust_aggregation}
We recall the list of $(f,\nu)$-robust aggregation rules established by \citet{allouah2023fixing}. 

\begin{table}[h]
    \centering
    \renewcommand{\arraystretch}{1.2}
    \begin{tabular}{c c c c c | c}
    \toprule
         Aggregation &  CWTM & Krum & GM & CWM & Lower Bound\\ %
    \midrule
         $\kappa$ & $\frac{6f}{n-2f}\left(1 + \frac{6f}{n-2f}\right)$ & $6\left(1 + \frac{6f}{n-2f}\right)$ & $4\left(1 + \frac{f}{n-2f}\right)^2$ & $4\left(1 + \frac{f}{n-2f}\right)^2$ & $\frac{f}{n-2f}$\\ %
    \bottomrule
         
    \end{tabular}
    \caption{Robustness coefficients of the coordinate-wise trimmed mean (CWTM), Krum, the geometric median (GM) and the coordinate-wise median (CWM)}
\end{table}

\citet{allouah2023fixing} suggest to improve aggregation rules by pre-processing the input values through a pre-aggregation step, and specifically the Nearest Neighbors Mixing (NNM) strategy. Which we extend in a formal definition:
\begin{definition}
    A map $F_1:\R^{n\times d} \rightarrow \R^{n\times d}$ is said to be a $(f, \delta)$ robust mixing if, when $|\cB| \le f$, and given $F_2$ a $(f,\nu)$-robust aggregation rule, $F_2 \circ F_1$ is a $(f, \tilde \nu)$-robust aggregation rule with
    \[
    \tilde \nu = \delta (1 + \nu).
    \]
\end{definition}
Below are listed few robust mixing, which are can be build generally using the robust-gossip framework, as we show in \cref{app:sec:robst_gossip_to_robust_mixing}.

\begin{table}[h]
    \centering
    \renewcommand{\arraystretch}{1.2}
    \begin{tabular}{c c c c | c}
        \toprule
         Pre-Aggregation & CS$_+$-RG & NNM/NNA & F-RG  & Lower Bound\\
         \midrule
         $\delta$ &   $\frac{4f}{n-f}$ & $\frac{8f}{n-f}$  & $\frac{2\rho f}{n-f}$&  \\
         Breakdown & $\frac{1}{5}$ & $\frac{1}{9}$ & $\frac{1}{2\rho + 1}$ &  $\frac{1}{3}$ \\
         \bottomrule
    \end{tabular}
    \caption{Robust Pre-Aggregation / Mixing Strategies. F in F-RG is a $(\rho, b=\frac{f}{n-f})$-robust summand. }
\end{table}

\subsection{From Byzantine-Robust Gossip to Byzantine-Robust Distributed Learning}
\label{app:sec:robst_gossip_to_robust_mixing}
\begin{algorithm}[h]
  \caption{Robust-Gossip based Pre-Aggregation}
  \label{app:alg:F_RG_preaggregation}
  \begin{algorithmic}
    \STATE {\bfseries Input:} $F_2$ $(\rho, \frac{f}{n-f})$-robust summand rule, $F_1$ $(f,\nu)$-robust aggregation rule, vectors input $(\vx_i)_{i\in [n]}$.
    \FOR{Client $i \in [n]$}
     \STATE Client Send $\vx_i$ to Server
    \ENDFOR
    \STATE Server Update $\vy_i = \vx_i - F( (\vx_i - \vx_j, \frac{1}{n-f})_{j\in [n]})$
  \end{algorithmic}
\end{algorithm}

Nearest Neighbor Mixing (NNM) can be seen as an emulation of the Nearest Neighbor Averaging (NNA) algorithm \citep{farhadkhani2023robust}. As such, the mixing strategy constitutes an intrinsically decentralized aggregation scheme, which can be further generalized within the Robust Gossip framework \citep{gaucher2025unified}. 
More precisely, if the server simulates a single iteration of a $F$-Robust Gossip \citep{gaucher2025unified}, with $F$ a $(b,\rho)$-robust summand, the resulting update satisfies the mixing property required by NNM. This allows to design even tighter pre-aggregation strategy.

$F$-Robust Gossip (or $F$-RG) is a decentralized algorithm, in the sense that all clients are seen as vertices in a communication graph $\cG$, in which they communicate synchronously with their neighbors through the edges. The algorithm relies on so-called $(b,\rho)$-robust summand functions, defined as follows:

\begin{definition}[$(b,\rho)$--robust summation] Let $b, \rho \ge 0$. An aggregation rule $F:(\R_+ \times \R^d)^n \rightarrow \R^d$ is a \emph{$(b, \rho)$--robust summation} if, for any vectors $(\vz_i)_{i \in [n]}\in (\R^d)^n$, any weights $(\omega_i)_{i \in [n]} \in \R_+^n$ and any $S \subset [n]$ such that $\sum_{i \in \overline{S}} \omega_i \le b$ (where $\overline{S} := [n]\backslash S$),
\[
\bigg\|F\big((\omega_i, \vz_i)_{i \in [n]}\big) - \sum_{i\in S} \omega_i \vz_i\bigg\|^2 \le \rho b \sum_{i\in S} \omega_i \|\vz_i\|^2.
\]
\end{definition}

Given $F$ a $(b,\rho)$-robust summand, and weights $(w_{ij})_{i,j\in [n]}$, one step of $F$-RG updates the client's parameter $(\vx_i)_{i \in [n]}$ according to
\[
\forall i \in \cH,\quad  \vy_i = \vx_i - \eta F( (\vx_i - \vx_j, w_{ij})_{j \in [n]}).
\]

One particular instance of $F$-RG simulated by a central server in a server–client architecture is the Nearest Neighbor Mixing (NNM) pre-aggregation scheme \citep{allouah2023fixing}. This method corresponds alternatively to NNA \citep{farhadkhani2023robust} or to $F$-RG with weights $w_{ij} = 1/(n-f)$ and step size $\eta = 1$ when $F$ is chosen as the geometrically trimmed sum (GTS), formally:
\[
\mathrm{GTS}\left((\vz_1,\nicefrac{1}{n-f}),\ldots, (\vz_n, \nicefrac{1}{n-f}) \right) := \frac{1}{n-f}\sum_{n-f \text{ smallest } \|\vz_i\|}\vz_i.
\]

\begin{proposition}
    Let's $\cG$ be the fully-connected graph on $[n]$ with uniform weights on the edges $\frac{1}{n-f}$. Let $F_1$ be a $(f,\nu)$-robust aggregator, and $F_2$ be a $(\rho,b)$ robust summand, with $b=\frac{f}{n-f}$. Then $F_1 \circ(F_2\text{-}RG)$ as in \cref{app:alg:F_RG_preaggregation} is a $(\tilde{\kappa},f)$-robust aggregator, with 
    \[
    \tilde{\kappa} = \delta (1+\kappa)\quad \text{where} \quad \delta := \frac{2\rho f}{n-f}.
    \]
    We refer to $\delta$ as the mixing contraction factor.
\end{proposition}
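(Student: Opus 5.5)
The plan is to bound the honest spread of the mixed vectors $(\vy_i)_{i \in S}$ around the honest mean $\overline{\vx}_S$ by a $\delta$-fraction of the original honest variance. I would then feed this into the $(f,\kappa)$-robustness of the final aggregator, and handle the resulting bias term with a Cauchy--Schwarz trick so that no factor $2$ appears. Throughout, I follow the statement's convention: the robust summand $F_2$ is used inside the gossip step (\cref{app:alg:F_RG_preaggregation} calls it $F$) and the aggregator $F_1$ is applied last. Fix any $S \subset [n]$ with $|S| = n-f$, and write $\overline{\vx} := \overline{\vx}_S$ and $V := \frac{1}{|S|}\sum_{i\in S}\|\vx_i - \overline{\vx}\|^2$.

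\textbf{Step 1 (one gossip step contracts honest spread).} Fix $i \in S$ and set $\vz_j = \vx_i - \vx_j$ with weights $\omega_j = \frac{1}{n-f}$. The total weight outside $S$ is $\frac{f}{n-f} = b$, so the $(b,\rho)$-robust summation property applies. The honest weighted sum is
\[
\sum_{j\in S}\tfrac{1}{n-f}(\vx_i - \vx_j) = \vx_i - \overline{\vx}.
\]
Since $\vy_i = \vx_i - F_2(\cdot)$, we get $\vy_i - \overline{\vx} = (\vx_i - \overline{\vx}) - F_2(\cdot)$, and hence
\[
\|\vy_i - \overline{\vx}\|^2 \le \rho b\, \frac{1}{n-f}\sum_{j\in S}\|\vx_i - \vx_j\|^2 = \rho b\left(\|\vx_i - \overline{\vx}\|^2 + V\right),
\]
using the bias--variance identity for the honest empirical distribution. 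Averaging over $i \in S$ gives
\[
M := \frac{1}{|S|}\sum_{i\in S}\|\vy_i - \overline{\vx}\|^2 \le 2\rho b\, V = \delta V .
\]
The Byzantine outputs $\vy_i$, $i\notin S$, are arbitrary, but this is harmless because $F_1$ is robust for any inputs outside $S$.

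\textbf{Step 2 (final aggregation).} Let $\vu := \overline{\vy}_S - \overline{\vx}$. Again by bias--variance, the honest variance of the $\vy$'s equals $M - \|\vu\|^2$. The $(f,\kappa)$-robustness of $F_1$ therefore gives
\[
\|F_1(\vy) - \overline{\vy}_S\| \le \sqrt{\kappa\,(M - \|\vu\|^2)} .
\]
By the triangle inequality,
\[
\|F_1(\vy) - \overline{\vx}\| \le \sqrt{\kappa}\,\sqrt{M-\|\vu\|^2} + 1\cdot\|\vu\| .
\]
Applying Cauchy--Schwarz to the two-term sum on the right yields
\[
\|F_1(\vy) - \overline{\vx}\|^2 \le (1+\kappa)\left(M - \|\vu\|^2 + \|\vu\|^2\right) = (1+\kappa) M \le \delta(1+\kappa) V .
\]
This is exactly $(f,\tilde\kappa)$-robustness with $\tilde\kappa = \delta(1+\kappa)$.

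The main obstacle is avoiding a lossy $\|a+b\|^2 \le 2\|a\|^2 + 2\|b\|^2$ split in Step 2, which would give $2\delta(1+\kappa)$ instead of the claimed constant. The fix above is to keep the exact variance $M - \|\vu\|^2$ rather than bounding it by $M$, and only then combine the two terms by Cauchy--Schwarz. A secondary point is checking in Step 1 that the weights match the summand's requirement $\sum_{j\notin S}\omega_j \le b$ with $b = \frac{f}{n-f}$, which is exactly why the uniform weights $\frac{1}{n-f}$ and step size $1$ are chosen.
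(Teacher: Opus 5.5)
Your proof is correct, and its second half is the paper's argument. Your triangle inequality followed by Cauchy--Schwarz on $\sqrt{\kappa}\,A + 1\cdot B$ is exactly the paper's Young inequality with parameter $\eps=\kappa$. Both arguments then close with the bias--variance identity $\frac{1}{|S|}\sum_{i\in S}\|\vy_i-\overline{\vy}_S\|^2+\|\overline{\vy}_S-\overline{\vx}_S\|^2=\frac{1}{|S|}\sum_{i\in S}\|\vy_i-\overline{\vx}_S\|^2$.

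The genuine difference is the contraction step $M\le\delta V$. The paper does not derive it. It imports the general F-RG contraction bound of \citet{gaucher2025unified}, which holds on an arbitrary graph with factor $1-\gamma(1-\delta)$, $\delta=2\rho b/\mu_2(\cG_{\cH})$, under the side condition $b<\mu_2(\cG_{\cH})$. It then specializes this to the complete graph with weights $\frac{1}{n-f}$ and step size $1$, where $\gamma=\mu_2=\mu_{\max}=1$.

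You instead apply the $(b,\rho)$-robust summation definition directly to $\vz_j=\vx_i-\vx_j$. You observe that the honest weighted sum equals $\vx_i-\overline{\vx}_S$, so $\vy_i-\overline{\vx}_S$ is exactly the summation error. You then average using $\frac{1}{n-f}\sum_{j\in S}\|\vx_i-\vx_j\|^2=\|\vx_i-\overline{\vx}_S\|^2+V$.

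What your route buys: it is self-contained, uses no spectral quantities, and needs no side condition on $b$. It is also stated for an arbitrary $S$ of size $n-f$, which is what the robustness definition requires, whereas the paper phrases it for $\cH$ only. What the paper's route buys: it shows how the same mixing argument extends to sparser graphs such as expanders, as noted in the remark following the proposition.
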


\begin{proof}
    
    \citet{gaucher2025unified} show that, for graph $\cG$ of spectral gap $\gamma$, $\rm F_2-RG$, with $F_2$ a $(b,\rho)$-robust aggregator with step size $\eta = \mu_{\max}^{-1}(\cG_{\cH})$, ensures that, for $(\vy_i)_{i \in [n]} = \mathrm{F_2-RG}(\vx_i, i\in [n])$, and $b < \mu_2(\cG_{\cH})$
    \begin{equation}
    \label{app:frg_contraction}
    \frac{1}{|\cH|}\sum_{i\in \cH} \|\vy_i - \overline{\vx}_{\cH}\|^2 \le (1 - \gamma(1 - \delta))\frac{1}{|\cH|}\sum_{i \in \cH}\|\vx_i - \overline{\vx}_{\cH}\|^2,
    \end{equation}
    where $\delta = \frac{2\rho b}{\mu_2(\cG_{\cH})}$.
    
    Thus, plugging the $\vy_i$ in $F_2$ leads to 
    \begin{align*}
        \|F_2(\vy_1,\ldots,\vy_n) - \overline{\vx}_{\cH}\|^2 &\le (1 + \eps^{-1}) \|F_2(\vy_1,\ldots,\vy_n) - \overline{\vy}_{\cH}\|^2 + (1 + \eps) \|\overline{\vy}_{\cH} - \overline{\vx}_{\cH}\|^2 && \text{Young's inequality}\\
        &\le \kappa (1 + \eps^{-1}) \frac{1}{|\cH|}\sum_{i\in\cH}\|\vy_i - \overline{\vy}_{\cH}\|^2 + (1 + \eps) \|\overline{\vy}_{\cH} - \overline{\vx}_{\cH}\|^2 && (f,\kappa)\text{-robustness}\\
        &= (1+\kappa)\left(\frac{1}{|\cH|}\sum_{i\in\cH}\|\vy_i - \overline{\vy}_{\cH}\|^2 + \|\overline{\vy}_{\cH} - \overline{\vx}_{\cH}\|^2 \right) \quad \text{for } \eps = \kappa\\
        &\le (1+\kappa)(1 - \gamma(1-\delta)) \frac{1}{|\cH|}\sum_{i \in |\cH|}\|\vx_i - \overline{\vx}_{\cH}\|^2\ , 
    \end{align*}   
    where the last inequality stems from \cref{app:frg_contraction} and a bias-variance decomposition.

    Thus, if the central server emulates one step of $F_2$-RG on a fully connected graph, with uniform weights $\frac{1}{|\cH|}$, we have that $\gamma=1$, $\mu_{\max}(\cG_{\cH}) = \mu_{\min}(\cG_{\cH}) = \eta^{-1} = 1$, and thus $\delta = \frac{2\rho f}{n-f}$, and use $F$ on the outputs (which corresponds to NNA for G$=$Trimmed Sum), we have that
    \[
    \tilde{\kappa} = \delta(1+\kappa).
    \]
\end{proof}
\begin{remark}
    Considering sparser graphs with large spectral gap (e.g. expander) leads to similar results, while reducing the computational cost of $O(n^2 d)$, without loosing much on $\gamma$ and $\delta$.
\end{remark}

\section{Proofs}
\label{app:proofs}
\subsection{Introduction of Standard Optimization tools}
\label{app:proofs_standard_tools}
In the proof, we  use several standard optimization tools, that we recall hereafter.
\begin{definition}[Bregman Divergence]
    The Bregman divergence of a real-valued function $f$ is defined as 
    \[
    D_f(\vx,\vy) := f(\vx) - f(\vy) - \langle \nabla f(\vy), \vx - \vy \rangle.
    \]
\end{definition}
The Bregman divergence has the following properties:
\begin{proposition}
    \begin{enumerate}
        \item A function $f$ is $\mu$-strongly convex and $L$-smooth, if and only if
        \begin{equation}
        \label{app:eq:smoothness_strongcvx_bregman}
        \forall \vx,\vy \in \R^d, \qquad \frac{\mu}{2}\|\vx - \vy\|^2 \le D_f(\vx;\vy) \le \frac{L}{2}\|\vx-\vy\|^2.
        \end{equation}
        \item \textbf{Three points identity.} Let $\vx,\vy,\vz \in \R^d$, it holds:
        \begin{equation}
        \label{app:eq:three_points_identity}
        D_{f}(\vx;\vy) + D_{f}(\vy;\vz) - D_{f}(\vx;\vz) = \langle  \nabla f(\vy) - \nabla f(\vz), \vy - \vx\rangle.
        \end{equation}
    \end{enumerate}
\end{proposition}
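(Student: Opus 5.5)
The statement has two parts: the sandwich characterization of strong convexity and smoothness via the Bregman divergence, and the three points identity. I would prove them in that order, the first by reduction to the definitions and the second by direct expansion.

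\textbf{Part 1.} The lower bound $D_f(\vx;\vy)\ge \frac{\mu}{2}\|\vx-\vy\|^2$ is, after rearranging, exactly the inequality in \cref{def:strong_convexity_and_smoothness}. So that half of the equivalence is immediate in both directions.

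For the upper bound, I would assume $f$ is $L$-smooth and write
\[
D_f(\vx;\vy)=\int_0^1\langle \nabla f(\vy+t(\vx-\vy))-\nabla f(\vy),\vx-\vy\rangle\,dt .
\]
Cauchy--Schwarz together with the Lipschitz bound gives an integrand of at most $tL\|\vx-\vy\|^2$. Integrating yields $\frac{L}{2}\|\vx-\vy\|^2$.

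The converse direction is the delicate step: the upper bound on $D_f$ alone only gives a one-sided curvature bound, so convexity must be used. Here $\mu\ge 0$, so $f$ is convex. I would then invoke the standard co-coercivity argument (Nesterov, Thm.~2.1.5). Apply the upper bound to the function $\phi(\vz)=f(\vz)-\langle\nabla f(\vy),\vz\rangle$, which is convex with minimizer $\vy$. This gives $D_f(\vx;\vy)\ge \frac{1}{2L}\|\nabla f(\vx)-\nabla f(\vy)\|^2$. Summing this with its symmetric counterpart yields
\[
\langle\nabla f(\vx)-\nabla f(\vy),\vx-\vy\rangle\ge \tfrac1L\|\nabla f(\vx)-\nabla f(\vy)\|^2 ,
\]
and Cauchy--Schwarz then gives the Lipschitz property.

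\textbf{Part 2.} This is pure algebra. I would expand the three divergences using the definition. The function values cancel: $f(\vx)-f(\vy)+f(\vy)-f(\vz)-f(\vx)+f(\vz)=0$. The remaining terms are
\[
-\langle\nabla f(\vy),\vx-\vy\rangle-\langle\nabla f(\vz),\vy-\vz\rangle+\langle\nabla f(\vz),\vx-\vz\rangle .
\]
The last two combine to $\langle\nabla f(\vz),\vx-\vy\rangle$. The total is therefore $\langle \nabla f(\vy)-\nabla f(\vz),\vy-\vx\rangle$, as claimed.

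The only real obstacle in the whole statement is the converse of the smoothness part, and it is handled by the standard co-coercivity argument above.
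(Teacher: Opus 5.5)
Your proof is correct. The paper gives no proof of this proposition; it only recalls it as a standard tool. Your arguments are the textbook facts the paper relies on: the definition for the lower bound, the integral form of $D_f$ for the upper bound, the co-coercivity argument applied to $\vz\mapsto f(\vz)-\langle\nabla f(\vy),\vz\rangle$ for the converse, and direct expansion for the three points identity.

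You were right to read $L$-smoothness as $\|\nabla f(\vx)-\nabla f(\vy)\|\le L\|\vx-\vy\|$, which is what the wording of \cref{def:strong_convexity_and_smoothness} intends. The displayed squared inequality there has $L$ where $L^2$ is needed, and taken literally it would make part 1 false. You were also right to point out that the converse direction uses $\mu\ge 0$.
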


Furthermore, we will  use the following property of strongly convex functions:

\begin{proposition}
    Let $f:\R^d \rightarrow \R$ be a $\mu$-strongly convex $L$-smooth loss function, then:
    \begin{equation}
        \label{app:eq:cocoaxivity_strong_convexity}
            \langle\nabla f(\vx) - \nabla f(\vy), \vx - \vy\rangle \le \frac{1}{\mu}\|\nabla f(\vx) - \nabla f(\vy)\|^2.
        \end{equation}
\end{proposition}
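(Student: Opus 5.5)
The inequality follows from the Bregman-divergence characterization of smoothness and strong convexity \eqref{app:eq:smoothness_strongcvx_bregman}, together with co-coercivity. Set $\vg := \nabla f(\vx) - \nabla f(\vy)$ and $\vu := \vx - \vy$; the claim is $\langle \vg, \vu\rangle \le \frac{1}{\mu}\|\vg\|^2$.

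First, I will use strong convexity alone. Adding the two strong-convexity inequalities (one with $\vx,\vy$ swapped) gives $\langle \vg, \vu\rangle \ge \mu \|\vu\|^2$. Cauchy--Schwarz then gives $\mu\|\vu\|^2 \le \langle \vg,\vu\rangle \le \|\vg\|\,\|\vu\|$, so $\|\vu\| \le \|\vg\|/\mu$.

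Second, I will plug this back into Cauchy--Schwarz: $\langle \vg,\vu\rangle \le \|\vg\|\,\|\vu\| \le \frac{1}{\mu}\|\vg\|^2$, which is the desired inequality. Smoothness is not needed for this bound, only strong convexity and differentiability.

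The only point needing care is the degenerate case $\vu = 0$. There both sides are trivially ordered, since the left side is $0$ and the right side is nonnegative. If $\vu\neq 0$, dividing by $\|\vu\|$ is legitimate. I do not expect a real obstacle.

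If a sharper statement were needed, I would instead invoke the classical interpolation bound $\langle \vg,\vu\rangle \ge \frac{\mu L}{\mu+L}\|\vu\|^2 + \frac{1}{\mu+L}\|\vg\|^2$. The elementary argument above already suffices for the stated inequality.
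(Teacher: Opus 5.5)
Your proof is correct, but it takes a different route from the paper: yours is elementary and purely primal, while the paper argues by duality. The paper uses that $f^*$ is $\frac{1}{\mu}$-smooth because $f$ is $\mu$-strongly convex. It writes the corresponding inner-product bound for $\nabla f^*$, then evaluates it at the points $\nabla f(\vx)$ and $\nabla f(\vy)$, using $\nabla f^* = (\nabla f)^{-1}$ so that $\nabla f^*(\nabla f(\vx)) = \vx$. You instead derive strong monotonicity $\langle \vg, \vx - \vy\rangle \ge \mu\|\vx - \vy\|^2$ directly from the definition and apply Cauchy--Schwarz twice. The two arguments carry the same content. Your intermediate bound $\|\vx - \vy\| \le \frac{1}{\mu}\|\nabla f(\vx) - \nabla f(\vy)\|$ is exactly the $\frac{1}{\mu}$-Lipschitz continuity of $\nabla f^* = (\nabla f)^{-1}$. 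Your last Cauchy--Schwarz step is what turns that Lipschitz property into the paper's inner-product inequality. Your version buys self-containment: it needs no facts about conjugates (differentiability and smoothness of $f^*$, invertibility of $\nabla f$), and it holds verbatim for any $\mu$-strongly monotone map. The paper's version is a one-liner once standard duality is granted, and it shows the bound to be the dual reading of smoothness. As you note, neither uses $L$-smoothness. One small aside: the interpolation inequality in your last paragraph is a \emph{lower} bound on $\langle \vg, \vx - \vy\rangle$, so it could not sharpen this upper bound. It is irrelevant to the proof but harmless.
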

\begin{proof}
        We use that the Fenchel transform of $f$, defined as $f^*:\vu \rightarrow \sup_{\vx \in \R^d}\{\langle \vx,\vu\rangle - f(\vx)\}$, is $1/\mu$ smooth. 
        This ensures that 
        \[
        \forall \vv, \vu \in \R^d,\quad \langle \vu - \vv, \nabla f^*(\vu) - \nabla f^*(\vv)\rangle \le \frac{1}{\mu}\|\vu - \vv\|^2.
        \]
        Then, using Fenchel's identity, which here can be written as $\nabla f = (\nabla f^*)^{-1}$, and considering $\vx = \nabla f^*(\vu)$ and $\vy = \nabla f^*(\vv)$ yields the result.
\end{proof}

\subsection{Optimization under Similarity}
\label{app:proof:proxyprox}
We  show the following theorem, of which \cref{thm:proxyprox_conv} is a direct consequence.

\begin{theorem}
\label{app:thm:proxy_prox_inexact}
   Let $f:\R^d \rightarrow \R$ be a $\mu$-strongly convex and $L_f$-smooth loss function, written as $f = h + g$, where $h$ is $L_h$-smooth (i.e. $\|\nabla^2 h(\vx)\|_{op} \le L_h$). 
Assume that we have access to an inexact gradient oracle $\tilde \nabla h$ satisfying
\begin{equation}
\label{app:eq:inexact_gradient_scalar_product}
\forall \vx\in\R^d, \quad\|\tilde \nabla h(\vx) - \nabla h(\vx)\|^2 \le \zeta^2 + \alpha \mu \langle \nabla f(\vx), \vx - \vx^* \rangle,
\end{equation}
where \(\vx^* := \argmin_{\vx\in \R^d}f(\vx)\) and $\alpha < \nicefrac{1}{8}$.
   Consider $(\vx_{k})_{k\ge 0}$ such that for any $k$, given 
   \begin{equation}
   \label{app:eq:def_phi_k}
       \tilde{\phi}_k(\vx) := \langle \tilde{\nabla} h(\vx_k) + \nabla g(\vx_k), \vx \rangle + D_{g}(\vx;\vx_k) + \frac{1}{2\eta}\|\vx - \vx_k\|^2,
   \end{equation}
$\vx_{k+1}$ is chosen such that
\begin{equation}
\label{app:eq:error_in_prox}
\|\nabla \tilde{\phi} (\vx_{k+1})\|^2 \le c\|\vx_{k+1} - \vx_k\|^2 + E^2.
\end{equation}

Denote $\beta_k = (1 + \frac{\eta \mu}{8})^{k}$, and $B_K = \sum_{k=0}^K \beta_k$. Then for $\eta \le \frac{1}{L_h + 8c/\mu}$, and $\hat{\vx}_k := \frac{1}{B_K}\sum_{k=0}^K \beta_k \vx_k$, we have

\[
f(\hat\vx_{K}) - f(\vx^*) \le \left(1 + \frac{\eta \mu}{8}\right)^{1-K}2(\eta^{-1} + L_f)\|\vx_0 - \vx^*\|^2 + 16\frac{\zeta^2 + E^2}{\mu}.
\]
\end{theorem}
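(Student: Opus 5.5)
We analyze one step of the inexact proximal-gradient scheme with a Lyapunov function and then sum with geometric weights; this follows \citet{woodworth2023two}. Note that $h$ here plays the role of $\cL_{\cH}-\hat{\cL}$, up to relabelling, so that $g$ is the proxy and $h$ has Hessian bounded by $L_h=\Delta$.

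\textbf{Step 1: one-step inequality.} Write the optimality residual of the inexact prox step as $\vr_{k}:=\nabla\tilde\phi_k(\vx_{k+1})$. Expanding it gives
\[
\tilde\nabla h(\vx_k)+\nabla g(\vx_{k+1})+\eta^{-1}(\vx_{k+1}-\vx_k)=\vr_k,
\]
with $\|\vr_k\|^2\le c\|\vx_{k+1}-\vx_k\|^2+E^2$. Decompose the oracle error as $\ve_k:=\tilde\nabla h(\vx_k)-\nabla h(\vx_k)$. Take the inner product of this identity with $\vx_{k+1}-\vx^*$.

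For the $g$-part, use the three points identity (\ref{app:eq:three_points_identity}) on $g$. For the $h$-part, write $\langle\nabla h(\vx_k),\vx_{k+1}-\vx^*\rangle$ and use $L_h$-smoothness of $h$, together with the three points identity, to produce $h(\vx_{k+1})-h(\vx^*)$ up to $D_h(\vx^*;\vx_k)\ge0$ and $\frac{L_h}{2}\|\vx_{k+1}-\vx_k\|^2$. For the prox term, use $2\langle \vx_{k+1}-\vx_k,\vx_{k+1}-\vx^*\rangle=\|\vx_{k+1}-\vx^*\|^2-\|\vx_k-\vx^*\|^2+\|\vx_{k+1}-\vx_k\|^2$.

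This should yield
\[
f(\vx_{k+1})-f^*+\tfrac{1}{2\eta}\|\vx_{k+1}-\vx^*\|^2 \le \tfrac{1}{2\eta}\|\vx_k-\vx^*\|^2-\big(\tfrac{1}{2\eta}-\tfrac{L_h}{2}\big)\|\vx_{k+1}-\vx_k\|^2+\langle \vr_k-\ve_k,\vx_{k+1}-\vx^*\rangle .
\]

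\textbf{Step 2: absorbing the errors.} This is the main obstacle. Apply Young's inequality with weight $\mu/8$:
\[
\langle \vr_k-\ve_k,\vx_{k+1}-\vx^*\rangle\le \tfrac{4}{\mu}\big(\|\vr_k\|^2+\|\ve_k\|^2\big)+\tfrac{\mu}{8}\|\vx_{k+1}-\vx^*\|^2 .
\]
The $\|\vr_k\|^2$ term contributes $\frac{4c}{\mu}\|\vx_{k+1}-\vx_k\|^2$, which the negative movement term cancels because $\eta\le (L_h+8c/\mu)^{-1}$. It also contributes $4E^2/\mu$.

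The difficulty is that $\|\ve_k\|^2\le\zeta^2+\alpha\mu\langle\nabla f(\vx_k),\vx_k-\vx^*\rangle$ is evaluated at $\vx_k$, not at $\vx_{k+1}$. This is where the new Lyapunov function enters: carry a term proportional to $f(\vx_k)-f^*$, or to $\langle\nabla f(\vx_k),\vx_k-\vx^*\rangle$, from the previous step.

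To control this term, I would first try bounding $\langle\nabla f(\vx_k),\vx_k-\vx^*\rangle$ by $2(f(\vx_k)-f^*)+\frac{L_f}{?}\|\vx_k-\vx^*\|^2$-type quantities. Alternatively, I would rewrite $\ve_k$ at $\vx_{k+1}$ using $\|\vx_{k+1}-\vx_k\|^2$ and smoothness of $f$. Either way, with $\alpha<1/8$ the factor $4\alpha$ in front should be at most $1/2$, so half of $f(\vx_{k+1})-f^*$ (or of the carried term) absorbs it.

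Strong convexity, $f(\vx_{k+1})-f^*\ge\frac{\mu}{2}\|\vx_{k+1}-\vx^*\|^2$, then pays for the $\frac{\mu}{8}\|\vx_{k+1}-\vx^*\|^2$ term. What remains is a contraction $\Psi_{k+1}(1+\eta\mu/8)\le\Psi_k$ plus an additive term. Here
\[
\Psi_k=\tfrac{1}{2\eta}\|\vx_k-\vx^*\|^2+\text{(carried term)},
\]
and the additive term is $\frac{c'}{\ldots}$ together with a residual $\frac{f(\vx_{k+1})-f^*}{4}$-type gain on the left.

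\textbf{Step 3: summation.} Multiply the step-$k$ inequality by $\beta_{k+1}$ and sum over $k$; the distance terms telescope. Then use convexity (Jensen) on the weighted average $\hat\vx_K$. The normalization satisfies $B_K\ge\beta_K\ge(1+\eta\mu/8)^{K-1}$.

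For the initial term, bound $\Psi_0$ by $(\eta^{-1}+L_f)\|\vx_0-\vx^*\|^2$, using $f(\vx_0)-f^*\le\frac{L_f}{2}\|\vx_0-\vx^*\|^2$ for the carried term. The noise terms $\frac{4}{\mu}(\zeta^2+E^2)$, after dividing by the function-value coefficient of at least $1/4$, give $16(\zeta^2+E^2)/\mu$. Finally, constants are tracked loosely to match the factors $2$ and $16$.
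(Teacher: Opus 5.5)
Your overall architecture (one-step identity, Young with weight $\mu/8$, a carried error term in a Lyapunov function, geometric summation) is the paper's, but there is a genuine gap, and it starts in Step 1. There you use $D_h(\vx^*;\vx_k)\ge 0$, and your displayed inequality also silently drops $D_g(\vx^*;\vx_{k+1})$. Neither $h$ nor $g$ is assumed convex. The function $h$ only satisfies $\|\nabla^2 h\|_{op}\le L_h$, and in the application $h=\cL_{\cH}-\hat{\cL}$ has an indefinite Hessian. The function $g=f-h$ is then only guaranteed to satisfy $D_g(\vy;\vx)\ge\frac{\mu-L_h}{2}\|\vy-\vx\|^2$. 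With your sign convention for $\ve_k$, your computation actually yields the identity
\[
\begin{aligned}
f(\vx_{k+1})-f^*+\tfrac{1}{2\eta}\|\vx_{k+1}-\vx^*\|^2={}&\tfrac{1}{2\eta}\|\vx_k-\vx^*\|^2-\tfrac{1}{2\eta}\|\vx_{k+1}-\vx_k\|^2+D_h(\vx_{k+1};\vx_k)\\
&-D_h(\vx^*;\vx_k)-D_g(\vx^*;\vx_{k+1})+\langle\nabla\tilde\phi_k(\vx_{k+1})-\ve_k,\vx_{k+1}-\vx^*\rangle .
\end{aligned}
\]
Bounding the two offending terms crudely, by $\frac{L_h}{2}\|\vx_k-\vx^*\|^2$ and $\frac{L_h-\mu}{2}\|\vx_{k+1}-\vx^*\|^2$, destroys the contraction as soon as $L_h\ge\mu$, which is exactly the regime of interest. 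The missing idea is to write $-D_g(\vx^*;\vx_{k+1})=-D_f(\vx^*;\vx_{k+1})+D_h(\vx^*;\vx_{k+1})$. Two things then follow.
\begin{itemize}
\item The quantity $\frac{1}{2\eta}\|\vx_k-\vx^*\|^2-D_h(\vx^*;\vx_k)$ telescopes, and since $\eta^{-1}\ge L_h$ it lies in $[0,\frac{\eta^{-1}+L_h}{2}\|\vx_k-\vx^*\|^2]$. So this quantity, not $\frac{1}{2\eta}\|\vx_k-\vx^*\|^2$ alone, must be the distance part of your Lyapunov function.
\item The term $-D_f(\vx^*;\vx_{k+1})\le-\frac{\mu}{2}\|\vx_{k+1}-\vx^*\|^2$ is where strong convexity enters.
\end{itemize}

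Keeping $D_f(\vx^*;\vx_{k+1})$ is also what lets Step 2 close, which as written it does not. To reproduce the carried term at step $k+1$ you must pay
\[
\tau\langle\nabla f(\vx_{k+1}),\vx_{k+1}-\vx^*\rangle=\tau(f(\vx_{k+1})-f^*)+\tau D_f(\vx^*;\vx_{k+1}).
\]
But $D_f(\vx^*;\vx)$ is not bounded by an absolute constant times $f(\vx)-f^*$; the ratio can be of order $\sqrt{L_f/\mu}$. So ``half of $f(\vx_{k+1})-f^*$ absorbs it'' only works for $\alpha$ shrinking with the conditioning of $f$, not for every $\alpha<1/8$. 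Your two alternatives (bounding through $\frac{L_f}{2}\|\vx_k-\vx^*\|^2$, or transporting $\ve_k$ to $\vx_{k+1}$ via smoothness of $f$) likewise import $L_f$ and break the $\eta\approx 1/L_h$ step size.

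The paper proceeds as follows. It takes
\[
\Gamma_k=\tfrac{1}{2\eta}\|\vx_k-\vx^*\|^2-D_h(\vx^*;\vx_k)+\tau\langle\nabla f(\vx_k),\vx_k-\vx^*\rangle,\qquad \tau=\tfrac{\alpha\mu}{2\eps}.
\]
It adds $\tau D_f(\vx^*;\vx_{k+1})$ to both sides, so that the fraction $1-\tau$ of both $f(\vx_{k+1})-f^*$ and $D_f(\vx^*;\vx_{k+1})$ remains available. It then extracts the factor $1+\frac{\eta\mu}{8}$ from
\[
\Gamma_{k+1}\le(\rho^{-1}+\tau)D_f(\vx^*;\vx_{k+1})+\tau(f(\vx_{k+1})-f^*),\qquad \rho=\tfrac{\mu}{\eta^{-1}+L_h}.
\]
The hypothesis on $\alpha$ enters through the Young parameter: one needs $\eps\le(1-\frac{\alpha\mu}{2\eps})\frac{\mu}{4}$. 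This is solvable only when $\alpha\le 1/8$, for instance with $\eps=\frac{\mu}{8}(1+\sqrt{1-8\alpha})$.
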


Note that we did lose some constant factors for the sake of clarity.

\subsubsection{Proof of \cref{app:thm:proxy_prox_inexact}}
This proof is an adaptation of the proof of \citet{woodworth2023two}, in which we modify the Lyapunov function to allow for the multiplicative error term (i.e $\alpha \neq 0$) in the inexact gradient assumption. 

We first show the following decomposition

\begin{lemma}
\label{app:lemma:proof_proxyprox_1}
Denote $\ve_k := \nabla h(\vx_k) - \tilde \nabla h(\vx_k)$, such that, by definition $\|\ve_k\|^2 \le \zeta^2 + \alpha \mu \langle \nabla f(\vx_k),\vx_k - \vx^*\rangle$. Then the following identity holds:
    \begin{align*}
        f(\vx_{k+1}) - f(\vx^*) &=  \left(\frac{1}{2\eta}\|\vx_{k}- \vx^*\|^2 - D_{h}(\vx^*;\vx_{k})\right) - \left(\frac{1}{2\eta}\|\vx_{k+1}- \vx^*\|^2 - D_{h}(\vx^*;\vx_{k+1}) \right) \\
        &- D_f(\vx^*;\vx_{k+1})\\
        & + \langle \nabla \tilde \phi_k(\vx_{k+1}) + \ve_k, \vx_{k+1} - \vx^* \rangle \\
        &+ D_h(\vx_{k+1};\vx_k) - \frac{1}{2\eta}\|\vx_{k}- \vx_{k+1}\|^2.
    \end{align*}
\end{lemma}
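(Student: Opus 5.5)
The identity is purely algebraic: compute $\nabla\tilde\phi_k(\vx_{k+1})$ explicitly, add $\ve_k$, pair the result with $\vx_{k+1}-\vx^*$, and expand each piece using the Bregman tools of \cref{app:proofs_standard_tools}. No inequality is needed, and \eqref{app:eq:inexact_gradient_scalar_product} is not used here; it enters only later, when the term $\langle \ve_k, \vx_{k+1}-\vx^*\rangle$ is bounded.

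\textbf{Step 1 (gradient of the subproblem).} Since $\nabla_{\vx} D_g(\vx;\vx_k) = \nabla g(\vx) - \nabla g(\vx_k)$, definition \eqref{app:eq:def_phi_k} gives
\[
\nabla\tilde\phi_k(\vx_{k+1}) = \tilde\nabla h(\vx_k) + \nabla g(\vx_{k+1}) + \tfrac{1}{\eta}(\vx_{k+1}-\vx_k).
\]
Adding $\ve_k = \nabla h(\vx_k) - \tilde\nabla h(\vx_k)$ and using $f = h+g$, we obtain
\[
\nabla\tilde\phi_k(\vx_{k+1}) + \ve_k = \nabla f(\vx_{k+1}) - \big(\nabla h(\vx_{k+1}) - \nabla h(\vx_k)\big) + \tfrac{1}{\eta}(\vx_{k+1}-\vx_k).
\]

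\textbf{Step 2 (expand the three inner products against $\vx_{k+1}-\vx^*$).}
First, by the definition of the Bregman divergence, $\langle \nabla f(\vx_{k+1}), \vx_{k+1}-\vx^*\rangle = f(\vx_{k+1}) - f(\vx^*) + D_f(\vx^*;\vx_{k+1})$.
Second, the polarization identity gives
\[
\tfrac1\eta\langle \vx_{k+1}-\vx_k, \vx_{k+1}-\vx^*\rangle = \tfrac{1}{2\eta}\big(\|\vx_{k+1}-\vx^*\|^2 + \|\vx_{k+1}-\vx_k\|^2 - \|\vx_k-\vx^*\|^2\big).
\]
Third, the three points identity \eqref{app:eq:three_points_identity} with $(\vx,\vy,\vz) = (\vx^*,\vx_{k+1},\vx_k)$ and function $h$ gives
\[
\langle \nabla h(\vx_{k+1}) - \nabla h(\vx_k), \vx_{k+1}-\vx^*\rangle = D_h(\vx^*;\vx_{k+1}) + D_h(\vx_{k+1};\vx_k) - D_h(\vx^*;\vx_k).
\]

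\textbf{Step 3 (rearrange).} Summing the three expansions expresses $\langle \nabla\tilde\phi_k(\vx_{k+1}) + \ve_k, \vx_{k+1}-\vx^*\rangle$ in terms of function gaps, Bregman terms and squared distances. Solving for $f(\vx_{k+1}) - f(\vx^*)$ and grouping the terms then produces:
\begin{itemize}
\item the telescoping difference of $\frac{1}{2\eta}\|\cdot-\vx^*\|^2 - D_h(\vx^*;\cdot)$ evaluated at $\vx_k$ and at $\vx_{k+1}$;
\item the term $-D_f(\vx^*;\vx_{k+1})$;
\item the inner-product term $\langle \nabla\tilde\phi_k(\vx_{k+1}) + \ve_k, \vx_{k+1}-\vx^*\rangle$;
\item the remainder $D_h(\vx_{k+1};\vx_k) - \frac{1}{2\eta}\|\vx_k-\vx_{k+1}\|^2$.
\end{itemize}
This is exactly the claimed identity.

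There is no real obstacle. The only point requiring care is the bookkeeping of signs and argument order in the three-point identity, namely applying it with the correct ordering $(\vx^*,\vx_{k+1},\vx_k)$ so that the $D_h$ terms land in the Lyapunov-type combination $\frac{1}{2\eta}\|\cdot-\vx^*\|^2 - D_h(\vx^*;\cdot)$.
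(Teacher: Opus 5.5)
Your proposal is correct and follows essentially the same route as the paper: compute $\nabla\tilde\phi_k(\vx_{k+1})$, express $\nabla f(\vx_{k+1})$ through it and $\ve_k$, then expand the pairing with $\vx_{k+1}-\vx^*$ using the Bregman definition for $f$, the polarization (parallelogram) identity, and the three-point identity for $h$ with ordering $(\vx^*,\vx_{k+1},\vx_k)$. Your signs check out, and you are right that the inexact-oracle bound on $\|\ve_k\|^2$ plays no role in this identity.
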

The first line is telescopic, the second line will be leveraged to ensure a strict decrease in the Lyapunov value, the third line corresponds to the error due to the approximate proximal step, and the erroneous gradients. Eventually, the last line corresponds to a discretization error.

\begin{proof}

    Computing $\nabla \tilde\phi_k(\vx_{k+1})$ using \cref{app:eq:def_phi_k} and $\nabla_\vx D_g(\vx,\vx_k) = \nabla g(\vx) - \nabla g(\vx_k)$, defining $\ve_k = \nabla h(\vx_{k}) - \tilde \nabla h(\vx_k)$ the error in the computation of $\nabla h(\vx_k)$, and rearranging terms yields
    \begin{align*}
        \nabla \tilde\phi_k(\vx_{k+1}) &= \tilde \nabla h(\vx_{k}) + \nabla g(\vx_{k}) + \nabla g(\vx_{k+1}) - \nabla g(\vx_k) + \frac{1}{\eta}(\vx_{k+1} - \vx_k)\\
        &= \underbrace{\tilde{\nabla} h(\vx_k) - \nabla h(\vx_k)}_{= - \ve_k} + \nabla h(\vx_k) - \nabla h(\vx_{k+1}) + \underbrace{\nabla h(\vx_{k+1}) + \nabla g(\vx_{k+1}) }_{= \nabla f(\vx_{k+1})} + \frac{1}{\eta}(\vx_{k+1}-\vx_k)
        \\
        \nabla f(\vx_{k+1}) &= \nabla h(\vx_{k+1}) - \nabla h(\vx_k)  + \nabla \tilde \phi_k(\vx_{k+1}) + \ve_k + \frac{1}{\eta}(\vx_{k}-\vx_{k+1})\ .
    \end{align*}

    Furthermore, the definition of the Bregman divergence implies 
    \[
    f(\vx_{k+1}) - f(\vx^*) = \langle \nabla f(\vx_{k+1}), \vx_{k+1}- \vx^*\rangle - D_f(\vx^*; \vx_{k+1}).
    \]
    Substituting the previous identity into the latter one yields
    \begin{align*}
        f(\vx_{k+1}) &- f(\vx^*)\\
        &= \langle \nabla f(\vx_{k+1}), \vx_{k+1}- \vx^*\rangle - D_f(\vx^*; \vx_{k+1})\\
        &= \left \langle \nabla h(\vx_{k+1}) - \nabla h(\vx_k)  + \nabla \tilde \phi_k(\vx_{k+1}) + \ve_k + \frac{1}{\eta}(\vx_{k}-\vx_{k+1})
        , \vx_{k+1}- \vx^*\right \rangle - D_f(\vx^*; \vx_{k+1})\\
        &=  D_{h}(\vx^*;\vx_{k+1}) + D_h(\vx_{k+1};\vx_k) - D_{h}(\vx^*;\vx_{k}) \\
        &+ \langle \nabla \tilde \phi_k(\vx_{k+1}) + \ve_k, \vx_{k+1} - \vx^* \rangle \\
        &- D_f(\vx^*;\vx_{k+1}) + \frac{1}{2\eta}\|\vx_{k}- \vx^*\|^2 - \frac{1}{2\eta}\|\vx_{k}- \vx_{k+1}\|^2 - \frac{1}{2\eta}\|\vx_{k+1}- \vx^*\|^2.
    \end{align*}

    Where we used the parallelogram identity,
    \[
    \frac{1}{\eta}\langle \vx_k - \vx_{k+1}, \vx_{k+1} - \vx^*\rangle = \frac{1}{2\eta}\|\vx_k  - \vx^*\|^2 - \frac{1}{2\eta}\|\vx_k - \vx_{k+1}\|^2 - \frac{1}{2\eta}\|\vx_{k+1} - \vx^*\|^2,
    \]
    and the three-point identity \cref{app:eq:three_points_identity},
    \[
    D_{h}(\vx^*;\vx_{k+1}) + D_h(\vx_{k+1};\vx_k) - D_{h}(\vx^*;\vx_{k}) = \langle  \nabla h(\vx_{k+1}) - \nabla h(\vx_k), \vx_{k+1} - \vx^*\rangle.
    \]
\end{proof}

We now control the error due to inexact oracles and approximate proximal step as follows:
\begin{lemma}
    \label{app:lemma:proof_proxyprox_2}
    For any $\eps >1$, the following inequality holds, for any $k\ge 0$,
    \[
    \langle \nabla \tilde \phi_k(\vx_{k+1}) + \ve_k, \vx_{k+1} - \vx^*\rangle \le \frac{c}{2\eps}\|\vx_{k+1} - \vx_k\|^2  + \frac{\alpha \mu }{2\eps} \langle\nabla f(\vx_k), \vx_k - \vx^* \rangle+ \eps\|\vx_{k+1}-\vx^*\|^2 + \frac{E^2 + \zeta^2}{2\eps}.
    \]
\end{lemma}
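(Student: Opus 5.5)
The inequality should follow from one application of Young's inequality, followed by bounding the two error contributions with \cref{app:eq:error_in_prox} and \cref{app:eq:inexact_gradient_scalar_product}. The only care needed is in choosing the Young weights so that the constants come out exactly as stated.

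Set $\vu := \nabla \tilde\phi_k(\vx_{k+1}) + \ve_k$ and $\vv := \vx_{k+1} - \vx^*$. For any $\eps>0$, Young's inequality in the form $\langle \vu, \vv\rangle \le \frac{1}{4\eps}\|\vu\|^2 + \eps\|\vv\|^2$ produces the term $\eps\|\vx_{k+1}-\vx^*\|^2$ on the right-hand side directly. It then suffices to show
\[
\|\vu\|^2 \le 2\left(c\|\vx_{k+1}-\vx_k\|^2 + E^2 + \zeta^2 + \alpha\mu\langle \nabla f(\vx_k), \vx_k - \vx^*\rangle\right).
\]

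For this, first use $\|\va+\vb\|^2 \le 2\|\va\|^2 + 2\|\vb\|^2$ to get $\|\vu\|^2 \le 2\|\nabla\tilde\phi_k(\vx_{k+1})\|^2 + 2\|\ve_k\|^2$. Next, bound the first term by the prox-accuracy condition \cref{app:eq:error_in_prox}, giving $c\|\vx_{k+1}-\vx_k\|^2 + E^2$. Bound the second term by the oracle condition \cref{app:eq:inexact_gradient_scalar_product} at $\vx=\vx_k$, as recalled in \cref{app:lemma:proof_proxyprox_1}, giving $\zeta^2 + \alpha\mu\langle\nabla f(\vx_k),\vx_k-\vx^*\rangle$.

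This last inner product is nonnegative by convexity of $f$ and optimality of $\vx^*$, so no sign issue arises when inserting it. Multiplying the resulting bound by $\frac{1}{4\eps}$ yields exactly the coefficients $\frac{c}{2\eps}$, $\frac{\alpha\mu}{2\eps}$ and $\frac{E^2+\zeta^2}{2\eps}$, which is the claim.

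There is no real obstacle. The only point requiring attention is pairing the factor $2$ from splitting $\|\vu\|^2$ with the Young weight $\frac{1}{4\eps}$ rather than $\frac{1}{2\eps}$, so that the $\eps\|\vx_{k+1}-\vx^*\|^2$ term carries coefficient $\eps$ and not $\eps/2$. The restriction $\eps>1$ is not used; the bound holds for every $\eps>0$, and that restriction presumably matters only later, when $\eps$ is tuned against $\frac{1}{2\eta}\|\vx_{k+1}-\vx^*\|^2$ and the strong convexity term $D_f(\vx^*;\vx_{k+1})$ in the Lyapunov argument.
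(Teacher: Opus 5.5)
Your proof is correct and is essentially the paper's argument. The paper applies Young's inequality separately to $\langle \nabla\tilde\phi_k(\vx_{k+1}), \vx_{k+1}-\vx^*\rangle$ and to $\langle \ve_k, \vx_{k+1}-\vx^*\rangle$, each with weights $\frac{1}{2\eps}$ and $\frac{\eps}{2}$, and then sums; this yields exactly the constants you obtain from your single Young step with weight $\frac{1}{4\eps}$ after splitting $\|\vu\|^2$. You are also right that any $\eps>0$ suffices: the paper's own proof says so, and its later choice $\eps=\frac{\mu}{8}(1+\sqrt{1-8\alpha})\le\frac{\mu}{4}$ need not exceed $1$, so the restriction $\eps>1$ in the statement appears to be a slip rather than something used later.
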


\begin{proof}
    Writing the inexact gradient assumption (\cref{app:eq:inexact_gradient_scalar_product}) and Young's inequality, we have for any $\eps >0$,
    \begin{equation}
    \label{app:eq:control_Young_inexact_gd}
        \langle \ve_k, \vx_{k+1} - \vx^*\rangle \le \frac{1}{2\eps}\left(\zeta^2 + \alpha \mu  \langle\nabla f(\vx_k), \vx_k - \vx^* \rangle\right) + \frac{\eps}{2}\|\vx_{k+1}-\vx^*\|^2.
    \end{equation}
    Using the assumption \cref{app:eq:error_in_prox} on the proximal step, namely
    \[
    \|\nabla \tilde \phi_k(\vx_{k+1}\|^2 \le c \|\vx_{k+1} - \vx_k\|^2 + E^2,
    \]
    and Young's inequality, we have that 
    \begin{equation}
    \label{app:eq:control_Young_prox}
        \langle \nabla \tilde \phi_k(\vx_{k+1}), \vx_{k+1} - \vx^*\rangle \le \frac{1}{2\eps}\left(c\|\vx_{k+1} - \vx_k\|^2 + E^2\right) + \frac{\eps}{2}\|\vx_{k+1}-\vx^*\|^2.
    \end{equation}
    Putting \cref{app:eq:control_Young_inexact_gd} and \cref{app:eq:control_Young_prox} together gives \cref{app:lemma:proof_proxyprox_2}.
\end{proof}

We now define a Lyapunov energy function, and show its decreasing.

\begin{lemma}
\label{app:lemma:proof_proxyprox_3}
    Let's consider the Lyapunov 
    \begin{equation}
    \label{app:eq:def_Lyapunov}
    \Gamma_{k} := \frac{1}{2\eta}\|\vx_k - \vx^*\|^2 - D_{h}(\vx^*;\vx_k) + \frac{\alpha \mu}{2 \eps} \langle \nabla f(\vx_k),\vx_k - \vx^*\rangle.
    \end{equation}
    Then, under the choice $\eps = \frac{\mu}{8}(1 + \sqrt{1 - 8\alpha})$, we have
    \begin{align*}
        \frac{1}{4}(f(\vx_{k+1}) - f(\vx^*)) 
        \le \Gamma_k - \left(1 +  \frac{\eta \mu}{8}\right)\Gamma_{k+1} + \frac{4\zeta^2}{\mu} + \frac{4E^2}{\mu}.
    \end{align*}
\end{lemma}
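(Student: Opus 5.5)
The plan is to start from the identity of \cref{app:lemma:proof_proxyprox_1} and bound its last three lines, then absorb the remaining terms into $\Gamma_k - (1+\frac{\eta\mu}{8})\Gamma_{k+1}$.

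\textbf{Error term.} Apply \cref{app:lemma:proof_proxyprox_2} with the prescribed $\eps$ (note $\eps\ge \mu/8$; the lemma's Young step works for any $\eps>0$). This gives $\frac{\alpha\mu}{2\eps}\langle \nabla f(\vx_k),\vx_k-\vx^*\rangle$, which is exactly the extra term in $\Gamma_k$, so it telescopes. It also gives $\eps\|\vx_{k+1}-\vx^*\|^2$, the term $\frac{c}{2\eps}\|\vx_{k+1}-\vx_k\|^2$, and the additive $\frac{E^2+\zeta^2}{2\eps}\le \frac{4(E^2+\zeta^2)}{\mu}$.

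\textbf{Discretization term.} We have $D_h(\vx_{k+1};\vx_k)\le \frac{L_h}{2}\|\vx_{k+1}-\vx_k\|^2$. Because $\eta\le (L_h+8c/\mu)^{-1}$ and $\frac{c}{2\eps}\le \frac{4c}{\mu}$, the sum
\[
D_h(\vx_{k+1};\vx_k)-\tfrac{1}{2\eta}\|\vx_k-\vx_{k+1}\|^2+\tfrac{c}{2\eps}\|\vx_{k+1}-\vx_k\|^2
\]
is nonpositive, so it can be dropped.

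\textbf{Contraction term.} What remains is $-D_f(\vx^*;\vx_{k+1})+\eps\|\vx_{k+1}-\vx^*\|^2$. We need this, together with $\frac34(f(\vx_{k+1})-f^*)$, to produce $-\frac{\eta\mu}{8}\Gamma_{k+1}$ and the extra term $-\frac{\alpha\mu}{2\eps}\langle\nabla f(\vx_{k+1}),\vx_{k+1}-\vx^*\rangle$, since $\Gamma_{k+1}$ carries that inner product while the identity does not. The key relation is
\[
\langle\nabla f(\vx_{k+1}),\vx_{k+1}-\vx^*\rangle = (f(\vx_{k+1})-f^*) + D_f(\vx^*;\vx_{k+1}).
\]
For the $\Gamma_{k+1}$ piece, note $\frac{1}{2\eta}\|\cdot\|^2-D_h\le \frac{1}{2\eta}\|\cdot\|^2$. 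Also $\eta\mu\le 1$ roughly, because $\mu\le L_f$ and $D_g$ is convex enough. Hence $\frac{\eta\mu}{8}\Gamma_{k+1}\le \frac{\mu}{16}\|\vx_{k+1}-\vx^*\|^2+\frac{\eta\mu}{8}\frac{\alpha\mu}{2\eps}\langle\nabla f,\cdot\rangle$.

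Collect everything by multiplying the whole inequality and using $D_f(\vx^*;\vx_{k+1})\ge \frac{\mu}{2}\|\vx_{k+1}-\vx^*\|^2$. The budget is $D_f(\vx^*;\vx_{k+1})+\frac34(f(\vx_{k+1})-f^*)$. It must dominate
\[
\eps\|\vx_{k+1}-\vx^*\|^2+\tfrac{\mu}{16}\|\vx_{k+1}-\vx^*\|^2+(1+\tfrac{\eta\mu}{8})\tfrac{\alpha\mu}{2\eps}\big[(f(\vx_{k+1})-f^*)+D_f(\vx^*;\vx_{k+1})\big].
\]

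\textbf{Main obstacle.} The hard step is verifying the coefficient inequality above. It reduces to a quadratic condition in $\eps/\mu$: requiring $\frac{\alpha\mu}{2\eps}\cdot\frac{\mu}{2}+\eps\lesssim\frac{\mu}{2}$-type balance gives $8\eps^2-2\mu\eps\cdot 2\ldots+\alpha\mu^2\le 0$. The chosen root $\eps=\frac{\mu}{8}(1+\sqrt{1-8\alpha})$ is precisely where this holds, which needs $\alpha<1/8$. I would first check it with $\frac{\alpha\mu}{2\eps}\le 4\alpha<\frac12$. That leaves at least a $\frac14$ fraction of $f(\vx_{k+1})-f^*$, and enough of $D_f$ to cover $\eps+\frac{\mu}{16}$. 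If constants are tight, I would use the slack from the $\frac14$ prefactor.
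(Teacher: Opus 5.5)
There is a genuine gap. Your first two steps match the paper: you apply \cref{app:lemma:proof_proxyprox_2} with the prescribed $\eps$, and you discard the discretization term via $\eta^{-1}\ge L_h+c/\eps$. After that, though, the lemma is reduced to a coefficient inequality that you never prove, and two of its inputs are wrong. Write $r=\|\vx_{k+1}-\vx^*\|^2$, $\delta_{k+1}=f(\vx_{k+1})-f(\vx^*)$, $D=D_f(\vx^*;\vx_{k+1})$ and $\tau=\frac{\alpha\mu}{2\eps}$.

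\textbf{Wrong input 1.} The bound $\frac{1}{2\eta}r-D_h(\vx^*;\vx_{k+1})\le\frac{1}{2\eta}r$ requires $D_h\ge0$, i.e.\ convexity of $h$. This is not assumed, and it fails for $h=\cL_{\cH}-\hat{\cL}$. Only $|D_h(\vx^*;\vx_{k+1})|\le\frac{L_h}{2}r$ is available, so with $\eta L_h\le1$ you get $\frac{\eta\mu}{8}\big(\frac{1}{2\eta}r-D_h(\vx^*;\vx_{k+1})\big)\le\frac{\mu}{8}r$, not $\frac{\mu}{16}r$.

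\textbf{Wrong input 2.} The claim ``$\eta\mu\le1$ roughly'' does not follow from the hypotheses: $\eta\le(L_h+8c/\mu)^{-1}$ does not bound $\eta\mu$. Yet your budget needs $(1+\frac{\eta\mu}{8})\tau\le\frac34$, with $\tau$ arbitrarily close to $\frac12$.

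\textbf{Unverified step.} The ``main obstacle'' paragraph verifies nothing. Its claim that the prescribed $\eps$ is ``precisely where this holds'' is not true of your inequality. In the paper, the quadratic $\eps^2-\frac{\mu}{4}\eps+\frac{\alpha\mu^2}{8}=0$ comes from a specific manoeuvre. The paper adds $\tau D$ to both sides to produce $-\Gamma_{k+1}$ and splits $(1-\tau)D$ in half. One half, which is $\ge(1-\tau)\frac{\mu}{4}r$, cancels $\eps r$; this is where the quadratic arises. The other half is lower-bounded through $\Gamma_{k+1}\le(\rho^{-1}+\tau)D+\tau\delta_{k+1}$, with $\rho=\frac{\eta\mu}{1+\eta L_h}$, and this yields the extra contraction. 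In your direct bookkeeping, the value of $\eps$ matters only so that the term from \cref{app:lemma:proof_proxyprox_2} matches $\Gamma_k$.

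The second issue cannot be argued away: for $\alpha>0$ the statement fails when $\eta\mu$ is large. Take $h\equiv0$, $f=g=\frac{\mu}{2}\|\cdot\|^2$, $c=E=0$, $\vx_k=\vx^*$ and $\|\tilde\nabla h(\vx_k)\|=\zeta>0$. Then $\Gamma_k=0$ and $\Gamma_{k+1}\ge\tau\mu r\to\tau\zeta^2/\mu$ as $\eta\to\infty$, so the right-hand side tends to $-\infty$. The paper's final step tacitly needs the same restriction: it replaces $\frac{1-\tau}{2(\rho^{-1}+\tau)}$, which is below $\frac{1-\tau}{2\tau}$, by $\frac{\eta\mu}{8}$.

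So state explicitly an assumption such as $\eta\mu\le2$; then your route does close, and arguably more cleanly than the paper's last step. With $u=\sqrt{1-8\alpha}$ one has $\tau=\frac{1-u}{2}$ and $\eps=\frac{(1+u)\mu}{8}$. For $s=\frac{\eta\mu}{8}\le\frac14$, the leftover coefficients $1-(1+s)\tau$ and $\frac34-(1+s)\tau$ are nonnegative. Using $D\ge\frac{\mu}{2}r$ and $\delta_{k+1}\ge\frac{\mu}{2}r$, your budget reduces to $\frac{2+u}{8}\le\frac{3+4u}{8}-s\frac{1-u}{2}$. This holds because $s\le\frac14$ and $u\ge0$. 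That computation, with the corrected $\frac{\mu}{8}$, is what the proposal is missing.
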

\begin{proof}
    Combining \cref{app:lemma:proof_proxyprox_1} and \cref{app:lemma:proof_proxyprox_2}, gives 
    \begin{align*}
        f(\vx_{k+1}) - f(\vx^*) &\le  \left(\frac{1}{2\eta}\|\vx_{k}- \vx^*\|^2 - D_{h}(\vx^*;\vx_{k})\right) - \left(\frac{1}{2\eta}\|\vx_{k+1}- \vx^*\|^2 - D_{h}(\vx^*;\vx_{k+1}) \right) \\
        &- D_f(\vx^*;\vx_{k+1})\\
        &+ \frac{c}{2\eps}\|\vx_{k+1} - \vx_k\|^2  + \frac{\alpha \mu }{2\eps} \langle\nabla f(\vx_k), \vx_k - \vx^* \rangle+ \eps\|\vx_{k+1}-\vx^*\|^2 + \frac{E^2 + \zeta^2}{2\eps}\\
        &+ D_h(\vx_{k+1};\vx_k) - \frac{1}{2\eta}\|\vx_{k}- \vx_{k+1}\|^2.
    \end{align*}
    
    Thus, adding on both sides $\frac{\alpha \mu}{2\eps}D_f(\vx^*;\vx_{k+1}) = \frac{\alpha \mu}{2\eps}[f(\vx^*) - f(\vx_{k+1}) + \langle \nabla f(\vx_{k+1}), \vx_k - \vx^* \rangle]$ and rearranging yields
    \begin{align*}
         \left(1 - \frac{\alpha \mu}{2 \eps}\right)\left(f(\vx_{k+1}) - f(\vx*)\right) &\le
         \left(\frac{1}{2\eta}\|\vx_{k}- \vx^*\|^2 - D_{h}(\vx^*;\vx_{k}) + \frac{\alpha \mu}{2 \eps} \langle \nabla f(\vx_k),\vx_k - \vx^*\rangle\right) \\
         &- \left(\frac{1}{2\eta}\|\vx_{k+1}- \vx^*\|^2 - D_{h}(\vx^*;\vx_{k+1}) + \frac{\alpha \mu}{2 \eps} \langle \nabla f(\vx_{k+1}),\vx_{k+1} - \vx^*\rangle \right)\\
         &- \left(1 - \frac{\alpha \mu}{2 \eps}\right) D_f(\vx^*;\vx_{k+1})\\
        & + \frac{c}{2\eps}\|\vx_{k+1} - \vx_k\|^2 + \eps\|\vx_{k+1}-\vx^*\|^2 +  \frac{E^2 + \zeta^2}{2\eps} \\
        & + D_{h}(\vx_{k+1};\vx_k ) - \frac{1}{2\eta}\|\vx_{k} - \vx_{k+1}\|^2 . 
    \end{align*}
    Which, using the definition of $\Gamma_k$ (\cref{app:eq:def_Lyapunov}), yields
    \begin{align}
         \left(1 - \frac{\alpha \mu}{2 \eps}\right)\left(f(\vx_{k+1}) - f(\vx*)\right) &\le \Gamma_k - \Gamma_{k+1} - \left(1 - \frac{\alpha \mu}{2 \eps}\right) D_f(\vx^*;\vx_{k+1}) \notag\\
    & + \frac{c}{2\eps}\|\vx_{k+1} - \vx_k\|^2 + \eps\|\vx_{k+1}-\vx^*\|^2 +  \frac{E^2 + \zeta^2}{2\eps} \notag\\
    & + D_{h}(\vx_{k+1};\vx_k ) - \frac{1}{2\eta}\|\vx_{k} - \vx_{k+1}\|^2. \label{app:eq:decomposition_withlyap_1}
    \end{align}

    We now use the regularity property of $f$ and $h$ to derive the following inequalities: 
    \begin{align}
        D_{f}(\vx^*;\vx_{k+1}) &\ge \frac{\mu}{2}\|\vx_{k+1}-\vx^*\|^2, &&\text{since $f$ is $\mu$-strongly convex}\label{app:eq:usage_strong_cvx_f}\\
        |D_{h}(\vx^*;\vx_{k+1})| &\le \frac{L_h}{2}\|\vx_{k+1}-\vx^*\|^2,&&\text{since $h$ is $L_h$-smooth. }\label{app:eq:usage_smooth_h}
    \end{align}
    And, using \cref{app:eq:usage_smooth_h}, we have, for $\eta^{-1} \ge L_h$,
    \[
    0 \le \frac{1}{2\eta}\|\vx_{k+1} - \vx^*\|^2 - D_h(\vx^*,\vx_{k+1}) \le \frac{\eta^{-1} + L_h}{2}\|\vx_{k+1} - \vx^*\|^2.
    \]
    Which yields, considering that $D_f(\vx^*;\vx_{k+1}) + D_f(\vx_{k+1},\vx^*)=\langle \nabla f(\vx_{k+1}), \vx_{k+1}-\vx^*\rangle$, and using \cref{app:eq:usage_strong_cvx_f},
    \begin{align}
        0 \le \Gamma_{k+1} &\le \frac{\eta^{-1} + L_h}{2}\|\vx_{k+1} - \vx^*\|^2 + \frac{\alpha \mu}{2 \eps}[\langle \nabla f(\vx_{k+1}), \vx_{k+1}-\vx*\rangle]\notag\\
        0 \le \Gamma_{k+1}&\le \left(\frac{\eta^{-1} + L_h}{\mu} + \frac{\alpha \mu}{2 \eps}\right) D_f(\vx^*;\vx_{k+1}) + \frac{\alpha \mu}{2 \eps}(f(\vx_{k+1}) - f^*)\label{app:eq:usage_regularity_on_Gamma} \ .
    \end{align}
    
Combining \cref{app:eq:usage_strong_cvx_f} and \cref{app:eq:usage_regularity_on_Gamma} yields
\begin{equation}
\label{app:eq:lowerbound_Df}
D_f(\vx^*;\vx_{k+1}) \ge  \frac{\mu}{4}\|\vx_{k+1}-\vx^*\|^2 +\frac{1}{2}\left(\frac{\eta^{-1} + L_h}{\mu} + \frac{\alpha \mu}{2 \eps}\right)^{-1}  \left(\Gamma_{k+1} - \frac{\alpha \mu}{2 \eps} (f(\vx_{k+1})-f(\vx^*)\right).
\end{equation}

We can now combine \cref{app:eq:lowerbound_Df} and \cref{app:eq:decomposition_withlyap_1}.
\begin{align*}
    \left(1 - \frac{\alpha \mu}{2 \eps}\right)\left(f(\vx_{k+1}) - f(\vx*)\right) &\le \Gamma_k - \Gamma_{k+1} \\
    &- \left(1 - \frac{\alpha \mu}{2 \eps}\right) \left[\frac{\mu}{4}\|\vx_{k+1}-\vx^*\|^2 +\frac{1}{2}\left(\frac{\eta^{-1} + L_h}{\mu} + \frac{\alpha \mu}{2 \eps}\right)^{-1}  \left(\Gamma_{k+1} - \frac{\alpha \mu}{2 \eps} (f(\vx_{k+1})-f(\vx^*)\right)\right] \notag\\
    & + \frac{c}{2\eps}\|\vx_{k+1} - \vx_k\|^2 + \eps\|\vx_{k+1}-\vx^*\|^2 +  \frac{E^2 + \zeta^2}{2\eps} \notag\\
    & + D_{h}(\vx_{k+1};\vx_k ) - \frac{1}{2\eta}\|\vx_{k} - \vx_{k+1}\|^2. 
\end{align*}
For simplicity, we denote $\rho = \frac{\mu}{\eta^{-1}  + L_h}$ and $\tau = \frac{\alpha \mu}{2 \eps}$, which yields 
\begin{align*}
    \left(1 - \tau\right)\left(f(\vx_{k+1}) - f(\vx*)\right) &\le \Gamma_k - \Gamma_{k+1} \\
    &- \left(1 - \tau\right) \left[\frac{\mu}{4}\|\vx_{k+1}-\vx^*\|^2 +\frac{1}{2}\left(\rho + \tau\right)^{-1}  \left(\Gamma_{k+1} - \tau (f(\vx_{k+1})-f(\vx^*)\right)\right] \notag\\
    & + \frac{c}{2\eps}\|\vx_{k+1} - \vx_k\|^2 + \eps\|\vx_{k+1}-\vx^*\|^2 +  \frac{E^2 + \zeta^2}{2\eps} \notag\\
    & + D_{h}(\vx_{k+1};\vx_k ) - \frac{1}{2\eta}\|\vx_{k} - \vx_{k+1}\|^2. 
\end{align*}
Re-arranging it yields
\begin{align*}
        \left(1 - \tau\right)\left(1 - \frac{\tau}{2(\rho^{-1} + \tau)}\right)&(f(\vx_{k+1}) - f(\vx*)) \\
        &\le \Gamma_k - \left(1 +  \frac{1-\tau}{2(\rho^{-1} + \tau)}\right)\Gamma_{k+1} + \frac{\zeta^2}{2 \eps} + \frac{E^2}{2\eps}\\
        &  \left( \frac{c}{2\eps} + \frac{L_h}{2} - \frac{1}{2\eta} \right)\|\vx_{k} - \vx_{k+1}\|^2 + \left( \eps - \left(1 - \tau\right) \frac{\mu}{4} \right)\|\vx_{k+1}-\vx^*\|^2.
\end{align*}
We take the largest $\eps$ such that  $(\eps - (1-\frac{\alpha \mu}{2\eps})\frac{\mu}{4}) \le 0$:
\[
\eps^2 - \frac{\mu}{4} \eps +\frac{\alpha \mu^2}{8} = 0 \iff \eps = \frac{\mu \pm \sqrt{\mu^2(1-8\alpha^2)}}{8} \qquad \text{when} \quad \alpha < 1/8.
\]
We thus take $\eps = \frac{ \mu}{8}(1 + \sqrt{1 - 8 \alpha}) \ge \frac{\mu}{8}$.

Moreover, the coefficient in front of $\|\vx_k - \vx_{k+1}\|^2$ is non-positive if $\eta^{-1} \ge L_h + c/\eps$, thus for $\eta\le  1/(L_h + \frac{8c}{\mu})$.

This yields 
\begin{align*}
        \left(1 - \tau\right)\left(1 - \frac{\tau}{2(\rho^{-1} + \tau)}\right)&(f(\vx_{k+1}) - f(\vx^*)) \\
        &\le \Gamma_k - \left(1 +  \frac{1-\tau}{2(\rho^{-1} + \tau)}\right)\Gamma_{k+1} +  \frac{4(\zeta^2 +E^2)}{\mu}.
\end{align*}

Which can be additionally simplified, by noting that $\tau = \frac{\alpha \mu}{2\eps} \le 4 \alpha \le \frac{1}{2}$, and \(\frac{\tau}{2(\rho^{-1} + \tau)} \le \frac{1}{2}\) and $\rho = \frac{\eta \mu}{1+\eta L_h} \ge \frac{\eta\mu}{2}$, into

\begin{align*}
        \frac{1}{4}(f(\vx_{k+1}) - f(\vx^*)) 
        \le \Gamma_k - \left(1 +  \frac{\eta \mu}{8}\right)\Gamma_{k+1} + \frac{4\zeta^2}{\mu} + \frac{4E^2}{\mu}.
\end{align*}
Which is the main result.

\end{proof}

We can now prove \cref{app:thm:proxy_prox_inexact}.

\begin{proof}

Denoting $\beta_k := \left(1 + \frac{\eta \mu}{8}\right)^k$ and $B_k := \sum_{k=0}^K \beta_k$ and considering
\[
\hat{\vx}_{K} := \frac{1}{B_K}\sum_{k=1}^K \beta_k \vx_k,
\]
allows to write, by convexity of $f$,
\begin{align*}
    f(\hat{\vx}_K) - f^* &\le \frac{1}{B_k}\sum_{k=1}^K \beta_k (f(\vx_k) - f^*)\\
    &\le 4\left(\frac{\Gamma_{0}}{B_K} + \frac{4\zeta^2}{\mu} + \frac{4E^2}{\mu}\right). &&\text{using \cref{app:lemma:proof_proxyprox_3}}
\end{align*}

Now, using $ \tau \le \frac{1}{2}$ and the $L_f$-smoothness of $f$, we have $\Gamma_{0} \le \frac{\eta^{-1} + L_f}{2}\|\vx_0 - \vx^*\|^2 \le \frac{1}{\eta}\|\vx_0 - \vx^*\|^2$, and
\begin{align*}
    B_K &= \frac{(1+\frac{\eta \mu}{8})^K - 1}{\eta \mu /8} = \frac{(1+\frac{\eta \mu}{8})^{K-1}\frac{\eta \mu}{8} + (1+\frac{\eta \mu}{8})^{K-1} - 1}{\eta \mu /8}\\
    & \le \left(1 + \frac{\eta \mu }{8}\right)^{K-1} \ .
\end{align*}
We have 
\begin{align*}
f(\hat\vx_{K}) - f(\vx^*) &\le 4\left(1 + \frac{\eta \mu}{8}\right)^{1-K}\Gamma_0 + 16\frac{\zeta^2 + E^2}{\mu} \\
f(\hat\vx_{K}) - f(\vx^*) &\le \left(1 + \frac{\eta \mu}{8}\right)^{1-K}2(\eta^{-1} + L_f)\|\vx_0 - \vx^*\|^2 + 16\frac{\zeta^2 + E^2}{\mu}.
\end{align*}
Which concludes the proof.
\end{proof}

\subsubsection{Proof of \cref{prop:similarity_to_GB_heter}}

\label{app:proof:similarity_to_GB_heter}
Let's write the differences of gradients as an integral.
\begin{align*}
    \nabla \cL_i(\vx) - \nabla \cL_{\cH}(\vx) &= \nabla \cL_i(\vx^*) + \int_{0}^1 \nabla^2\left(\cL_i - \cL_{\cH}\right)(\vx^* + t(\vx-\vx^*) ) \cdot (\vx-\vx^*)dt.
\end{align*}
 We now use Young's inequality, then apply Jensen's inequality. This allows us to leverage the similarity \cref{asmpt:Pariwise_Hessian_Similarity}, which, along with the convexity of both $\cL_i$ and $\cL_{\cH}$, concludes the proof.
\begin{align*}
    \left\|\nabla \cL_i(\vx) - \nabla \cL_{\cH}(\vx)\right\|^2 &\le 2\left\|\nabla \cL_i(\vx^*)\right\|^2 + 2\left\|\int_{0}^1 \nabla^2\left(\cL_i - \cL_{\cH}\right)(\vx^* + t(\vx-\vx^*) ) \cdot (\vx-\vx^*)dt\right\|^2\\
    &\le 2\left\|\nabla \cL_i(\vx^*)\right\|^2 + 2\int_{0}^1 (\vx-\vx^*)^T (\nabla^2\left(\cL_i - \cL_{\cH}\right)(\vx^* + t(\vx-\vx^*) ))^2 \cdot (\vx-\vx^*)dt\\
    &\le 2\left\|\nabla \cL_i(\vx^*)\right\|^2 + 2\Delta \int_{0}^1 |(\vx-\vx^*)^T \nabla^2\left(\cL_i - \cL_{\cH}\right)(\vx^* + t(\vx-\vx^*) ) \cdot (\vx-\vx^*)|dt\\
    &\le 2\left\|\nabla \cL_i(\vx^*)\right\|^2 + 2\Delta \int_{0}^1 (\vx-\vx^*)^T \nabla^2\left(\cL_i + \cL_{\cH}\right)(\vx^* + t(\vx-\vx^*) ) \cdot (\vx-\vx^*)dt\\
    &= 2\left\|\nabla \cL_i(\vx^*)\right\|^2 + 2\Delta \langle \nabla \cL_i(\vx) + \nabla \cL_{\cH}(\vx) - \nabla \cL_i(\vx^*), \vx-\vx^*\rangle.
\end{align*}
Averaging it on all $i \in \cH$, and using that $\nabla \cL_{\cH}(\vx^*) \triangleq 0$, yields
\[
\frac{1}{|\cH|}\sum_{i \in \cH}\left\|\nabla \cL_i(\vx) - \nabla \cL_{\cH}(\vx)\right\|^2 \le \frac{2}{|\cH|}\sum_{i \in \cH}\|\nabla \cL_i(\vx^*)\|^2 + 4\Delta \langle \nabla \cL_{\cH}(\vx),\vx-\vx^*\rangle. 
\]
Using \cref{app:eq:cocoaxivity_strong_convexity}, written here as $\langle \nabla \cL_{\cH}(\vx), \vx-\vx^*\rangle \le  \frac{1}{\mu}\|\nabla \cL(\vx)\|^2$, gives the result. 

\subsection{Simplification of \cref{alg:fast_gradient_method}}
\label{app:sec:algorithm_simplification}
In \citet{devolder2014first}, \cref{alg:fast_gradient_method} writes as follows:
Given $(\gamma_k)_k$, $(\Gamma_k)_k$, $(\tau_k)_k$ satisfying:
\begin{align*}
    \Gamma_k := \sum_{i=0}^k\gamma_i\\
    L + \mu \Gamma_k = \frac{L \gamma_{k+1}^2}{\Gamma_{k+1}}\\
    \tau_k := \frac{\gamma_{k+1}}{\Gamma_{k+1}} = \frac{L + \mu\Gamma_k}{\Gamma_{k+1}}, 
\end{align*}

\cref{alg:fast_gradient_method} writes, given a $(\tL,\tmu,\delta)$ inexact gradient oracle:

\begin{algorithm}[h]
  \caption{Inexact Oracle Fast Gradient Method}
  \label{app:alg:fast_gradient_method_devolder}
  \begin{algorithmic}
    \STATE {\bfseries Input:} $(\tilde\nabla \cL)$, $\eta$, $\beta$, $\gamma_0 = 1$, $(\tau_k)_k$, $(\gamma_k)$, $\vx_0$
     \FOR{$k=1$ {\bfseries to} $K$}
     \STATE Sample $\tilde\nabla \cL_{\cH}(\vx_k)$
    \STATE Compute $\vy_{k} = \argmin_{\vy}\left\{\langle \vx -\vx_k, \tilde{\nabla}\cL_{\cH}(\vx_k)\rangle + \frac{\tL}{2}\|\vx - \vx_k\|^2\right\}$ 
    \STATE Set $\phi_k(\vx) = \frac{\tL}{2}\|\vx-\vx_0\|^2 + \sum_{i=0}^k \gamma_i[\langle\tilde{\nabla}\cL_{\cH}(\vx_i), \vx - \vx_i\rangle + \frac{\tmu}{2}\|\vx - \vx_i\|^2$
    \STATE Compute $\vz_k = \argmin_{\R^d}\phi_k(\vx)$
    \STATE Compute $\vx_{k+1} = (1-\tau_k)\vy_k + \tau_k \vz_k$
    \ENDFOR
  \end{algorithmic}
\end{algorithm}
We now show write \cref{app:alg:fast_gradient_method_devolder} without $\argmin$ operations.

First, we have that 
\[
\tilde{\nabla}\cL_{\cH}(\vx_k) + \tL(\vy_k - \vx_k) = 0 \iff \vy_k = \vx_k - \frac{1}{\tL}\tilde{\nabla}\cL_{\cH}(\vx_k).
\]
Then the definition of $\vz_k$ is
\begin{align*}
    \tL(\vz_k - \vx_0) + \sum_{i=0}^k \gamma_i[\tilde{\nabla}\cL_{\cH}(\vx_i) + \tmu (\vz_k - \vx_i)] = 0 \iff (\tL +\mu \Gamma_i) \vz_k = \tL\vx_0 + \sum_{i=0}^k \gamma_i(\tmu \vx_i - \tilde{\nabla}\cL_{\cH}(\vx_k)).
\end{align*}
This yields 
\begin{align*}
    (\tL +\mu \Gamma_k) \vz_k &= (\tL +\mu \Gamma_{k-1}) \vz_{k-1} + \gamma_k(\tmu \vx_k - \tilde{\nabla}\cL_{\cH}(\vx_k))\\
    &= (\tL +\mu \Gamma_{k-1}) \vz_{k-1} + \gamma_k( \tL \vy_k - (\tL-\tmu) \vx_k).
\end{align*}
Plugging-in $\vz_{k-1} = \frac{\vx_k - (1-\tau_{k-1})\vy_{k-1}}{\tau_{k-1}}$ yields
\begin{equation}
\label{app:eq:identity1}
\tau_k\vz_k = \frac{\tau_k}{\tau_{k-1}}\frac{\tL +\mu \Gamma_{k-1}}{\tL +\mu \Gamma_{k}} (\vx_k - (1-\tau_{k-1})\vy_{k-1}) + \frac{\tL\gamma_k \tau_k}{\tL +\mu \Gamma_{k}}(  \vy_k - (1-\frac{\tmu}{\tL}) \vx_k).
\end{equation}

Grouping the terms in front of $\vx_k$ yields:
\begin{align*}
    \frac{1}{\tL + \tmu \Gamma_k}\left( \frac{\tau_k}{\tau_{k-1}} (L\gamma_{k}\tau_{k-1}) - \tL \gamma_k \tau_k(1-\frac{\tmu}{\tL})\right)= -\frac{\tmu}{\tL}\frac{\tL\gamma_k \tau_k}{\tL + \tmu\Gamma_k}.
\end{align*}

And, since, 
\begin{equation}
\label{app:eq:identity3}
    \frac{\tL\gamma_k \tau_k}{\tL +\mu \Gamma_{k}} = \frac{\tL\gamma_k \tau_k}{\tL\tau_{k}\gamma_{k+1}} = \frac{\gamma_k}{\gamma_{k+1}}.
\end{equation}

Thus \cref{app:eq:identity1} can be simplified as 
\begin{equation}
\label{app:eq:identity4}
    \tau_k\vz_k =  \frac{\gamma_k}{\gamma_{k+1}}\frac{\tmu}{\tL}\vx_k - \frac{\tau_k}{\tau_{k-1}}\frac{\tL +\mu \Gamma_{k-1}}{\tL +\mu \Gamma_{k}}  (1-\tau_{k-1})\vy_{k-1} + \frac{\gamma_k}{\gamma_{k+1}}\vy_k .
\end{equation}

Eventually, considering that
\begin{align*}
    \frac{\tau_k}{\tau_{k-1}}\frac{\tL +\mu \Gamma_{k-1}}{\tL +\mu \Gamma_{k}}  (1-\tau_{k-1}) 
    &=\frac{\tau_k}{\tau_{k-1}}\frac{L\tau_{k-1}\gamma_k}{L\tau_{k}\gamma_{k+1}}  \frac{\Gamma_k - \gamma_k}{\Gamma_k}\\
    &= \frac{\gamma_k \Gamma_{k-1}}{\gamma_{k+1}\Gamma_k},
\end{align*}
we have
\[
\tau_k \vz_k = \frac{\gamma_k}{\gamma_{k+1}}(\frac{\tmu}{\tL}\vx_k + \vy_k - \frac{\Gamma_{k-1}}{\Gamma_k}\vy_{k-1}).
\]
Plugging this in $\vx_{k+1} = (1-\tau_k)\vy_k + \tau_k \vz_k$ yields

\begin{align*}
    \vx_{k+1} &= \frac{\gamma_k}{\gamma_{k+1}}( \frac{\tmu}{\tL} \vx_k + \vy_k) + \frac{\Gamma_{k-1}}{\Gamma_k}\left( \vy_k - \frac{\gamma_k}{\gamma_{k+1}} \vy_{k-1}\right)\\
    \vx_{k+1}&=\vy_k + \frac{\Gamma_{k-1}}{\Gamma_k}\frac{\gamma_k}{\gamma_{k+1}}(\vy_k - \vy_{k-1}) - \frac{\Gamma_{k-1}}{\Gamma_k}\vy_k + \frac{\gamma_k}{\gamma_{k+1}}\frac{\tmu}{\tL}\vx_k .
\end{align*}
 Which, using $\tL = 2L$ and $\tmu = \mu/2$, gives the following algorithm:
\begin{algorithm}[h]
  \caption{Byzantine-Resilient Fast Gradient Method}
  \begin{algorithmic}
    \STATE {\bfseries Input:} $F$, $\eta$, $\beta$, $\gamma_0 = 1$, $(\tau_k)_k$, $(\gamma_k)$, $\vx_0$
     \FOR{$k=1$ {\bfseries to} $K$}
     \STATE Sample $\tilde\nabla \cL_{\cH}(\vx_k)$ using \cref{alg:reduction}
    \STATE Compute $\vy_{k} = \vx_k - \frac{1}{2L}\tilde{\nabla}\cL_{\cH}(\vx_k)$ 
    \STATE Compute $\vx_{k+1}=\vy_k + \frac{\Gamma_{k-1}}{\Gamma_k}\frac{\gamma_k}{\gamma_{k+1}}(\vy_k - \vy_{k-1}) - \frac{\Gamma_{k-1}}{\Gamma_k}\vy_k + \frac{\gamma_k}{\gamma_{k+1}}\frac{\mu}{4L}\vx_k$
    \ENDFOR
  \end{algorithmic}
\end{algorithm}
\section{Experiments}
\label{app:experiments}

\begin{figure}[h]
\centering
\includegraphics[width=0.8\textwidth]{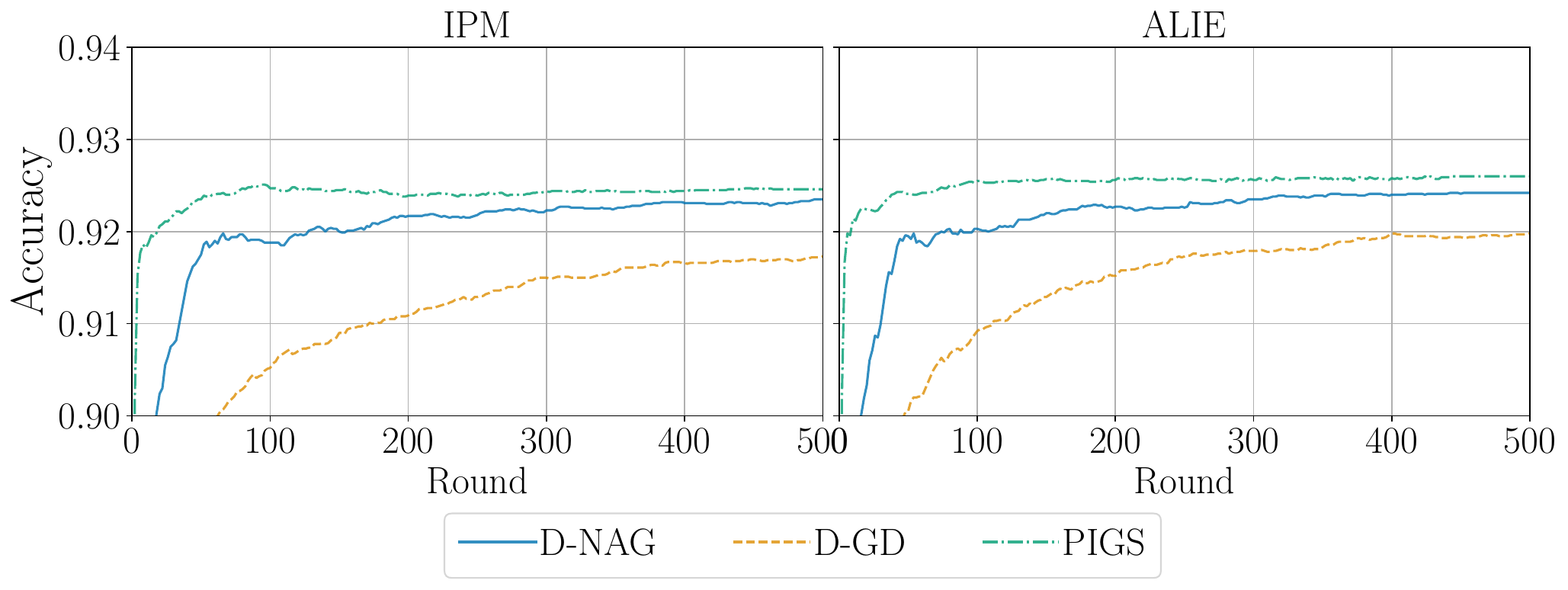}
\caption{Distributed Logistic Regression, with $20$ honest and $1$ Byzantine client in the mildly heterogeneous setting ($\beta=5$), with a $l2$-regularization $\lambda = 10^{-3}$. Test Accuracy under A Little Is Enough (ALIE) attack.}
\label{plot:logistic_heterogeneous_accuracy}
\end{figure}

\begin{figure*}[h]
\centering
\includegraphics[width=0.9\textwidth]{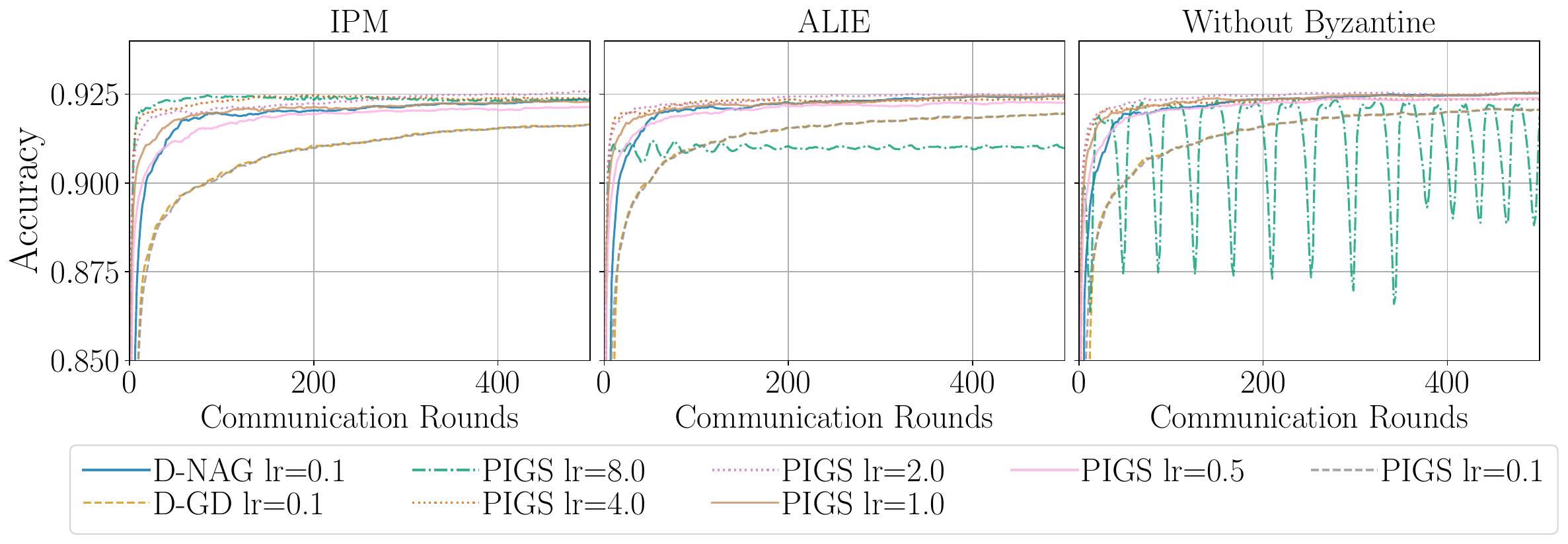}
\caption{Impact of the learning rate $\eta$ on PIGS, in a highly heterogeneous setting $(\beta=1)$ with $20$ honest clients and either with $1$ Byzantine performing IPM or ALIE attack, or without Byzantine client. Penalization $\mu = 10^{-3}$. Test Accuracy.}
\end{figure*}

\begin{figure*}[h]
\centering
\includegraphics[width=0.9\textwidth]{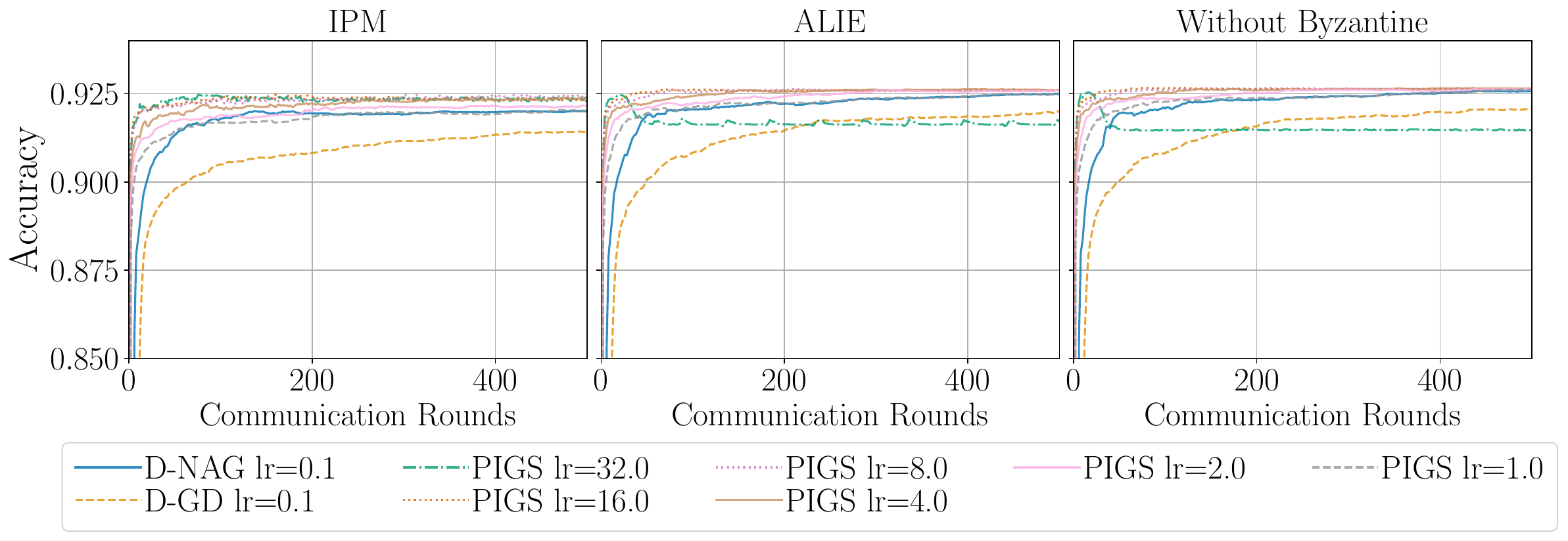}
\caption{Impact of the learning rate $\eta$ on PIGS,i.i.d. setting with $20$ honest clients and either with $5$ Byzantine performing IPM or ALIE attack, or without Byzantine client. Penalization $\mu = 10^{-3}$. Test Accuracy.}
\end{figure*}

\end{document}

%% file: preambule.tex
\usepackage[disable,textsize=tiny,backgroundcolor=white]{todonotes}

\newcommand{\renaud}[1]{\todo[bordercolor=BrickRed, textcolor=BrickRed, linecolor=BrickRed]{\textbf{RG:} #1}}

\input{math_command}

\renewcommand{\tL}{\tilde{L}}
\newcommand{\tmu}{\tilde{\mu}}

\newcommand{\cB}{\mathcal{B}}

\newcommand{\cD}{\mathcal{D}}

\newcommand{\cG}{\mathcal{G}}
\newcommand{\cH}{\mathcal{H}}

\newcommand{\cL}{\mathcal{L}}

\newcommand{\cO}{\mathcal{O}}

%% file: math_command.tex
\usepackage{amsmath,amsfonts,bm}

\usepackage{bbm}

\def\eqref#1{equation~\ref{#1}}

\def\1{\bm{1}}

\def\eps{{\varepsilon}}

\def\ve{{\bm{e}}}

\def\vg{{\bm{g}}}

\def\vu{{\bm{u}}}
\def\vv{{\bm{v}}}

\def\vx{{\bm{x}}}
\def\vy{{\bm{y}}}
\def\vz{{\bm{z}}}

\DeclareMathAlphabet{\mathsfit}{\encodingdefault}{\sfdefault}{m}{sl}
\SetMathAlphabet{\mathsfit}{bold}{\encodingdefault}{\sfdefault}{bx}{n}

\def\tL{{\tens{L}}}

\newcommand{\R}{\mathbb{R}}

\DeclareMathOperator*{\argmin}{arg\,min}